\newcommand{\tr}{\text{Tr}}
\newcommand{\h}{{\bf h}}
\newcommand{\rank}{{\text{rank}}}
\newcommand{\w}{ w}
\newcommand{\blambda}{\lambda}
\newcommand{\y}{{\bf y}}
\newcommand{\bigo}{\text{O}}
\def\Hs{{\mathcal H}}
\newcommand{\B}{{\bf B}}
\newcommand{\M}{M}
\newcommand{\W}{ W}
\newcommand{\ones}{{\bf 1}}
\newcommand{\E}{\mathbb{E}}
\newcommand{\N}{{\mathcal N}}
\newcommand{\R}{{\mathbb{R}}}
\newcommand{\bn}{\text{\sc Bn}}
\newcommand{\bnp}{\text{\sc Bn}\;} 
\newcommand{\I}{I}
\renewcommand{\P}{{\mathcal P}}
\newcommand{\diag}{{\text{diag}}}
\newcommand{\comment}[1]{}
\def\Wm{{\bf W}}
\author{Hadi Daneshmand$^{1*}$, Jonas Kohler$^{1}$\thanks{Shared first authorship, $^{1}$ETH Zurich, $^2$INRIA Paris}, \\ Francis Bach$^{2}$, Thomas Hofmann$^{1}$, Aurelien Lucchi$^{1}$ }
\title{Batch Normalization Provably Avoids Rank Collapse for Randomly Initialised Deep Networks}
\newtheorem{Theorem}{Theorem}
\newtheorem{Definition}{Definition}
\newtheorem{definition}[Definition]{Definition}
\newtheorem {theorem}[Theorem]    {Theorem}
\newtheorem {lemma} [Theorem]    {Lemma}
\newtheorem {corollary}  [Theorem]    {Corollary}
\newtheorem {proposition}[Theorem]    {Proposition}
\newtheorem {example} [Theorem]    {Example}
\begin{document}

\maketitle

\begin{abstract}
Randomly initialized neural networks are known to become harder to train with increasing depth, unless architectural enhancements like residual connections and batch normalization are used. We here investigate this phenomenon by revisiting the connection between random initialization in deep networks and spectral instabilities in products of random matrices. Given the rich literature on random matrices, it is not surprising to find that the rank of the intermediate representations in unnormalized networks collapses quickly with depth. In this work we highlight the fact that batch normalization is an effective strategy to avoid rank collapse for both linear and ReLU networks. Leveraging tools from Markov chain theory, we derive a meaningful lower rank bound in deep linear networks. Empirically, we also demonstrate that this rank robustness generalizes to ReLU nets.Finally, we conduct an extensive set of experiments on real-world data sets, which confirm that rank stability is indeed a crucial condition for training modern-day deep neural architectures.
\end{abstract}

\section{Introduction and related work}

Depth is known to play an important role in the expressive power of neural networks~\cite{telgarsky2016benefits}. Yet, increased depth typically leads to a drastic slow down of learning with gradient-based methods, which is commonly attributed to unstable gradient norms in deep networks~\cite{hochreiter1998vanishing}. One key aspect of the training process concerns the way the layer weights are initialized. When training contemporary neural networks, both practitioners and theoreticians advocate the use of randomly initialized layer weights with i.i.d.~entries from a zero mean (Gaussian or uniform) distribution. This initialization strategy is commonly scaled such that the variance of the layer activation stays constant across layers \cite{glorot2010understanding,he2015delving}. However, this approach can not avoid spectral instabilities as the depth of the network increases. For example,

\cite{saxe2013exact} observes that for linear neural networks, such initialization lets all but one singular values of the last layers activation collapse towards zero as the depth increases. 

Nevertheless, recent advances in neural architectures have allowed the training of very deep neural networks with standard i.i.d.~initialization schemes \textit{despite} the above mentioned shortcomings. Among these, both residual connections and normalization layers have proven particularly effective and are thus in widespread use (see \cite{ioffe2015batch,salimans2016weight,he2015delving} to name just a few). Our goal here is to bridge the explanatory gap between these two observations by studying the effect of architectural enhancements on the spectral properties of randomly initialized neural networks. We also provide evidence for a strong link of the latter with the performance of gradient-based optimization algorithms.

One particularly interesting architectural component of modern day neural networks is Batch Normalization (\bn) \cite{ioffe2015batch}. This simple heuristics that normalizes the pre-activation of hidden units across a mini-batch, has proven tremendously effective when training deep neural networks with gradient-based methods. Yet, despite of its ubiquitous use and strong empirical benefits, the research community has not yet reached a broad consensus, when it comes to a theoretical explanation for its practical success. Recently, several alternatives to the original
``internal covariate shift'' hypothesis \cite{ioffe2015batch} have appeared in the literature: decoupling optimization of direction and length of the parameters~\cite{kohler2018exponential}, auto-tuning of the learning rate for stochastic gradient descent~\cite{arora2018theoretical}, widening the learning rate range \cite{bjorck2018understanding}, alleviating sharpness of the Fisher information matrix \cite{karakida2019normalization}, and smoothing the optimization landscape~\cite{santurkar2018does}. Yet, most of these candidate justifications are still actively debated  within the community. For example, \cite{santurkar2018does} first made a strong empirical case against the original internal covariate shift hypothesis. Secondly, they argued that batch normalization simplifies optimization by smoothing the loss landscape. However, their analysis is on a per-layer basis and treats only the largest eigenvalue. Furthermore, even more recent empirical studies again dispute these findings, by observing the exact opposite behaviour of \bn~on a ResNet20 network~\cite{yao2019pyhessian}. 

\subsection{On random initialization and gradient based training}

In light of the above discussion, we take a step back -- namely to the beginning of training -- to find an interesting property that is provably present in batch normalized networks and can serve as a solid basis for a more complete theoretical understanding. 

The difficulty of training randomly initialized, un-normalized deep networks with gradient methods is a long-known fact, that is commonly attributed to the so-called vanishing gradient effect, i.e.,~a decreasing gradient norm as the networks grow in depth (see, e.g., \cite{schoenholz2016deep}).
A more recent line of research tries to explain this effect by the condition number of the input-output Jacobian (see, e.g.,  \cite{yang2017mean,yang2019mean,pennington2018emergence,bjorck2018understanding}). Here, we study the spectral properties of the above introduced initialization with a particular focus on the rank of the hidden layer activations over a batch of samples. The question at hand is whether or not the network preserves a diverse data representation which is necessary to disentangle the input in the final classification layer.

As a motivation, consider the results of Fig.~\ref{fig:learning_depth}, which plots accuracy and output rank when training batch-normalized and un-normalized neural networks of growing depth on the Fashion-MNIST dataset~\cite{xiao2017fashion}. As can be seen, the rank in the last hidden layer of the vanilla networks collapses with depth and they are essentially unable to learn (in a limited number of epochs) as soon as the number of layers is above 10. The rank collapse indicates that the direction of the output vector has become independent of the actual input. In other words, the randomly initialized network no longer preserves information about the input. Batch-normalized networks, however, preserve a high rank across all network sizes and their training accuracy drops only very mildly as the networks reach depth 32.

\begin{figure}[h!]
 \centering 
          \begin{tabular}{c@{}c@{}}
          \includegraphics[width=0.475\linewidth]{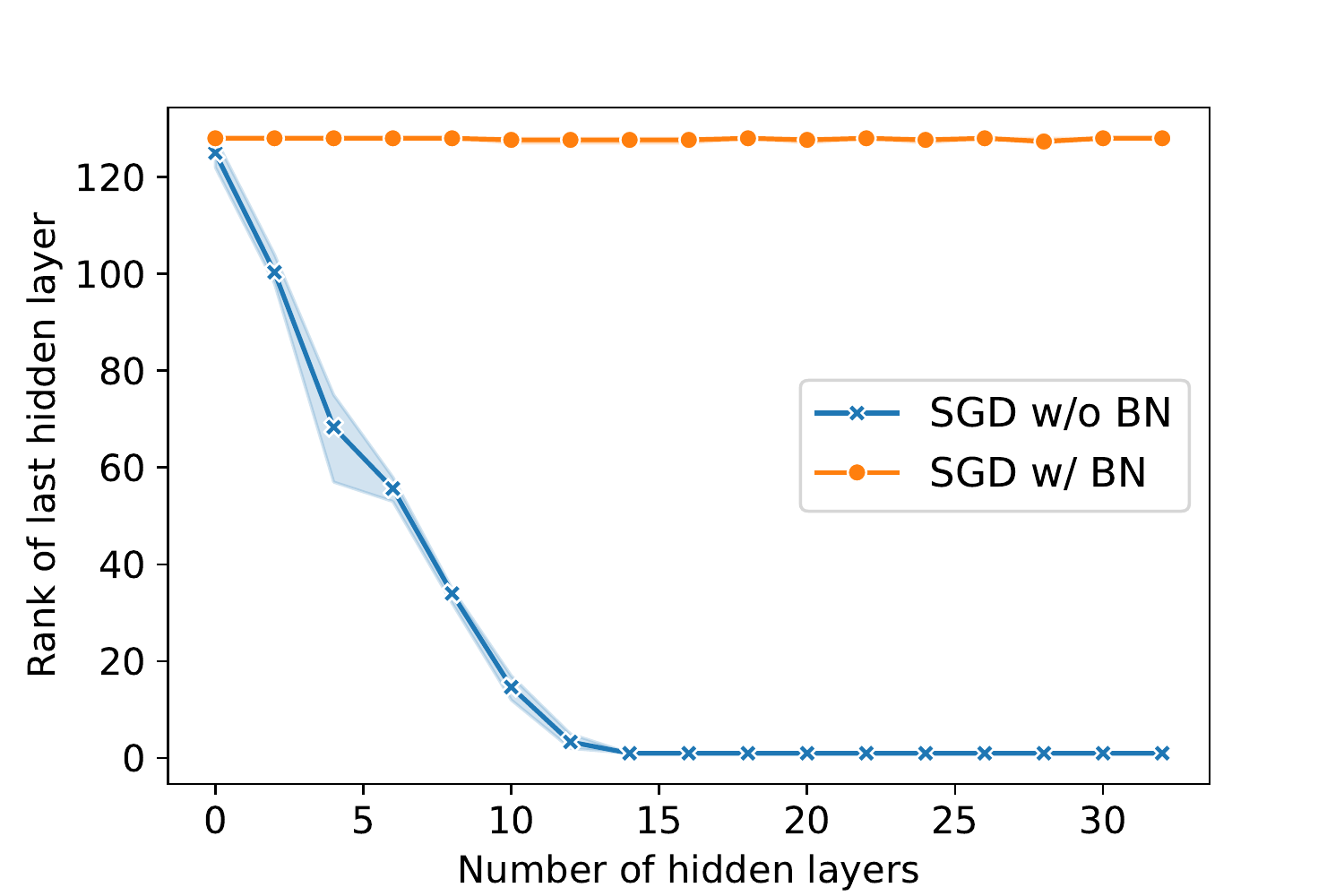}  &    \includegraphics[width=0.475\linewidth]{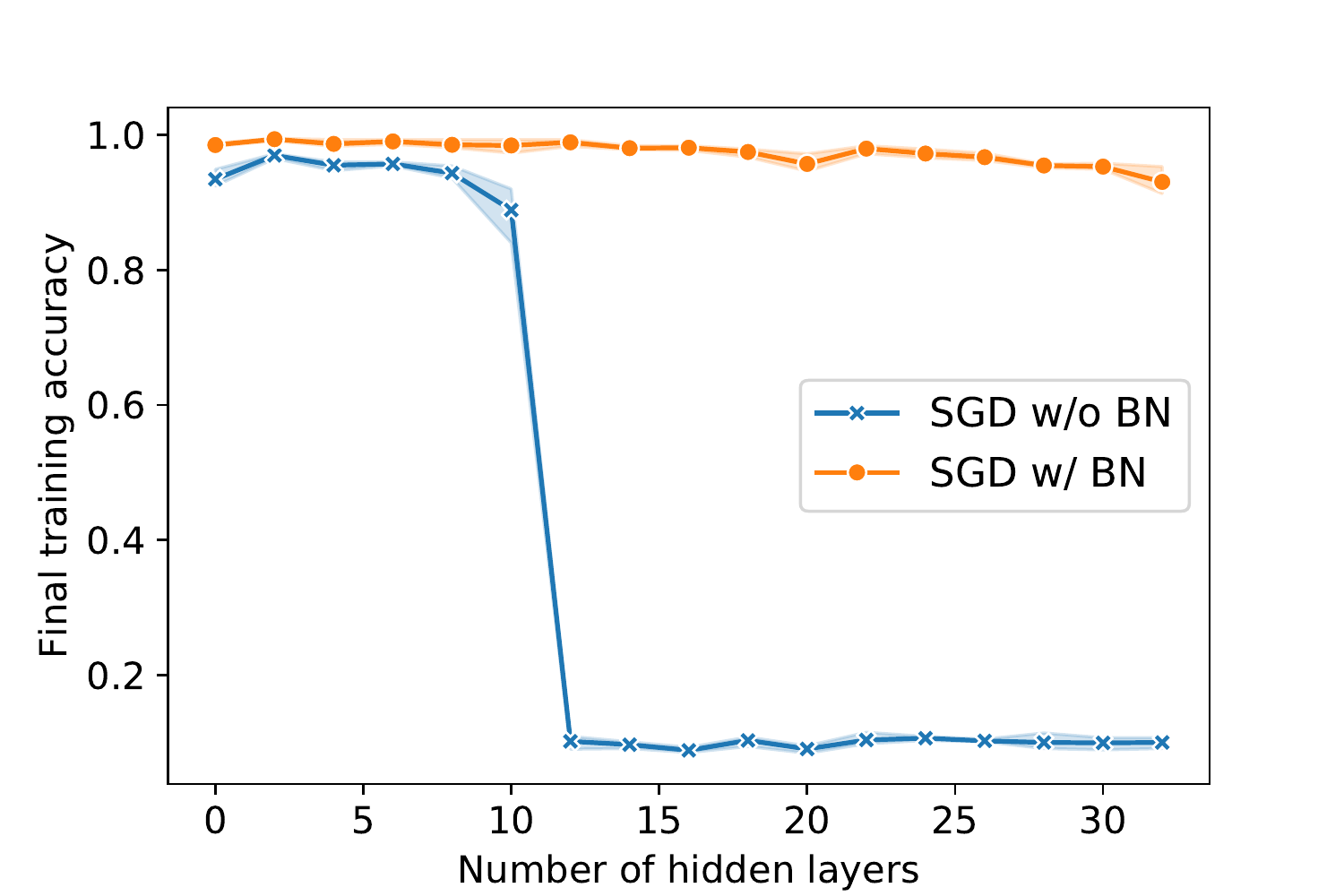}
        
 	  \end{tabular}
    \caption{\footnotesize{\textbf{Effect of depth on rank and learning}, on the Fashion-MNIST dataset with ReLU multilayer perceptrons (MLPs) of depth 1-32 and width 128 hidden units. Left: Rank\protect\footnotemark~ after random initialization as in PyTorch~\cite{paszke2019pytorch,glorot2010understanding}. Right: Training accuracy after training 75 epochs with SGD, batch size 128 and grid-searched learning rate. Mean and 95\% confidence interval of 5 independent runs.}}
    \label{fig:learning_depth}
     
\end{figure}

\footnotetext{Computed using $torch.matrix\_rank()$, which regards singular values below $\sigma_{\max} \times d \times 10^{-7}$ as zero. This is consistent with both Matlab and Numpy.}

The above example shows that both rank and optimization of even moderately-sized, unnormalized networks scale poorly with depth. 
Batch-normalization, however, stabilizes the rank in this setting and the obvious question is whether this effect is just a slow-down or even simply a numerical phenomenon, or whether it actually generalizes to networks of infinite depth.

In this work we make a strong case for the latter option by showing a remarkable stationarity aspect of \bn{}. Consider for example the case of passing $N$ samples $x_i \in \R^d$ arranged column-wise in an input matrix $X \in \R^{d \times N}$ through a very deep network with fully-connected layers. Ideally, from an information propagation perspective, the network should be able to differentiate between individual samples, regardless of its depth \cite{schoenholz2016deep}. However, as can be seen in Fig.~\ref{fig:vanilla_bn_rank}, the hidden representation of $X$ collapses to a rank one matrix in vanilla networks, thus mapping all $x_i$ to the same line in~$\R^d$. Hence, the hidden layer activations and along with it the individual gradient directions become independent from the input $x_i$ as depth goes to infinity. We call this effect ``directional'' gradient vanishing (see Section \ref{sec:rank_collapse} for a more thorough explanation). 

Interestingly,
this effect does not happen in batch-normalized networks, which yield -- as we shall prove in Theorem \ref{thm:linear-rank-lowerboun} -- a stable rank for \textit{any} depth, thereby preserving a disentangled representation of the input and hence allowing the training of very deep networks. These results substantiate earlier empirical observations made by  \cite{bjorck2018understanding} for random \bn{}-nets, and also validates the claim that \bn{} helps with \textit{deep information propagation}~\cite{schoenholz2016deep}.

\begin{figure*}[h!]
    \centering
    \begin{tabular}{c c c}
      \includegraphics[width=0.4\textwidth]{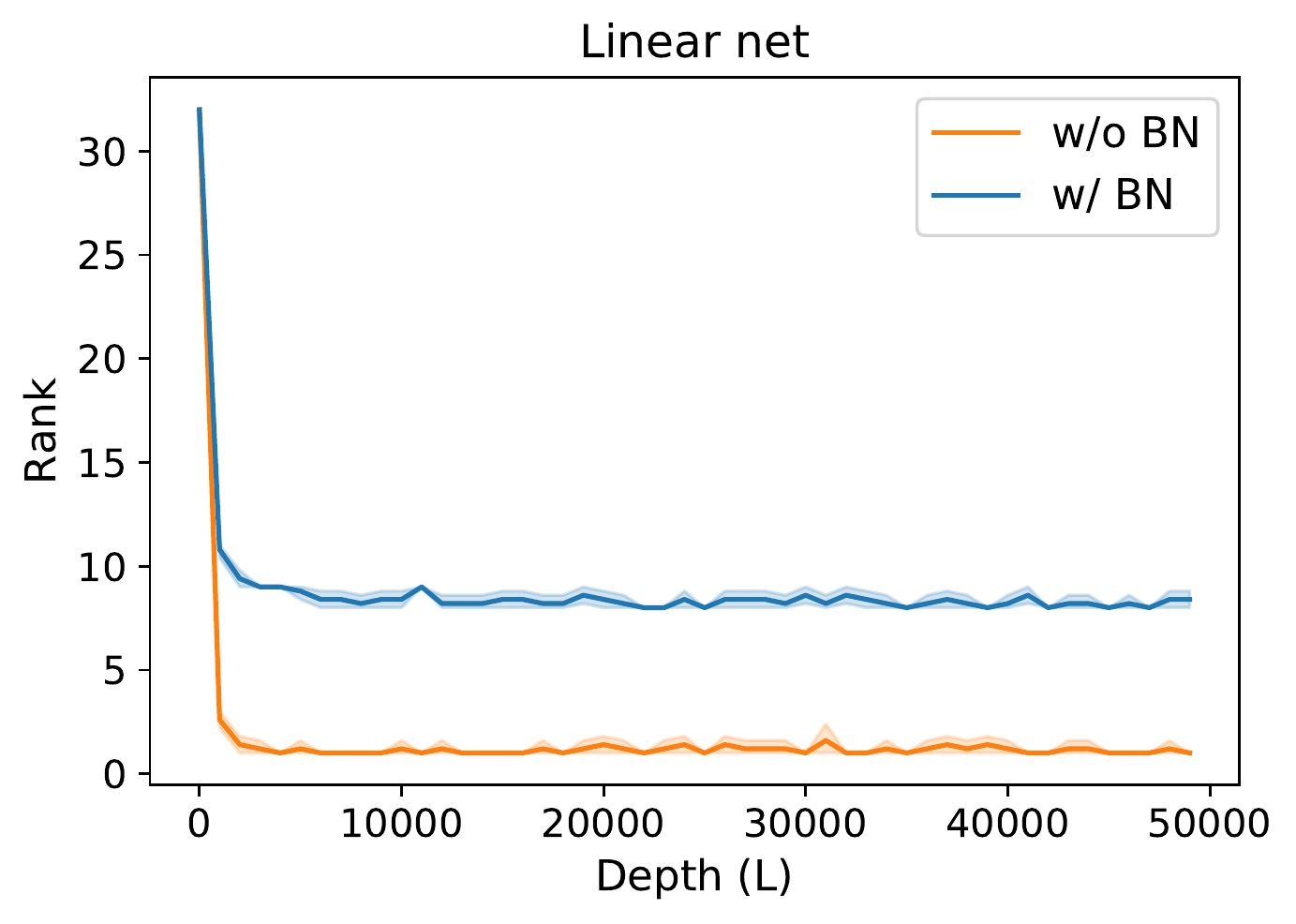} & \includegraphics[width=0.4\textwidth]{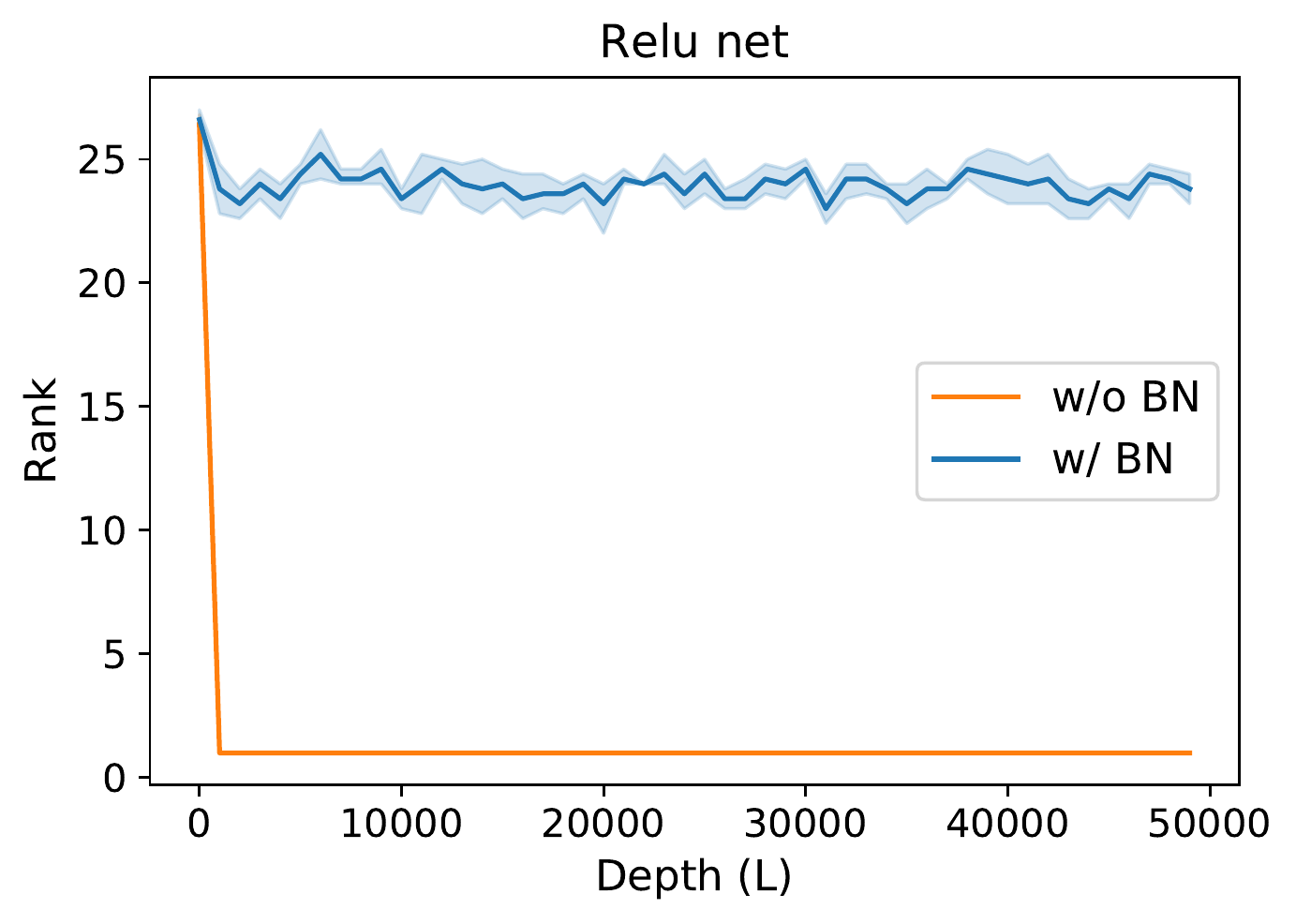}  
    \end{tabular}
    \caption{\textbf{Rank comparison of last hidden activation}: Log(rank) of the last hidden layer's activation over total number of layers (blue for \bn{}- and orange for vanilla-networks) for Gaussian inputs. Networks are MLPs of width $d=32$. (Left) Linear activations, (Right) ReLU activations. Mean and $95\%$ confidence interval of 10 independent runs. While the rank quickly drops in depth for both networks, BN stabilizes the rank above $\sqrt{d}$.}
    \label{fig:vanilla_bn_rank}
\end{figure*}
\subsection{Contributions}
In summary, the work at hand makes the following two key contributions: 

\textbf{(i)} We theoretically prove that \bnp indeed avoids rank collapse for deep linear neural nets under standard initialization and for any depth.
In particular, we show that \bn~can be seen as a computationally cheap rank preservation operator, which may not yield hidden matrices with full rank but still preserves sufficient modes of variation in the data to achieve a scaling of the rank with $\Omega(\sqrt{d}$), where $d$ is the width of the network. Subsequently, we leverage existing results from random matrix theory~\cite{bougerol2012products} to complete the picture with a simple proof of the above observed rank collapse for linear vanilla networks, which interestingly holds regardless of the presence of residual connections (Lemma \ref{lemma:rank_vanishing}). Finally, we connect the rank to difficulties in gradient based training of deep nets by showing that a rank collapse makes the directional component of the gradients independent of the input.

\textbf{(ii)} We empirically show that the rank is indeed a crucial quantity for gradient-based learning. In particular, we show that both the rank and the final training accuracy quickly diminish in depth unless \bn~ layers are incorporated in both simple feed-forward and convolutional neural nets. To take this reasoning beyond mere correlations, we actively intervene with the rank of networks before training and show that (a) one can break the training stability of \bn~by initializing in a way that reduces its rank-preserving properties, and (b) a rank-increasing pre-training procedure for vanilla networks can recover their training ability even for large depth. Interestingly, our pre-training method allows vanilla SGD to outperform \bn{} on very deep MLPs. In all of our experiments, we find that SGD updates preserve the order of the initial rank throughout optimization, which underscores the importance of the rank at initialization for the entire convergence behavior.

\section{Background and Preliminaries}
\paragraph{Network description.}

We consider a given input $X \in \R^{d \times N}$ containing $N$ samples in $\R^d$. Let $\ones_k \in \R^k$ denote the k-dimensional all one vector and $H_\ell^{(\gamma)}$ denote the hidden representation of $X$ in layer $\ell$ of a \bn{}-network with residual connections. The following recurrence summarizes the network mapping
\begin{align}\label{eq:nn}
    H_{\ell+1}^{(\gamma)} = \bn_{0, \ones_d}(H_\ell^{(\gamma)} + \gamma W_\ell H_\ell^{(\gamma)}), \quad  H_{0}^{(\gamma)} = X,
\end{align}
where $W_\ell \in \R^{d \times d}$ and $\gamma$ regulates the skip connection strength (in the limit, $\gamma = \infty$ recovers a network without skip connection)\footnote{For the sake of simplicity, we here assume that the numbers of hidden units is equal across layers. In App.~\ref{sec:main_theorm} we show how our results extend to nets with varying numbers of hidden units.}. Throughout this work, we consider the network weights $W_\ell$ to be initialized as follows. 
\begin{definition}[Standard weight initialization]
\label{definition:weight_init} The elements of weight matrices $W_\ell$ are i.i.d. samples from a distribution $\P$ that has zero-mean, unit-variance, and its density is symmetric around zero\footnote{Two popular choices for $\P$ are the Gaussian distribution $\N(0,1)$ and the uniform distribution $\mathcal{U}([-1,1])$. The variance can be scaled with the choice of $\gamma$ to match the prominent initializations from \cite{he2015delving} and \cite{glorot2010understanding}. Note that the symmetry implies that the law of each element $[W_\ell]_{ij}$ equates the law of $-[W_\ell]_{ij}$.}. We use the notation $\mu$ for the probability distribution of the weight matrices.
\end{definition}
 We define the BN operator $\bn_{\alpha,\beta}$ as in the original paper \cite{ioffe2015batch}, namely
\begin{align}
     \bn_{\alpha,\beta}(H) = \beta \circ \left( \diag \left( \M(H) \right) \right)^{-\sfrac{1}{2}} H +  \alpha \ones^\top_N, M(H) & := \frac{1}{N} H H^\top \label{eq:M},
\end{align}
where $\circ$ is a row-wise product. Both $\alpha \in \R^d$ and $\beta \in \R^d$ are trainable parameters. Throughout this work we assume the initialization $\alpha=0$ and $\beta= \ones_d$, and also omit corrections of the mean activity. As demonstrated empirically in Fig.~\ref{fig:broken_bn}, and theoretically in App.~\ref{section:init_consequences_appendix} this simplification does not change the performance of \bn{} in our settings.

\paragraph{Rank notions.} \label{sec:rank_notions}

To circumvent numerical issues involved in rank computations we introduce a soft notion of the rank denoted by $\rank_\tau(H)$ (soft rank). Specifically, let $\sigma_1, \dots, \sigma_d$ be the singular values of $H$. Then, given a $\tau>0$, we define $\rank_\tau(H)$ as
\begin{align} \label{eq:softrank}
   \rank_\tau(H) = \sum_{i=1}^d \ones(\sigma^2_i/N \geq \tau).
\end{align}
Intuitively, $\rank_\tau(H)$ indicates the number of singular values whose absolute values are greater than $\sqrt{N\tau}$. It is clear that $\rank_\tau(H)$ is less or equal to $\rank(H)$ for all matrices $H$.  For analysis purposes, we need an analytic measure of the collinearity of the columns and rows of $H$. Inspired by the so-called stable rank (see, e.g.,  \cite{tropp2015introduction}), we thus introduce the following quantity 
\begin{align}\label{eq:rH}
    r(H) = \tr(M(H))^2/\| M(H) \|_F^2, \quad M(H) = H H^\top/N. 
\end{align}
In contrast to the algebraic rank, $r(H)$ is differentiable with respect to $H$. Furthermore, the next lemma proves that the above quantity lower-bounds the soft-rank for the hidden representations.
\begin{lemma} \label{lemma:soft_rank_lowerbound}
For an arbitrary matrix $H \in \R^{d \times d}$, $\rank(H) \geq r(H)$.   For the sequence $\{H_\ell^{(\gamma)}\}_{\ell=1}^\infty$ defined in Eq.~\eqref{eq:M}, $\rank_\tau(H_\ell^{(\gamma)}) \geq (1-\tau)^2 r(H_\ell^{(\gamma)})$ holds for $\tau \in [0,1]$.
\end{lemma}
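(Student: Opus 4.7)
The plan is to translate everything into singular values of $H$ and use Cauchy--Schwarz in two slightly different ways for the two parts.

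\textbf{Setup.} Write $\sigma_1,\dots,\sigma_d$ for the singular values of $H$. Since $M(H) = HH^\top/N$, its eigenvalues are $\sigma_i^2/N$, so that
\begin{equation*}
\tr(M(H)) \;=\; \tfrac{1}{N}\sum_{i=1}^d \sigma_i^2, \qquad \|M(H)\|_F^2 \;=\; \tfrac{1}{N^2}\sum_{i=1}^d \sigma_i^4,
\end{equation*}
and therefore $r(H) = (\sum_i \sigma_i^2)^2 / \sum_i \sigma_i^4$.

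\textbf{Part 1 (algebraic rank bound).} I would apply Cauchy--Schwarz to the at most $\rank(H)$ indices on which $\sigma_i \neq 0$:
\begin{equation*}
\Bigl(\sum_{i:\sigma_i>0} \sigma_i^2\Bigr)^2 \;\le\; \rank(H)\cdot \sum_{i:\sigma_i>0} \sigma_i^4,
\end{equation*}
which immediately gives $r(H) \le \rank(H)$. This step is essentially the standard stable-rank inequality.

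\textbf{Part 2 (soft-rank bound for BN layers).} The key additional ingredient here is that with the specific initialization $\alpha=0$, $\beta=\ones_d$, the operator $\bn_{0,\ones_d}$ rescales each row of its argument so that $\diag(M(\bn_{0,\ones_d}(\cdot)))=\Im_d$. Consequently $\tr(M(H_\ell^{(\gamma)}))=d$ for every $\ell\ge 1$, i.e.\ $\sum_i \sigma_i^2/N = d$. I would then split the singular values of $H_\ell^{(\gamma)}$ into the ``large'' set $L=\{i:\sigma_i^2/N \ge \tau\}$ (of cardinality $k=\rank_\tau(H_\ell^{(\gamma)})$) and its complement. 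The small indices contribute at most $(d-k)\tau \le d\tau$ to $\tr(M(H_\ell^{(\gamma)}))=d$, so
\begin{equation*}
\sum_{i\in L} \sigma_i^2/N \;\ge\; d(1-\tau).
\end{equation*}
Applying Cauchy--Schwarz to $L$ now gives
\begin{equation*}
d^2(1-\tau)^2 \;\le\; \Bigl(\sum_{i\in L}\sigma_i^2/N\Bigr)^2 \;\le\; k\sum_{i\in L}(\sigma_i^2/N)^2 \;\le\; k\,\|M(H_\ell^{(\gamma)})\|_F^2,
\end{equation*}
and dividing by $\|M(H_\ell^{(\gamma)})\|_F^2$ yields $k \ge (1-\tau)^2\, d^2/\|M(H_\ell^{(\gamma)})\|_F^2 = (1-\tau)^2 r(H_\ell^{(\gamma)})$, as required.

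\textbf{Main obstacle.} The only non-routine point is justifying $\tr(M(H_\ell^{(\gamma)}))=d$; everything else is two applications of Cauchy--Schwarz. This invariant is not a property of a generic matrix but a direct consequence of the row-wise rescaling baked into $\bn_{0,\ones_d}$, so I would isolate it as a short preliminary observation (verifying from the definition of $\bn_{\alpha,\beta}$ in Eq.~\eqref{eq:M} that the diagonal of $M$ becomes the identity after normalisation) before entering the singular-value argument.
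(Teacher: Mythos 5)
Your proposal is correct and follows essentially the same route as the paper: both parts are the same Cauchy--Schwarz applications (the paper phrases them via an auxiliary indicator-weighted vector $w$, you apply the inequality directly to the relevant index sets, which is equivalent), and you correctly identify the one non-generic ingredient, namely that $\bn_{0,\ones_d}$ forces $\tr(M(H_\ell^{(\gamma)}))=d$, which is exactly the paper's property (p.1). No gaps.
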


\section{Batch normalization provably prevents rank collapse} \label{sec:rank_collapse}
Since our empirical observations hold equally for both non-linear and linear networks, we here focus on improving the theoretical understanding in the linear case, which constitutes a growing area of research \cite{saxe2013exact,kawaguchi2016deep,bartlett2019gradient,arora2018convergence}.
First, inspired by \cite{dieuleveut2017bridging} and leveraging tools from Markov Chain theory, our main result proves that the rank of linear batch-normalized networks scales with their width as $\Omega(\sqrt{\text{width}})$.
Secondly, we leverage results from random matrix theory \cite{boucheron2013concentration} to contrast our main result to unnormalized linear networks which we show to provably collapse to rank one, even in the presence of residual connections.

\subsection{Main result}
In the following we state our main result which proves that batch normalization indeed prevents the rank of all hidden layer activations from collapsing to one. Please see Appendix \ref{sec:main_theorm} for the more formal version of this theorem statement.

\begin{theorem} \label{thm:linear-rank-lowerboun}[Informal]
Suppose that the $\rank(X) = d$ and that the weights $W_\ell$ are initialized in a standard i.i.d.~zero-mean fashion (see Def.~\ref{definition:weight_init}). Then, the following limits exist such that 
\begin{align} \label{eq:average_lowerbound}
    \lim_{L \to \infty} \frac{1}{L} \sum_{\ell=1}^L \rank_\tau(H_\ell^{(\gamma)})  \geq \lim_{L \to \infty} \frac{(1-\tau)^2}{L} \sum_{\ell=1}^L  r(H_\ell^{(\gamma)})= \Omega((1-\tau)^2\sqrt{d})
\end{align}
holds almost surely for a sufficiently small $\gamma$ (independent of $\ell$) and any $\tau \in [0,1)$, under some additional technical assumptions. Please see Theorem \ref{thm:linear-rank-lowerbound-formal} in the Appendix for the formal statement.
\end{theorem}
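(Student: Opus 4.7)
The plan is to reduce the claim to a moment bound on the Gram matrix under a stationary distribution of an associated Markov chain. Because $\bn_{0,\ones_d}$ rescales each row of its argument to $\ell_2$-norm $\sqrt{N}$, every $M_\ell := M(H_\ell^{(\gamma)})$ is a correlation matrix and $\tr(M_\ell) = d$ deterministically. The scalar of interest therefore collapses to the pointwise identity $r(H_\ell^{(\gamma)}) = d^2/\|M_\ell\|_F^2$, and in view of Lemma~\ref{lemma:soft_rank_lowerbound} it suffices to show that the Cesàro averages of $\|M_\ell\|_F^2$ are $O(d^{3/2})$ almost surely. Jensen's inequality applied to the convex function $x \mapsto 1/x$ then converts such a bound into the desired $\Omega(\sqrt{d})$ lower bound on the time-averaged $r(H_\ell^{(\gamma)})$.

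I would view $\{M_\ell\}_{\ell \ge 0}$ as a time-homogeneous Markov chain on the compact set $\mathcal{C}_d = \{M \succeq 0 : M_{ii}=1\}$: the transition $M_\ell \mapsto M_{\ell+1}$ is a deterministic function of the i.i.d.\ $W_\ell \sim \mu$, via $\tilde M_\ell = (I + \gamma W_\ell)M_\ell(I + \gamma W_\ell)^\top$ followed by the normalization $[M_{\ell+1}]_{ij} = [\tilde M_\ell]_{ij}/\sqrt{[\tilde M_\ell]_{ii}[\tilde M_\ell]_{jj}}$. Feller continuity of this kernel on the compact state space provides an invariant distribution $\pi$ via Krylov--Bogolyubov. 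Uniqueness and ergodicity follow from a Doeblin-type minorization: because $\mu$ has a symmetric density around zero, a bounded number of iterations spreads the law of $M_\ell$ over an open neighbourhood of $I$ with strictly positive density. Birkhoff's ergodic theorem then yields, almost surely,
\begin{equation*}
\lim_{L\to\infty}\frac{1}{L}\sum_{\ell=1}^L r(H_\ell^{(\gamma)}) \;=\; \E_{\pi}\!\left[\frac{d^2}{\|M\|_F^2}\right] \;\ge\; \frac{d^2}{\E_{\pi}[\|M\|_F^2]}.
\end{equation*}

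The core task is then to derive a Lyapunov inequality of the form $\E[\|M_{\ell+1}\|_F^2 \mid M_\ell] \le (1 - c\gamma^2)\|M_\ell\|_F^2 + c'\gamma^2 d^{3/2}$ for sufficiently small $\gamma$, from which stationarity gives $\E_\pi[\|M\|_F^2] = O(d^{3/2})$. The elementary identities $\E[W_\ell] = 0$ and $\E[W_\ell A W_\ell^\top \mid A] = \tr(A)\,I$ imply $\E[\tilde M_\ell \mid M_\ell] = M_\ell + \gamma^2 d\, I$, so diagonal entries of $\tilde M_\ell$ inflate by $\gamma^2 d$ in expectation while its off-diagonals are preserved; the BN normalization by $\sqrt{\tilde M_{ii}\tilde M_{jj}}$ then contracts off-diagonals by an average factor $\approx (1+\gamma^2 d)^{-1}$. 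Taylor expanding the ratio in $\gamma$ and carefully tracking all covariances between numerator and denominator produces an entry-wise second-moment recursion of the form $\E[M_{\ell+1,ij}^2 \mid M_\ell] = M_{\ell,ij}^2 + \gamma^2 \phi_{ij}(M_\ell) + O(\gamma^3)$, in which $\phi_{ij}$ is an explicit polynomial in the entries of $M_\ell$ assembled from the moment identities above. Summing over $i \neq j$ and using compactness of $\mathcal{C}_d$ to absorb the remainder delivers the Lyapunov inequality.

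I expect the drift step to be the main obstacle. The ratio $[\tilde M_\ell]_{ij}/\sqrt{[\tilde M_\ell]_{ii}[\tilde M_\ell]_{jj}}$ is a genuinely nonlinear function of $W_\ell$, and the naive leading-order calculation of its conditional second moment actually predicts the incorrect stationary equation $M_{ij}^2 = 1$, that is rank-one collapse; it is precisely the Jensen-type curvature of $1/\sqrt{\cdot}$ and the variance of the denominator that inject the additional terms needed to pin down a non-trivial fixed point yielding the $O(d^{3/2})$ bound on $\|M\|_F^2$. A secondary difficulty is verifying the minorization condition rigorously on $\mathcal{C}_d$ (in particular irreducibility and aperiodicity, since the rank-one boundary is absorbing under the dynamics), which is where the "additional technical assumptions" alluded to in the informal statement likely enter, either as a density condition on $\mu$ strengthening Definition~\ref{definition:weight_init}, or as a mild spectral condition on the input matrix $X$ that controls the initial transient.
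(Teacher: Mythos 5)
Your overall architecture matches the paper's: reduce to $r(H_\ell)=d^2/\|M_\ell\|_F^2$ via the unit-diagonal property of the BN Gram matrices, treat $\{M_\ell\}$ as a Markov chain on the compact set of correlation matrices, invoke an invariant distribution plus Birkhoff's ergodic theorem, apply Jensen, and close by bounding $\E_{\pi}[\|M\|_F^2]=\bigo(d^{3/2})$ using stationarity of the Frobenius norm. The paper's Lemma~\ref{lemma:Fnorm_expansion} is exactly the second-order Taylor expansion you anticipate, giving $\big(\E_W\|M_+\|_F^2-\|M\|_F^2\big)/\gamma^2 = 2d^2-2\|M\|_F^2-8\tr(M^3)+8\tr(\diag(M^2)^2)+\bigo(\gamma)$, and you are right that the curvature of the normalization is what prevents the naive prediction of collapse.

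However, the step you flag as "the core task" --- the pointwise Lyapunov inequality $\E[\|M_{\ell+1}\|_F^2\mid M_\ell]\le(1-c\gamma^2)\|M_\ell\|_F^2+c'\gamma^2 d^{3/2}$ --- is false as stated, so this is a genuine gap rather than a deferred computation. Take $d$ even and $M$ block-diagonal with two $\tfrac{d}{2}\times\tfrac{d}{2}$ all-ones blocks: then $M$ is a valid correlation matrix with eigenvalues $d/2,d/2,0,\dots$, all rows of equal norm, and Proposition~\ref{proposition:spectral_presentation} gives $\tr(M^3)=\tr(\diag(M^2)^2)=d^3/4$, so the drift equals $+\gamma^2 d^2+\bigo(\gamma^3)>0$ while $\|M\|_F^2=d^2/2\gg d^{3/2}$; your proposed inequality would force the drift to be negative there. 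The point is that $-8\tr(M^3)+8\tr(\diag(M^2)^2)$ has no favorable sign pointwise, and the paper's resolution is to compare these two terms only \emph{in expectation under the invariant measure}, via the regularity constant $\alpha=\E_{\nu_\gamma}[\tr(\diag(M^2)^2)]/\E_{\nu_\gamma}[\tr(M^3)]$. Proving $\alpha<1$ (Lemma~\ref{lemma:regularity}) is itself nontrivial: it runs through the warm-up result $\rank(H_\ell)\ge 2$ (Lemma~\ref{lemma:rank2_odd_activations}), which in turn uses the sign-symmetry of the weight law to show off-diagonal entries of $M$ have mean zero under $\nu_\gamma$ (Lemma~\ref{lemma:symmetricity}). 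Even then the bound is $\E_{\nu_\gamma}\|M\|_F^2\le d^{3/2}/\sqrt{1-\alpha}$, and the clean $\Omega(\sqrt d)$ rate requires the \emph{additional assumption} that $\alpha$ stays bounded away from $1$ as $d$ grows (verified only empirically in Fig.~\ref{fig:regualrity}); likewise, uniqueness of the invariant distribution is assumed, not derived from a Doeblin minorization --- as you yourself note, the rank-one boundary is absorbing, which obstructs the minorization you sketch. Your proposal is missing the mechanism that substitutes for the failed pointwise drift bound.
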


Theorem~\ref{thm:linear-rank-lowerboun} yields a non trivial width-dependency. Namely, by setting for example $\tau := {1}/{2}$, the result states that the average number of singular values with absolute value greater than $\sqrt{N/2}$ is at least 
$\Omega(\sqrt{d})$ on average. To put this into context: If one were to replace $\diag(M)^{-\sfrac{1}{2}}$ by the \textit{full} inverse $(M)^{-\sfrac{1}{2}}$ in Eq.~\eqref{eq:M}, then \bn{} would effectively constitute a classical whitening operation such that all $\{H_{\ell}^{(\gamma)}\}_{\ell=1}^L$ would be full rank (equal to $d$). However, as noted in the original \bn~ paper \cite{ioffe2015batch}, whitening is obviously expensive to compute and furthermore prohibitively costly to incorporate in back-propagation. As such, \bn~can be seen as a computationally inexpensive approximation of whitening, which does not yield full rank hidden matrices but still preserves sufficient variation in the data to provide a rank scaling as $\Omega(\sqrt{d}$). Although the lower-bound in Thm.~\ref{thm:linear-rank-lowerboun} is established on the average over infinite depth (i.e., $L\to \infty$), Corollary~\ref{cor:nonasymptotic} (in App.~\ref{sec:main_theorm}) proves that the same bound holds for all $\rank(H_\ell)$ and $\rank_\tau(H_\ell)$.

\paragraph{Necessary assumptions.}
The above result relies on two key assumptions:  (i) First, the input $X$ needs to be full rank. (ii) Second, the weights have to be drawn according to the standard initialization scheme. We believe that both assumptions are indeed necessary for \bn{} to yield a robust rank. 

Regarding (i), we consider a high input rank a natural condition since linear neural nets cannot possibly \textit{increase} the rank when propagating information through their layers. Of course, full rank is easily achieved by an appropriate data pre-processing. Yet, even when the matrix is close to low rank we find that \bn~ is actually able to amplify small variations in the data (see Fig.~\ref{fig:markov_chain_d}.b).\footnote{Intuitively this means that even if two data points are very close to each other in the input space, their hidden presentation can still be disentangled in batch-normalized networks (see Appendix~\ref{sec:main_theorm} for more details)} 
Regarding (ii), we derive -- based on our theoretical insights -- an adversarial initialization strategy that corrupts both the rank robustness and optimization performance of batch-normalized networks, thus suggesting that the success of \bn~ indeed relies heavily on the standard i.i.d. zero-mean initialization.

 \paragraph{Experimental validation.}
In order to underline the validity of Theorem~\ref{thm:linear-rank-lowerboun} we run multiple simulations by feeding Gaussian data of dimensionality $d=N$ into networks of growing size and with different residual strengths. For each network, we compute the mean and standard deviation of the soft rank $\rank_\tau$ with $\tau=0.5$. As depicted in Fig.~\ref{fig:markov_chain_d}, the curves clearly indicate a $\Omega(\sqrt{d})$ dependency for $\lim_{L\to \infty} \sum_{\ell=1}^L \rank_\tau(H_{\ell})/L$, just as predicted in the Theorem. Although the established guarantee requires the weight on the parametric branch (i.e.,~$\gamma$) to be small, the results of Fig.~\ref{fig:markov_chain_d} indicate that the established lower bound holds for a much wider range including the case where no residual connections are used at all ($\gamma=\infty$). 
\begin{figure}[h!]
 \centering 
          \begin{tabular}{c@{}c}
             
     \adjincludegraphics[width=0.43\linewidth, trim={10pt 1pt 30pt 30pt},clip]{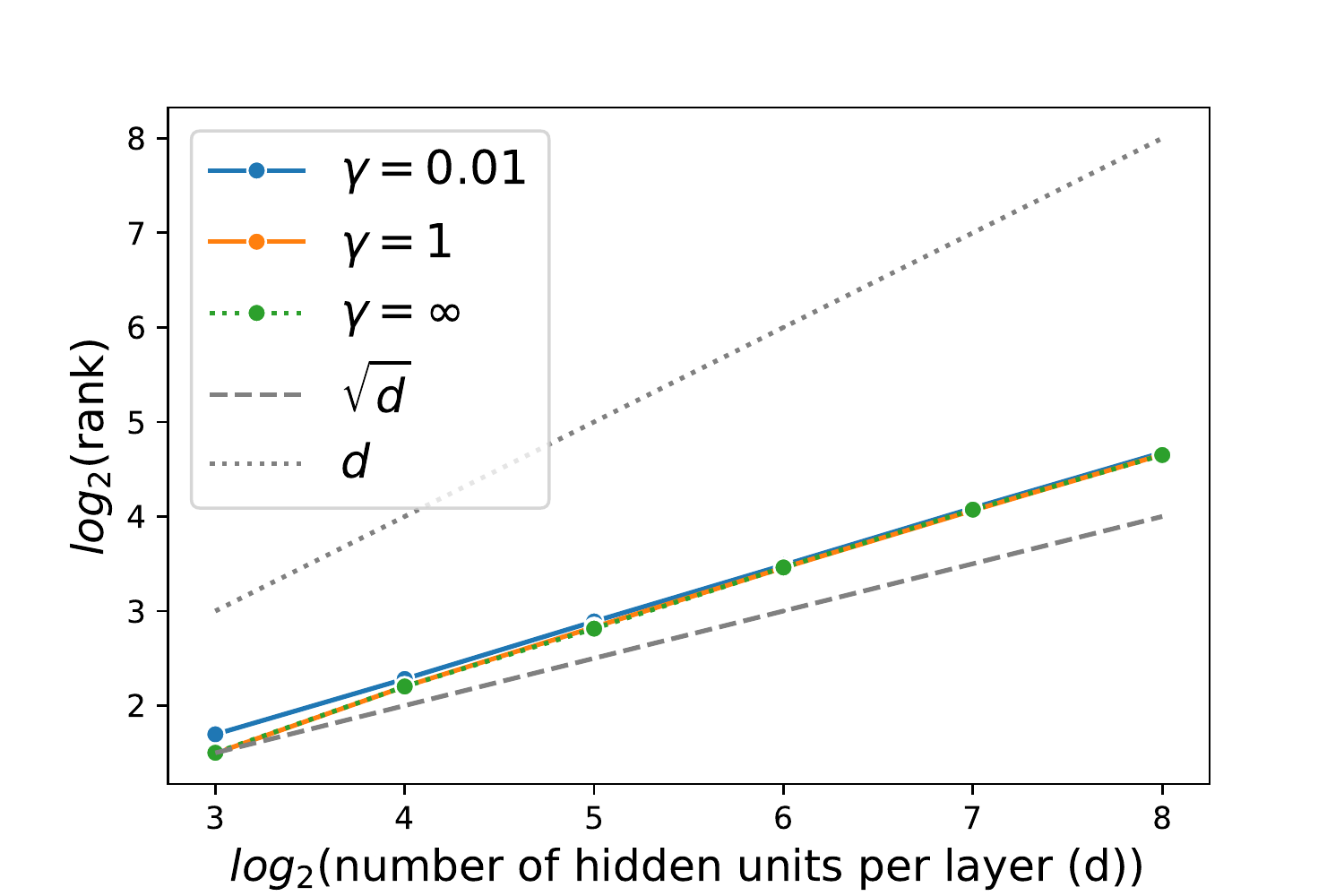} &  \adjincludegraphics[width=0.43\linewidth, trim={10pt 1pt 30pt 30pt},clip]{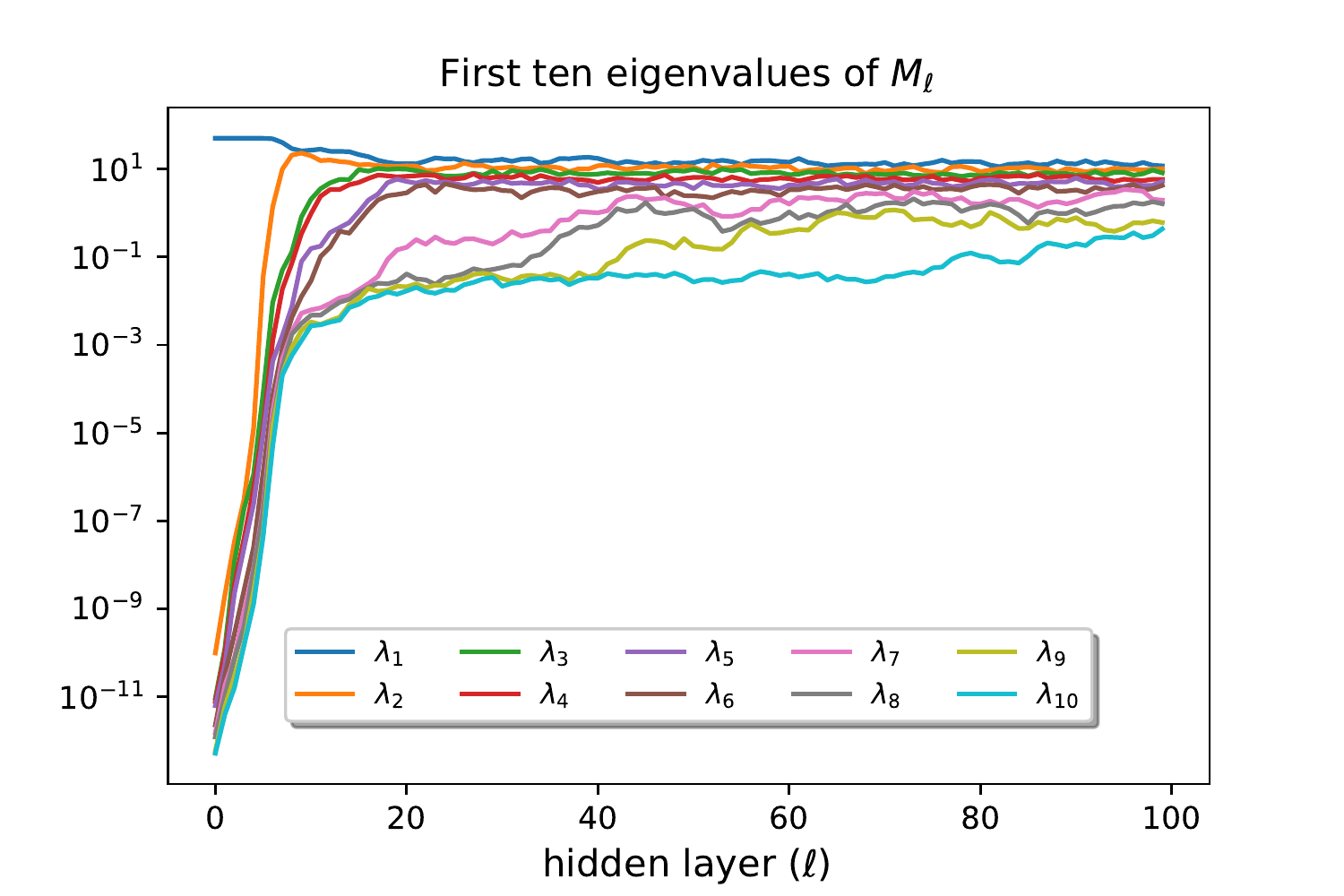} \\ 
     a) & b)
          
 	  \end{tabular}
          \caption{\footnotesize{a) Result of Theorem~\ref{thm:linear-rank-lowerboun} for different values of $\gamma$, where $\gamma=\infty$ depicts networks \textit{without} skip connections. Each point is the average $\rank_{\sfrac{1}{2}}$ over depth ($L=10^6$) of nets of width $d\in\{8,16,..,256\}$ an on x-axis. b) Top 10 singular values of $H_\ell^{(\gamma)}$ for increasing values of $\ell$ given nearly collinear inputs. As can be seen, \bn~ quickly amplifies smaller variations in the data while reducing the largest one. }}
          \label{fig:markov_chain_d}
\end{figure}

\subsection{Comparison with unnormalized networks}
In order to stress the importance of the above result, we now compare the predicted rank of $H_\ell$ with the rank of unnormalized linear networks, which essentially constitute a linear mapping in the form of a product of random matrices. The spectral distribution of products of random matrices with i.i.d.~standard Gaussian elements has been studied extensively \cite{bjorck2018understanding,forrester2013lyapunov,liu2016bulk}. Interestingly, one can show that the gap between the top and the second largest singular value increases with the number of products (i.e., $\ell$) at an exponential rate\footnote{The growth-rate of the $i$-th singular value is determined by the $i$-th Lyapunov exponent of the product of random matrices. We refer the reader to \cite{forrester2013lyapunov} for more details on Lyapunov exponents.} \cite{forrester2013lyapunov,liu2016bulk}. Hence, the matrix converges to a rank one matrix after normalizing by the norm. In the following, we extend this result to products of random matrices with a residual branch that is obtained by adding the identity matrices. Particularly, we consider the hidden states $\widehat{H}_\ell$ of the following linear residual network:
\begin{align}\label{eq:linear_net}
    \widehat{H}_\ell = \B_\ell X, \quad \B_\ell := \prod_{k=1}^\ell (I+\gamma \W_k).
\end{align}

Since the norm of $\widehat{H}_\ell$ is not necessarily bounded, we normalize as
$
    \widetilde{H}_\ell = B_\ell X/\|B_\ell\|.
$
 The next lemma characterizes the limit behaviour of $\{ \widetilde{H}_\ell\}$. 
 
\begin{lemma} \label{lemma:rank_vanishing}
Suppose that $\gamma \in (0,1)$ and assume the weights $\W_\ell$ to be initialized as in Def.~\ref{definition:weight_init} with element-wise distribution $\P$. Then we have for linear networks, which follow recursion \eqref{eq:linear_net}, that:
\begin{itemize}
    \item[a.] If $\P$ is standard Gaussian, then the sequence $\{ \widetilde{H}_\ell \}$ converges to a rank one matrix. 
    \item[b.] If $\P$ is uniform$[-\sqrt{3},\sqrt{3}]$, then there exists a monotonically increasing sequence of integers $\ell_1 < \ell_2, \dots$ such that the sequence $\{\widetilde{H}_{\ell_k}\}$ converges to a rank one matrix. 
\end{itemize}

\end{lemma}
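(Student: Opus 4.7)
The plan is to route both parts through the theory of Lyapunov exponents for products of i.i.d.\ random matrices, as developed in the reference \cite{bougerol2012products}. Denote the ordered singular values of $B_\ell$ by $\sigma_1(B_\ell) \geq \dots \geq \sigma_d(B_\ell)$, so that $\|B_\ell\| = \sigma_1(B_\ell)$ and $\widetilde{H}_\ell = B_\ell X/\sigma_1(B_\ell)$. Because $X$ is assumed full rank in $\mathbb{R}^{d\times N}$, a standard argument shows that the rank of any limit of $\widetilde{H}_\ell$ equals the number of indices $i$ for which $\sigma_i(B_\ell)/\sigma_1(B_\ell)$ does not tend to zero. Hence both (a) and (b) reduce to establishing that the ratio $\sigma_2(B_\ell)/\sigma_1(B_\ell)$ converges to zero (a.s.\ in case (a); along a subsequence in case (b)).

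To analyze this ratio, I would invoke the Furstenberg--Kesten / Oseledets multiplicative ergodic theorem. Since $\log^+\|I+\gamma W_1\|$ is integrable for both distributions (trivially for uniform by boundedness, and by standard tail estimates for Gaussian), the Lyapunov exponents $\lambda_1 \geq \lambda_2 \geq \cdots \geq \lambda_d$ of the i.i.d.\ sequence $\{I+\gamma W_k\}$ exist almost surely, and $\frac{1}{\ell}\log\sigma_i(B_\ell) \to \lambda_i$. Consequently, $\sigma_2(B_\ell)/\sigma_1(B_\ell) \to 0$ almost surely if and only if the top Lyapunov exponent is \emph{simple}, i.e., $\lambda_1 > \lambda_2$.

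For part (a), I would deduce simplicity of the top exponent from the Goldsheid--Margulis / Guivarc'h--Raugi criterion: it is enough that the closed semigroup generated by the support of the law of $I+\gamma W_1$ is \emph{strongly irreducible} and \emph{contracting} in the sense of proximality. For a standard Gaussian $W_1$, the law of $I+\gamma W_1$ has full support on $\mathbb{R}^{d\times d}$ (in particular it charges every open set of invertible matrices), and the induced measure on the projective space is absolutely continuous after one step; both irreducibility and contraction are then immediate, yielding $\lambda_1 > \lambda_2$ and hence almost-sure convergence of $\widetilde{H}_\ell$ to a rank-one matrix. The conclusion then follows by combining with Step 1.

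For part (b), the uniform distribution does not give full-support on $\mathbb{R}^{d\times d}$, so I cannot directly invoke absolute continuity after a single step. Instead, I would argue via a block/approximation scheme: group the factors into consecutive blocks of length $k$, and let $A_j^{(k)} = \prod_{m=(j-1)k+1}^{jk}(I+\gamma W_m)$. For each fixed $j$, the law of $A_j^{(k)}$ has, for sufficiently large $k$, a density with respect to Lebesgue measure on an open set of $\mathrm{GL}_d(\mathbb{R})$ (this follows from a convolution/CLT-style argument exploiting that the uniform density is bounded below on a neighborhood of the origin). Applying the Gaussian-case simplicity result to the block chain $\{A_j^{(k)}\}_j$ yields $\sigma_2(B_{\ell_k k})/\sigma_1(B_{\ell_k k}) \to 0$ almost surely, which provides the subsequence $\ell_j = jk$ claimed in (b). The main obstacle will be verifying that some fixed block length $k$ makes the convolved measure genuinely contracting and strongly irreducible for the \emph{uniform} case, since bounded support rules out several standard regularity shortcuts; this is precisely why (b) is phrased in terms of a subsequence rather than full convergence.
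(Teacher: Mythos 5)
Your overall strategy---reducing both parts to a spectral-gap statement for $B_\ell=\prod_{k\le\ell}(I+\gamma W_k)$ and then invoking the strong-irreducibility/contraction machinery of \cite{bougerol2012products}---is the same route the paper takes, and your part (a) matches the paper's argument essentially verbatim (the paper likewise just checks that the Gaussian support is contracting and strongly irreducible and cites the rank-one limit theorem, Thm.~3.1 of \cite{bougerol2012products}). One caveat shared by you and the paper: a Lyapunov gap $\lambda_1>\lambda_2$ (equivalently $\sigma_2(B_\ell)/\sigma_1(B_\ell)\to 0$) only shows that every \emph{limit point} of $B_\ell/\|B_\ell\|$ has rank one; convergence of the sequence itself additionally requires convergence of the top singular direction, which neither argument spells out.

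The genuine gap is in part (b). Your fixed-block scheme cannot be completed for any fixed $k$: the single-step support is the cube $\{A:|A_{ij}-\delta_{ij}|\le\gamma\sqrt{3}\}$, so every matrix in the support of the $k$-fold product has all singular values in $[(1-c\gamma)^k,(1+c\gamma)^k]$, hence condition number bounded by a constant depending only on $k$. The support of the block law therefore contains no sequence $M_n$ with $M_n/\|M_n\|$ tending to a rank-one matrix, i.e.\ it is \emph{not} contracting, no matter how large $k$ is. Absolute continuity is also a red herring: the law of $I+\gamma W_1$ already has a bounded density (uniform on a cube) at $k=1$; what fails is contraction, and a density supported on uniformly well-conditioned matrices does not supply it. The obstacle you flag as ``the main obstacle'' is thus not a verification to be filled in later but a dead end. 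The paper's fix is to randomize the block length: draw $m$ with $\Pr(m=k)=2^{-k}$ and set $Y=\prod_{k\le m}(I+\gamma W_k)$. The support of the law of $Y$ is then the entire semigroup generated by the cube, which contains $(I+\gamma e_1e_1^\top)^n=I+((1+\gamma)^n-1)\,e_1e_1^\top$ for every $n$ (contracting) and contains $I+\tfrac{\gamma}{d}\sum_i\alpha_i e_i v^\top$ for every unit vector $v$ (strongly irreducible); the random times $\ell_k=\sum_{i\le k}m_i$ are exactly the subsequence in the statement. Alternatively, you could rescue your Lyapunov-exponent route by applying the contraction and irreducibility hypotheses to the closed semigroup generated by the single-step support---which is how the simplicity criteria in Bougerol--Lacroix are actually formulated---rather than to the support of a fixed-length block.
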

 This results stands in striking contrast to the result of Theorem~\ref{thm:linear-rank-lowerboun} established for batch-normalized networks.\footnote{According to the observations in Fig. \ref{fig:vanilla_bn_rank}, the result of part b holds for the usual sequence of indices $\{ \ell_k = k \}$, which indicates that $\{\widetilde{H}_{k}\}$ converges to a rank one matrix even in the case of uniform initialization.} 
 Interestingly, even residual skip connections cannot avoid rank collapse for very deep neural networks, unless one is willing to incorporate a depth dependent down-scaling of the parametric branch as for example done in \cite{allen2018convergence}, who set $\gamma = O(\frac{1}{L})$ . Remarkably, Theorem~\ref{thm:linear-rank-lowerboun} shows that \bn~layers provably avoid rank collapse \textit{without} requiring the networks to become closer and closer to identity.

\paragraph{Implications of rank collapse on gradient based learning.}
In order to explain the severe consequence of rank collapse on optimization performance reported in Fig. \ref{fig:learning_depth}, we study the effect of rank one hidden-layer representations on the gradient of the training loss for distinct input samples. Let $\mathcal{L}_i$ denote the training loss for datapoint $i$ on a vanilla network as in Eq. (\ref{eq:linear_net}). Furthermore, let the final classification layer be parametrized by $W_{L+1}\in \mathbb{R}^{d_{out}\times d}$. Then, given that the hidden presentation at the last hidden layer $L$ is rank one, the normalized gradients of the loss with respect to weights of individual neurons $k\in 1,...,d_{out}$ in the classification layer (denoted by $\nabla_{W_{L+1,k}}\mathcal{L}_i$, where $\|\nabla_{W_{L+1,k}}\mathcal{L}_i\|=1$) are collinear for any two datapoints $i$ and $j$, i.e.  $\nabla_{W_{L+1,k}}\mathcal{L}_i=\mp \nabla_{W_{L+1,k}}\mathcal{L}_j$. A formal statement is presented in Prop.~\ref{prop:grad_direction_formal} in the Appendix alongside empirical validations on a VGG19 network (Fig. \ref{fig:vgg}). This result implies that the commonly accepted vanishing gradient \textit{norm} hypothesis is not descriptive enough since SGD does not take small steps into the \textit{right} direction, but into a random direction that is \textit{independent} from the input. In other words, deep neural networks are prone to \textit{directional gradient vanishing} after initialization, which is caused by the collapse of the last hidden layer activations to a very small subspace (one line in $\mathbb{R}^d$ in the extreme case of rank one activations).

\section{The important role of the rank}\label{sec:pretraining}
The preceding sections highlight that the rank of the hidden representations is a key difference between random vanilla and \bn~ networks. We now provide three experimental findings that substantiate the particular importance of the rank at the beginning of training: First, we find that an unsupervised, rank-increasing pre-training allows SGD on vanilla networks to outperform \bn{} networks. Second, we show that the performance of \bn{}-networks is closely tied to a high rank at initialization. Third, we report that SGD updates preserve the initial rank magnitude throughout the optimization process.
\paragraph{Outperforming BN using a pre-training step.} 
As discussed above, batch normalization layers are very effective at avoiding rank collapse. Yet, this is of course not the only way to preserve rank. Based upon our theoretical insights, we leverage the lower bound established in Eq.~\eqref{eq:rH} to design a pre-training step that not only avoids rank collapse but also accelerates the convergence of SGD. Our proposed procedure is both simple and computationally cheap. Specifically, we \emph{maximize} the lower-bound $r(H_\ell)$ (in Eq.~\eqref{eq:rH}) on the rank of the hidden presentation $H_\ell$ in each layer $\ell$. Since this function is differentiable with respect to its input, it can be optimized sufficiently by just a few steps of (stochastic) gradient ascent (see Section \ref{sec:details_pretraining} in the Appendix for more details).

\begin{figure}[h!]
    \centering
    \begin{tabular}{c@{}c@{}c@{}}
     \includegraphics[width=0.3333\textwidth]{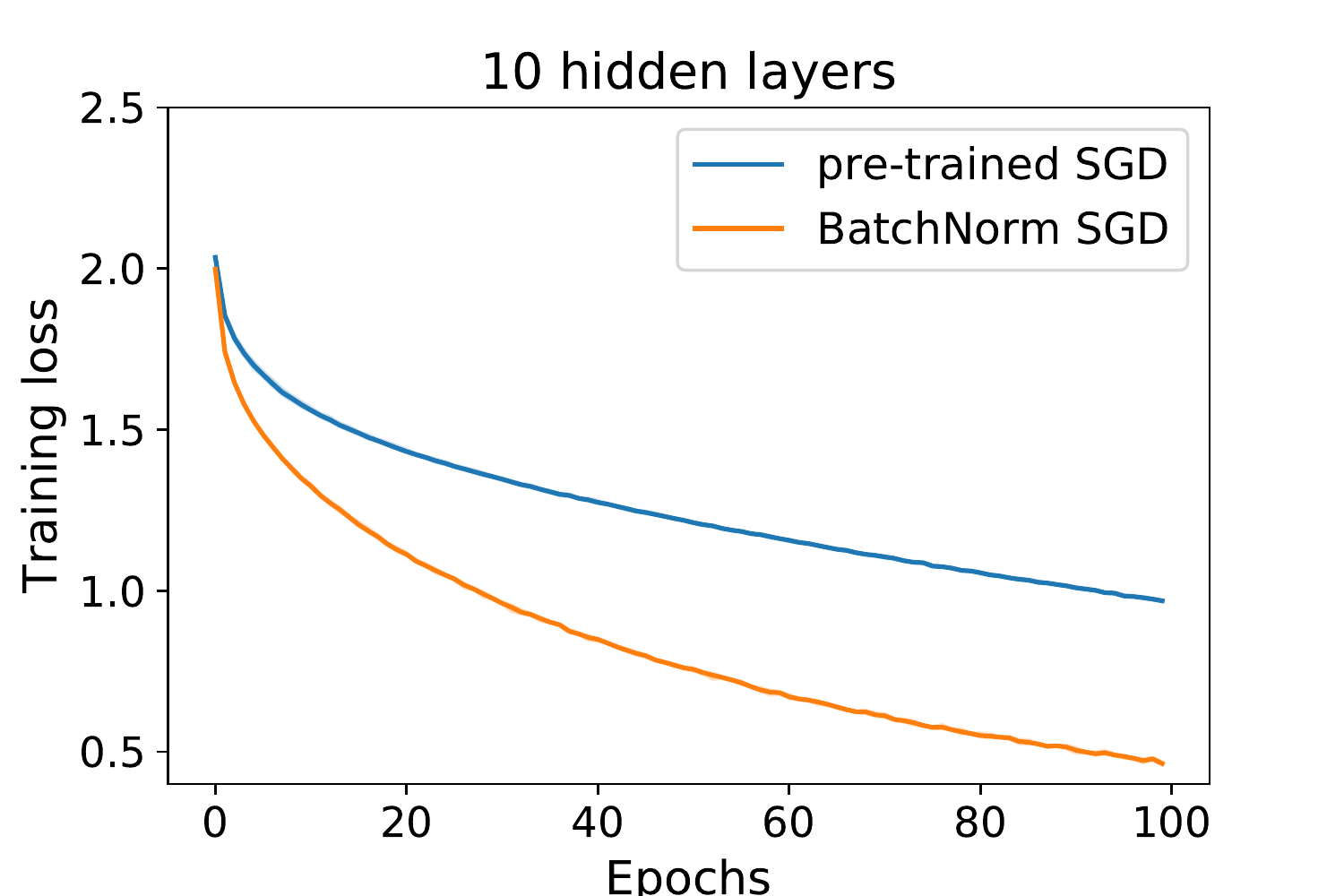} & \includegraphics[width=0.3333\textwidth]{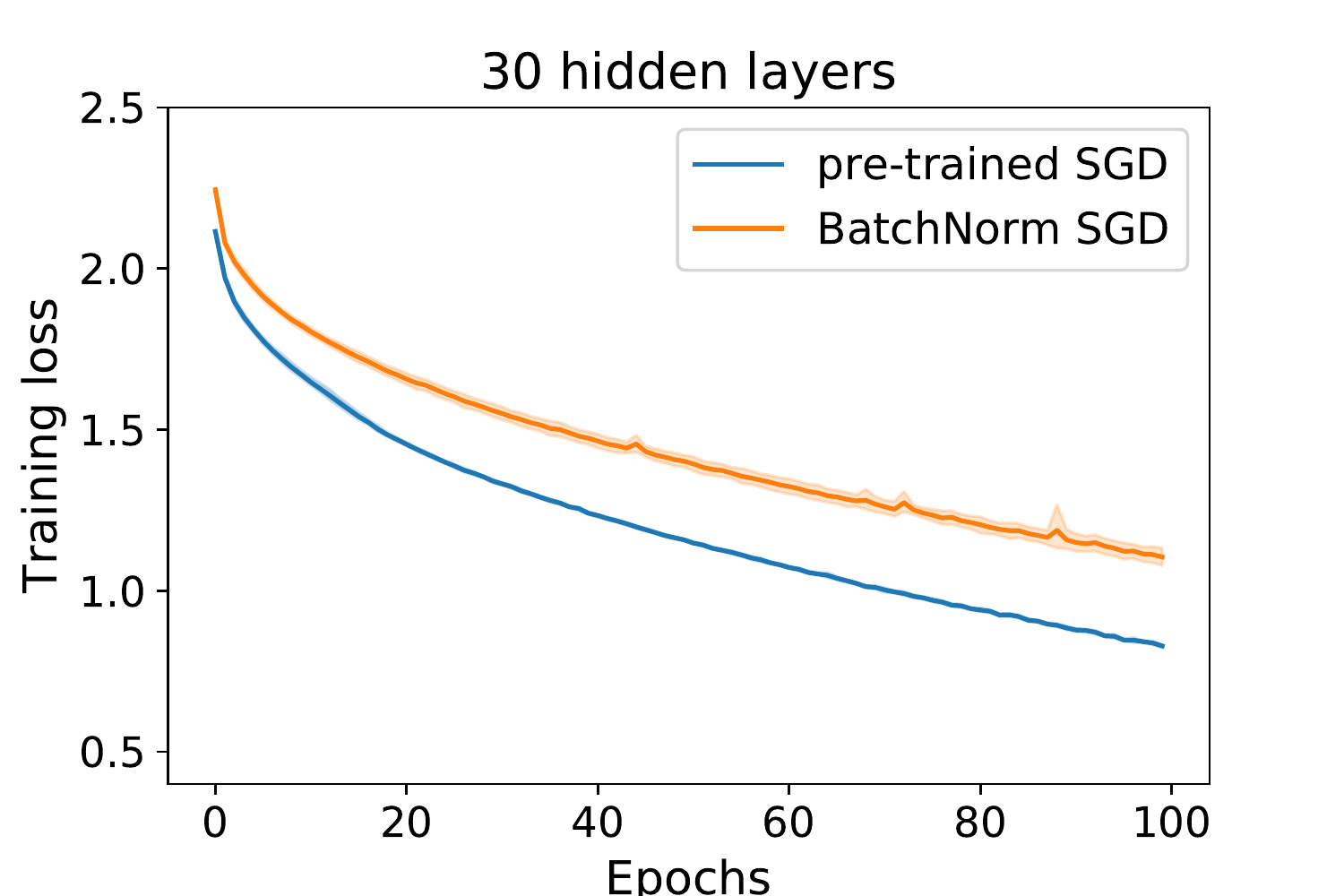} & \includegraphics[width=0.3333\textwidth]{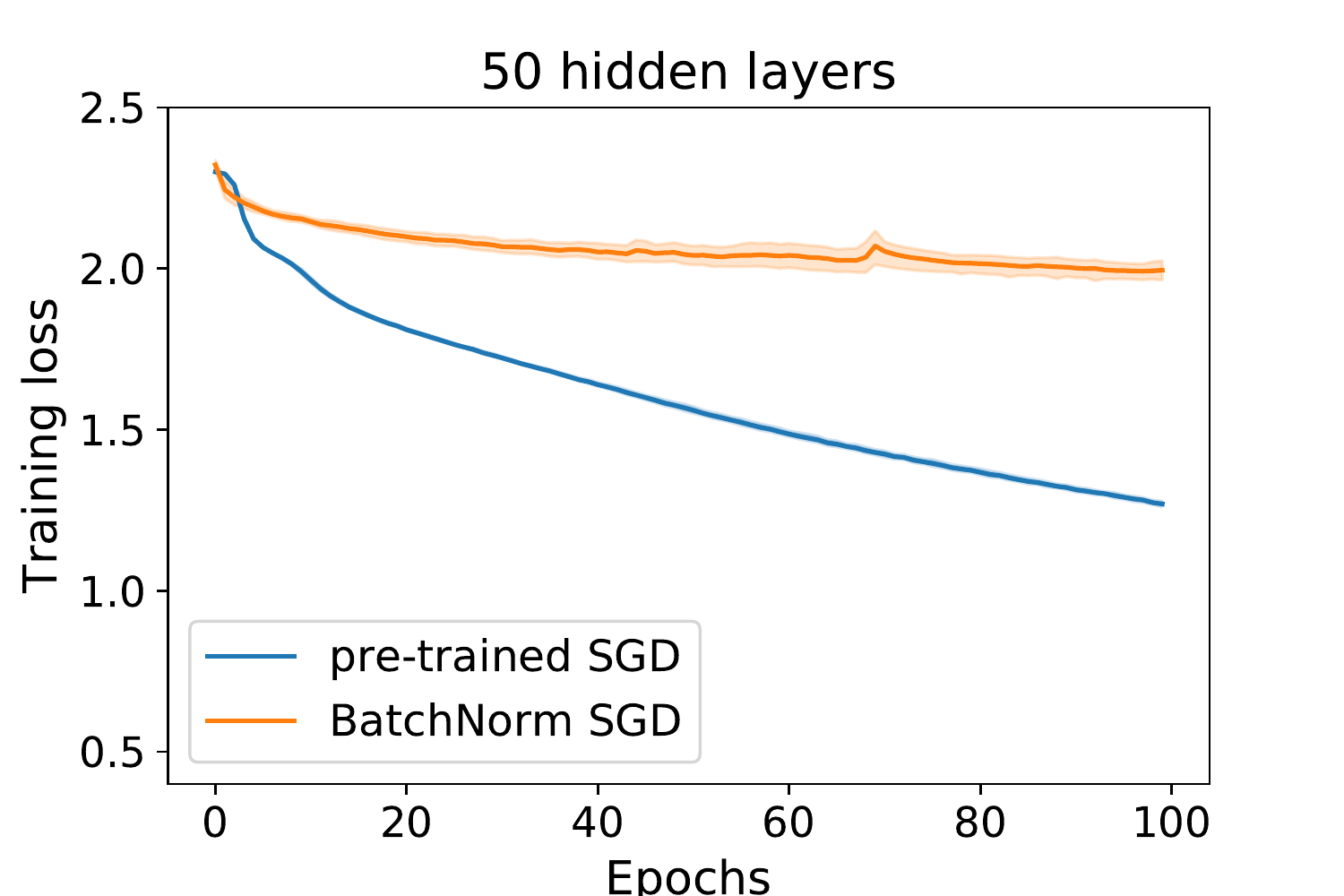} 
    \end{tabular}
    
    \caption{\footnotesize{\textbf{Pre-training versus \bn{}:} Loss over epochs on CIFAR-10 for MLPs of increasing depth with 128 hidden units and ReLU activation. Trained with SGD (batchsize 64) and grid-searched stepsize. See Fig. \ref{fig:beating_bn_app} for the corresponding test loss and accuracy as well as Fig. \ref{fig:beating_bn_FM} for FashionMNIST results.}}
    \label{fig:outperform_bn_cifar}
\end{figure}

Fig.~\ref{fig:outperform_bn_cifar} compares the convergence rate of SGD on pre-trained vanilla networks and \bn{}-networks. As can be seen, the slow down in depth is much less severe for the pre-trained networks. This improvement is, also, reflected both in terms of training accuracy and test loss (see Fig.~\ref{fig:beating_bn_app} in Appendix). Interestingly, the pre-training is not only faster than \bn~ on deep networks, but it is also straight-forward to use in settings where the application of \bn~ is rather cumbersome such as for very small batch sizes or on unseen data \cite{ioffe2017batch,wu2018group}.

\paragraph{Breaking batch normalization.}

Some scholars hypothesize that the effectiveness of \bn~ stems from a global landscape smoothing \cite{santurkar2018does} or a certain learning rate tuning \cite{arora2018theoretical}, that are thought to be induced by the normalization. Under these hypotheses, one would expect that SGD converges fast on \bn{}-nets \textit{regardless} of the initialization. Yet, we here show that the way that networks are initialized does play a crucial role for the subsequent optimization performance of \bn{}-nets.

\begin{figure}[H]
 \centering 
          \begin{tabular}{c@{}c@{}}
          \includegraphics[width=0.38\linewidth]{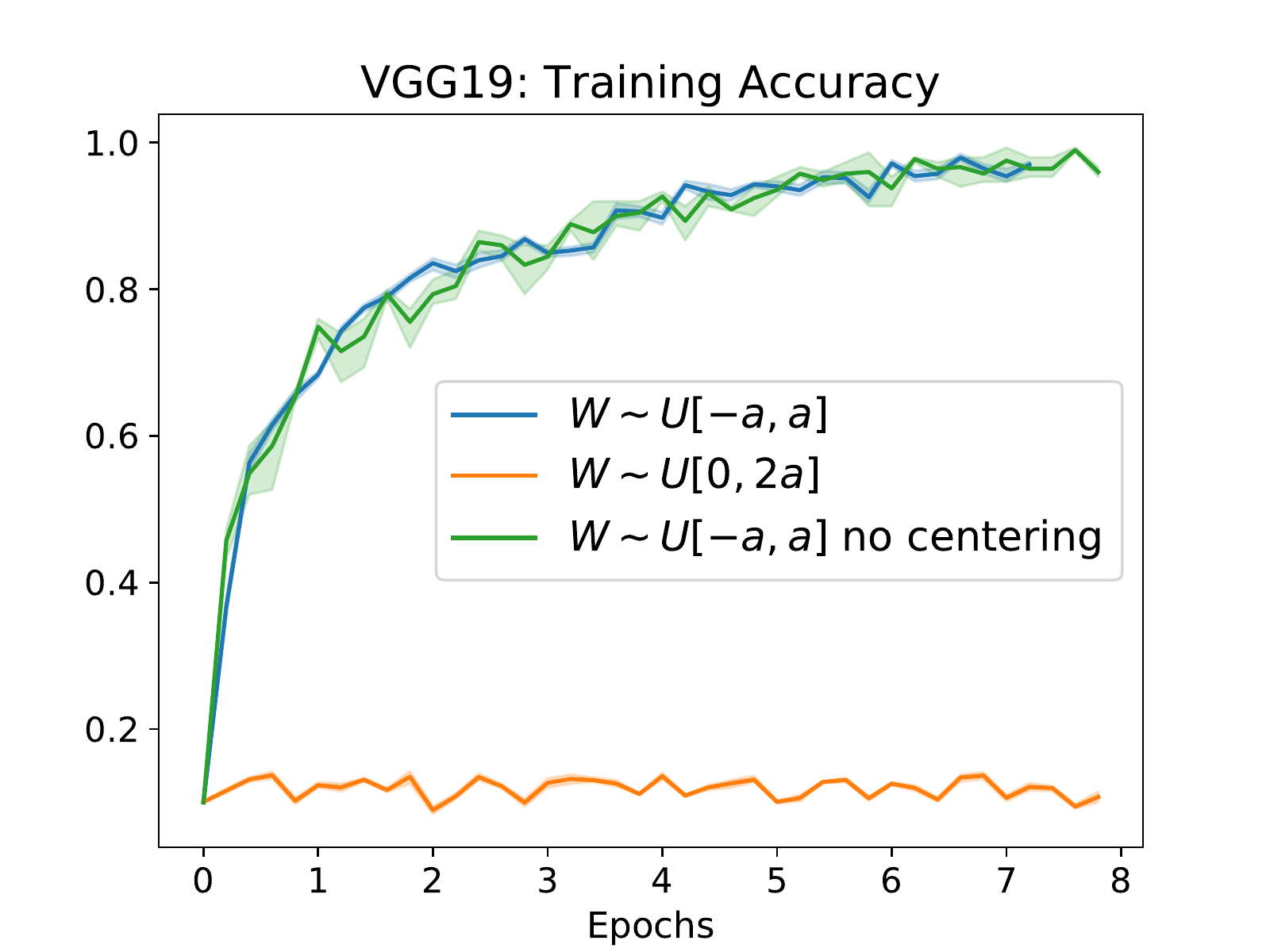}
                &   \includegraphics[width=0.38\linewidth]{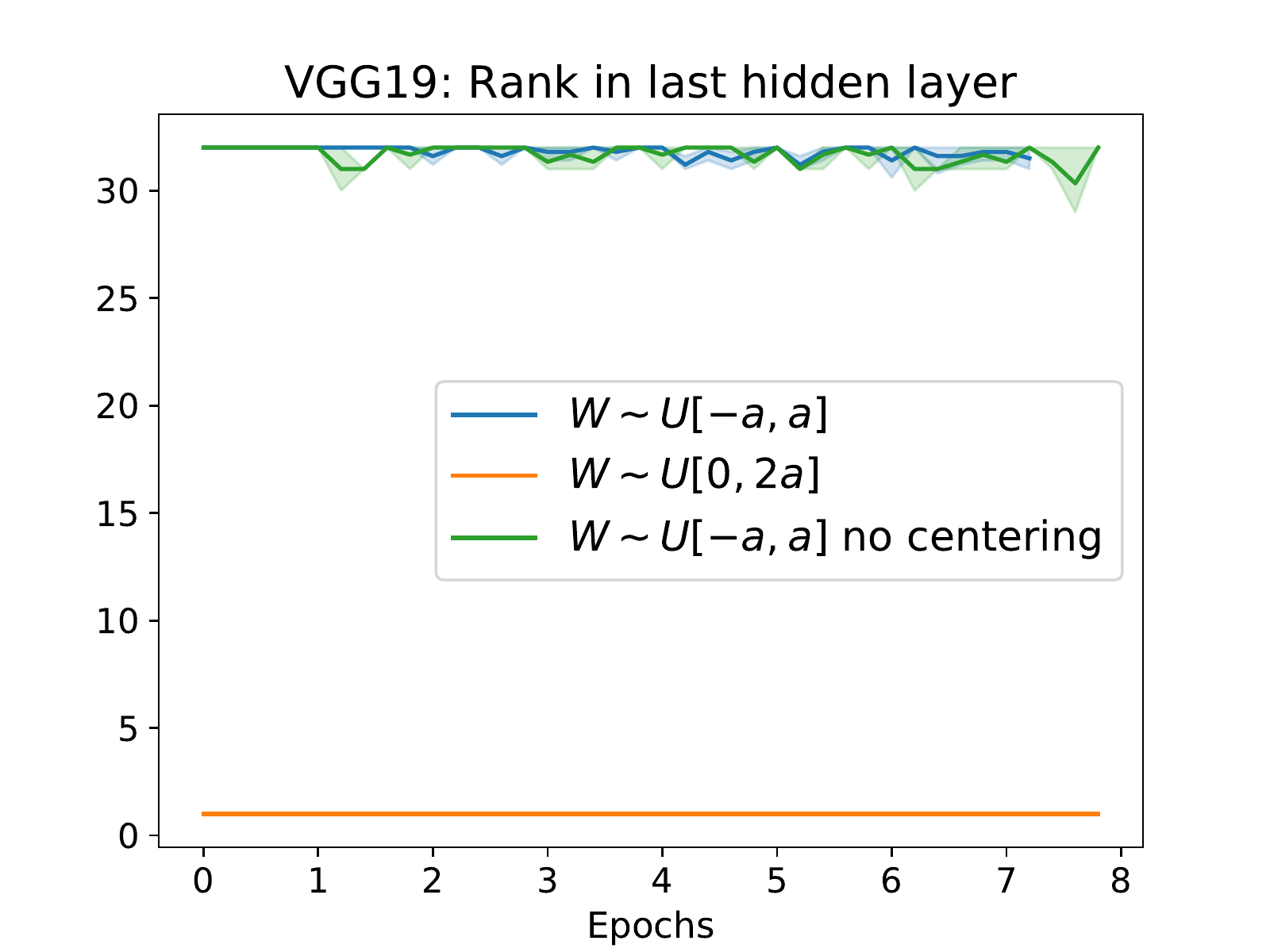}
           
 	  \end{tabular}
          \caption{ \footnotesize{\textbf{Breaking Batchnorm:} CIFAR-10 on VGG19 with standard PyTorch initialization as well as a uniform initialization of same  variance. (Left) training accuracy, (Right) Rank of last  hidden layer computed using $torch.matrix\_rank()$. Plot also shows results for standard initialization and \bn~\textit{without} mean deduction. Avg. and 95\% CI of 5 independent runs. (See Fig. \ref{fig:broken_bn_app} in Appendix for similar results on ResNet-50).
          }}
          \label{fig:broken_bn}
\end{figure}

Particularly, we train two MLPs with batchnorm, but change the initialization for the second net from the standard PyTorch way $W_{l,i,j}\sim \text{uniform}\left[-1/\sqrt{d_l},1/\sqrt{d_l}\right]$ \cite{paszke2019pytorch,glorot2010understanding} to $W_{l,i,j}\sim \text{uniform}\left[0,+2/\sqrt{d_l}\right]$, where $d_l$ is the layer size. As can be seen to the right, this small change reduces the rank preserving quality of BN significantly, which is reflected in much slower learning behaviour. Even sophisticated modern day architectures such as VGG and ResNet networks are unable to fit the CIFAR-10 dataset after changing the initialization in this way (see Fig.~\ref{fig:broken_bn}).

\paragraph{Rank through the optimization process.}\label{sec:rank_causality}
The theoretical result of Theorem \ref{thm:linear-rank-lowerboun} considers the rank at random initialization. To conclude, we perform two further experiments which confirm that the initial rank strongly influences the speed of SGD throughout the entire optimization process. In this regard,  Fig.~\ref{fig:pre_training} reports that SGD preserves the initial magnitude of the rank to a large extent, regardless of the specific network type. This is particularly obvious when comparing the two \bn{} initializations. A further noteworthy aspect is the clear correlation between the level of pre-training and optimization performance on vanilla nets.
Interestingly, this result does again not only hold on simple MLPs but also generalizes to modern day networks such as the VGG-19 (see Fig.~\ref{fig:broken_bn}) and  ResNet50 architecture (see Appendix~\ref{sec:additional_experiments}).

\begin{figure}[h!]
    \centering
    \begin{tabular}{c@{}c@{}c@{}}
     \includegraphics[width=0.4\textwidth]{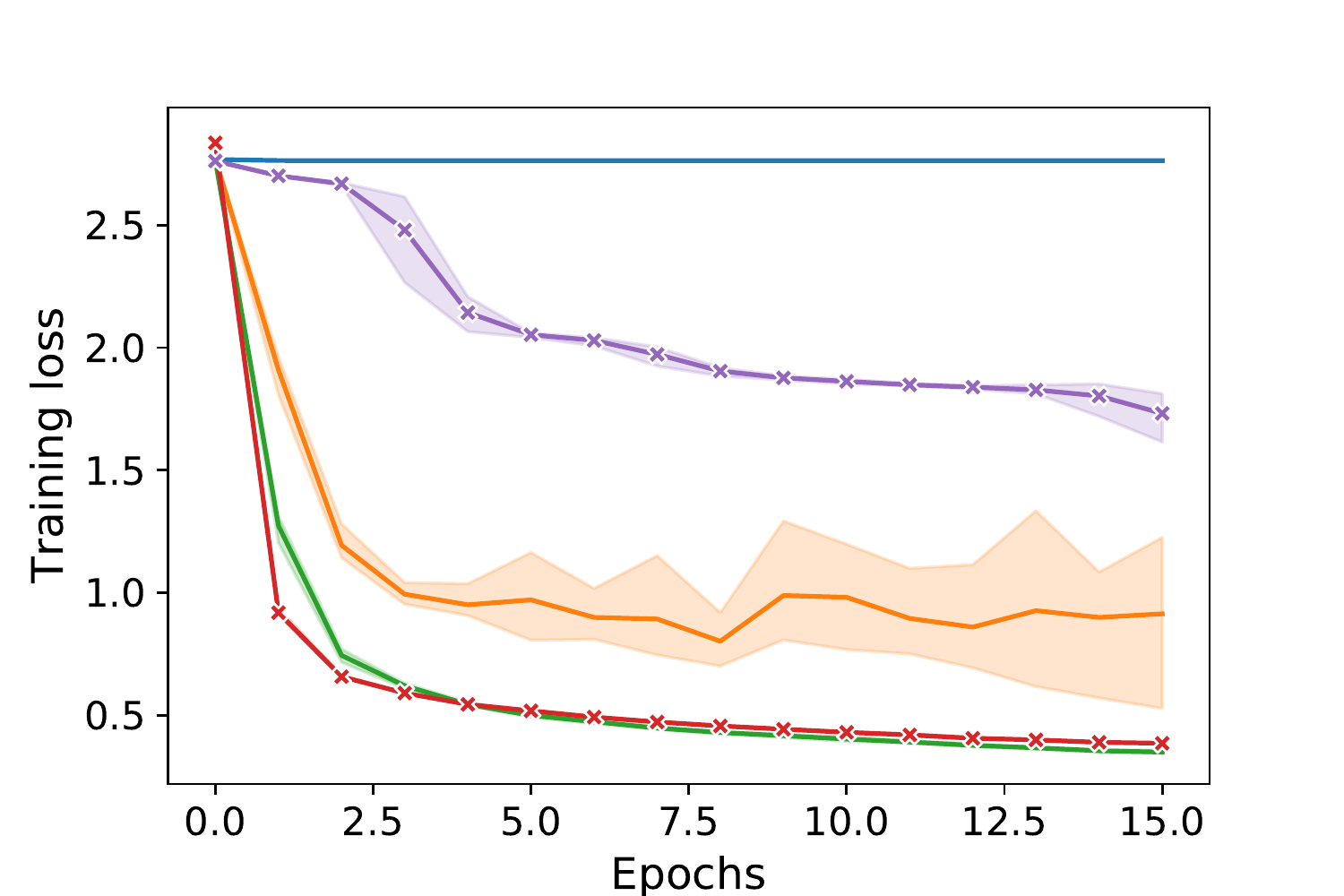} & \hspace{-10pt} \includegraphics[width=0.4\textwidth]{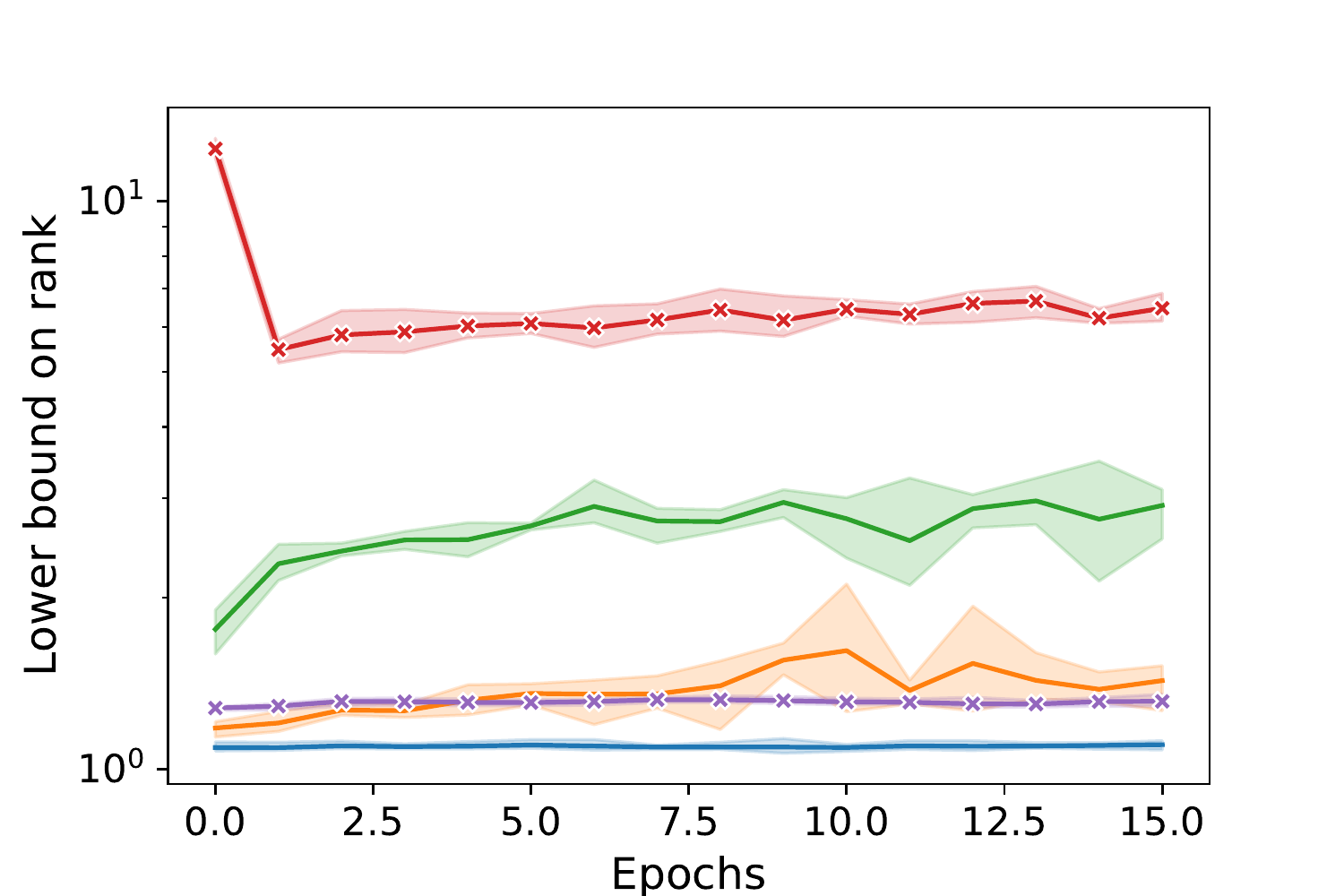} & \includegraphics[width=0.2\textwidth]{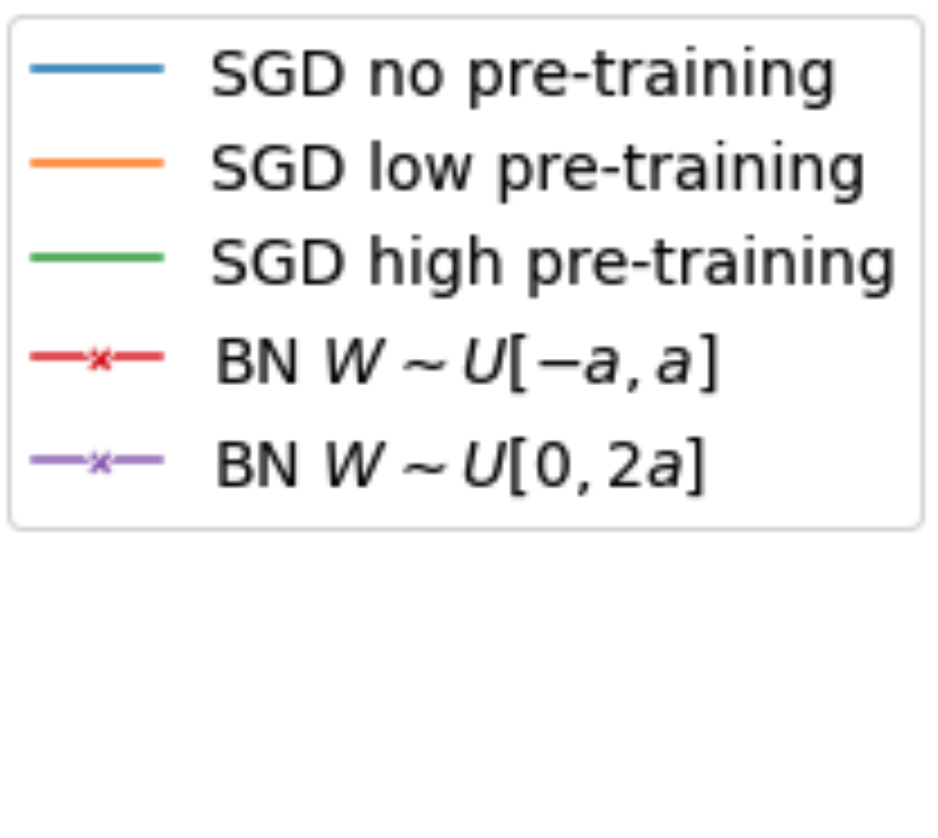} 
    \end{tabular}
    
    \caption{\footnotesize{\textbf{Pretraining:} Fashion-MNIST on MLPs of depth 32 and width 128. (Left) Training accuracy, (Right) Lower bound on rank. Blue line is a ReLU network with standard initialization. Other solid lines are pre-trained layer-wise with 25 (orange) and 75 (green) iterations to increase the rank. Dashed lines are batchnorm networks with standard and asymmetric initialization. Average and 95\% confidence interval of 5 independent runs.}}
    \label{fig:pre_training}
\end{figure}

\bibliographystyle{plain}
\bibliography{bn.bib}

\begin{thebibliography}{10}

\bibitem{allen2018convergence}
Zeyuan Allen-Zhu, Yuanzhi Li, and Zhao Song.
\newblock A convergence theory for deep learning via over-parameterization.
\newblock {\em arXiv preprint arXiv:1811.03962}, 2018.

\bibitem{arora2018convergence}
Sanjeev Arora, Nadav Cohen, Noah Golowich, and Wei Hu.
\newblock A convergence analysis of gradient descent for deep linear neural
  networks.
\newblock {\em arXiv preprint arXiv:1810.02281}, 2018.

\bibitem{arora2018theoretical}
Sanjeev Arora, Zhiyuan Li, and Kaifeng Lyu.
\newblock Theoretical analysis of auto rate-tuning by batch normalization.
\newblock {\em arXiv preprint arXiv:1812.03981}, 2018.

\bibitem{barak2016proofs}
Boaz Barak and David Steurer.
\newblock Proofs, beliefs, and algorithms through the lens of sum-of-squares.
\newblock {\em Course notes: http://www. sumofsquares. org/public/index. html},
  2016.

\bibitem{bartlett2019gradient}
Peter~L. Bartlett, David~P. Helmbold, and Philip~M. Long.
\newblock Gradient descent with identity initialization efficiently learns
  positive-definite linear transformations by deep residual networks.
\newblock {\em Neural computation}, 31(3):477--502, 2019.

\bibitem{bjorck2018understanding}
Nils Bjorck, Carla~P. Gomes, Bart Selman, and Kilian~Q. Weinberger.
\newblock Understanding batch normalization, 2018.

\bibitem{boucheron2013concentration}
St{\'e}phane Boucheron, G{\'a}bor Lugosi, and Pascal Massart.
\newblock {\em Concentration inequalities: A nonasymptotic theory of
  independence}.
\newblock Oxford university press, 2013.

\bibitem{bougerol2012products}
Philippe Bougerol.
\newblock {\em Products of Random Matrices with Applications to Schr{\"o}dinger
  Operators}, volume~8.
\newblock Springer Science \& Business Media, 2012.

\bibitem{dieuleveut2017bridging}
Aymeric Dieuleveut, Alain Durmus, and Francis Bach.
\newblock Bridging the gap between constant step size stochastic gradient
  descent and {M}arkov chains.
\newblock {\em arXiv preprint arXiv:1707.06386}, 2017.

\bibitem{douc2018markov}
Randal Douc, Eric Moulines, Pierre Priouret, and Philippe Soulier.
\newblock {\em Markov Chains}.
\newblock Springer, 2018.

\bibitem{forrester2013lyapunov}
Peter~J. Forrester.
\newblock Lyapunov exponents for products of complex {G}aussian random
  matrices.
\newblock {\em Journal of Statistical Physics}, 151(5):796--808, 2013.

\bibitem{glorot2010understanding}
Xavier Glorot and Yoshua Bengio.
\newblock Understanding the difficulty of training deep feedforward neural
  networks.
\newblock In {\em Proceedings of the thirteenth international conference on
  artificial intelligence and statistics}, pages 249--256, 2010.

\bibitem{he2015delving}
Kaiming He, Xiangyu Zhang, Shaoqing Ren, and Jian Sun.
\newblock Delving deep into rectifiers: Surpassing human-level performance on
  imagenet classification.
\newblock In {\em Proceedings of the IEEE international conference on computer
  vision}, pages 1026--1034, 2015.

\bibitem{hochreiter1998vanishing}
Sepp Hochreiter.
\newblock The vanishing gradient problem during learning recurrent neural nets
  and problem solutions.
\newblock {\em International Journal of Uncertainty, Fuzziness and
  Knowledge-Based Systems}, 6(02):107--116, 1998.

\bibitem{ioffe2017batch}
Sergey Ioffe.
\newblock Batch renormalization: Towards reducing minibatch dependence in
  batch-normalized models.
\newblock In {\em Advances in neural information processing systems}, pages
  1945--1953, 2017.

\bibitem{ioffe2015batch}
Sergey Ioffe and Christian Szegedy.
\newblock Batch normalization: Accelerating deep network training by reducing
  internal covariate shift.
\newblock {\em arXiv preprint arXiv:1502.03167}, 2015.

\bibitem{karakida2019normalization}
Ryo Karakida, Shotaro Akaho, and Shun-ichi Amari.
\newblock The normalization method for alleviating pathological sharpness in
  wide neural networks.
\newblock In {\em Advances in Neural Information Processing Systems}, pages
  6403--6413, 2019.

\bibitem{kawaguchi2016deep}
Kenji Kawaguchi.
\newblock Deep learning without poor local minima.
\newblock In {\em Advances in neural information processing systems}, pages
  586--594, 2016.

\bibitem{kohler2018exponential}
Jonas Kohler, Hadi Daneshmand, Aurelien Lucchi, Ming Zhou, Klaus Neymeyr, and
  Thomas Hofmann.
\newblock Exponential convergence rates for batch normalization: The power of
  length-direction decoupling in non-convex optimization.
\newblock {\em arXiv preprint arXiv:1805.10694}, 2018.

\bibitem{liu2016bulk}
Dang-Zheng Liu, Dong Wang, and Lun Zhang.
\newblock Bulk and soft-edge universality for singular values of products of
  ginibre random matrices.
\newblock In {\em Annales de l'Institut Henri Poincar{\'e}, Probabilit{\'e}s et
  Statistiques}, volume~52, pages 1734--1762. Institut Henri Poincar{\'e},
  2016.

\bibitem{paszke2019pytorch}
Adam Paszke, Sam Gross, Francisco Massa, Adam Lerer, James Bradbury, Gregory
  Chanan, Trevor Killeen, Zeming Lin, Natalia Gimelshein, Luca Antiga, et~al.
\newblock Pytorch: An imperative style, high-performance deep learning library.
\newblock In {\em Advances in Neural Information Processing Systems}, pages
  8024--8035, 2019.

\bibitem{pennington2018emergence}
Jeffrey Pennington, Samuel~S Schoenholz, and Surya Ganguli.
\newblock The emergence of spectral universality in deep networks.
\newblock {\em arXiv preprint arXiv:1802.09979}, 2018.

\bibitem{salimans2016weight}
Tim Salimans and Diederik~P. Kingma.
\newblock Weight normalization: A simple reparameterization to accelerate
  training of deep neural networks.
\newblock In {\em Advances in Neural Information Processing Systems}, pages
  901--909, 2016.

\bibitem{santurkar2018does}
Shibani Santurkar, Dimitris Tsipras, Andrew Ilyas, and Aleksander Madry.
\newblock How does batch normalization help optimization?(no, it is not about
  internal covariate shift).
\newblock {\em arXiv preprint arXiv:1805.11604}, 2018.

\bibitem{saxe2013exact}
Andrew~M. Saxe, James~L. McClelland, and Surya Ganguli.
\newblock Exact solutions to the nonlinear dynamics of learning in deep linear
  neural networks.
\newblock {\em arXiv preprint arXiv:1312.6120}, 2013.

\bibitem{schoenholz2016deep}
Samuel~S. Schoenholz, Justin Gilmer, Surya Ganguli, and Jascha Sohl-Dickstein.
\newblock Deep information propagation.
\newblock {\em arXiv preprint arXiv:1611.01232}, 2016.

\bibitem{telgarsky2016benefits}
Matus Telgarsky.
\newblock Benefits of depth in neural networks.
\newblock {\em arXiv preprint arXiv:1602.04485}, 2016.

\bibitem{tropp2015introduction}
Joel~A. Tropp.
\newblock An introduction to matrix concentration inequalities.
\newblock {\em arXiv preprint arXiv:1501.01571}, 2015.

\bibitem{wu2018group}
Yuxin Wu and Kaiming He.
\newblock Group normalization.
\newblock In {\em Proceedings of the European Conference on Computer Vision
  (ECCV)}, pages 3--19, 2018.

\bibitem{xiao2017fashion}
Han Xiao, Kashif Rasul, and Roland Vollgraf.
\newblock Fashion-mnist: a novel image dataset for benchmarking machine
  learning algorithms.
\newblock {\em arXiv preprint arXiv:1708.07747}, 2017.

\bibitem{yang2017mean}
Ge~Yang and Samuel Schoenholz.
\newblock Mean field residual networks: On the edge of chaos.
\newblock In {\em Advances in neural information processing systems}, pages
  7103--7114, 2017.

\bibitem{yang2019mean}
Greg Yang, Jeffrey Pennington, Vinay Rao, Jascha Sohl-Dickstein, and Samuel~S.
  Schoenholz.
\newblock A mean field theory of batch normalization.
\newblock {\em arXiv preprint arXiv:1902.08129}, 2019.

\bibitem{yao2019pyhessian}
Zhewei Yao, Amir Gholami, Kurt Keutzer, and Michael Mahoney.
\newblock Py{H}essian: Neural networks through the lens of the {H}essian.
\newblock {\em arXiv preprint arXiv:1912.07145}, 2019.

\end{thebibliography}
\newpage 
\appendix

\begin{center}
   \textbf{\LARGE{Appendix}}
\end{center}

\section{Preliminaries}
Recall that $H_{\ell}^{(\gamma)}$ denotes the hidden representations in layer $\ell$. These matrices make a Markov chain that obeys the recurrence of Eq.~\eqref{eq:nn}, which we restate here
\begin{align} \label{eq:chain_app}
    H_{\ell+1}^{(\gamma)} = \bn{}(H_\ell^{(\gamma)} + \gamma W_\ell H_\ell^{(\gamma)}), \quad  H_0^{\gamma} = X,
\end{align}
where we use the compact notation $\bn$ for $\bn_{0,\ones_d}$.
Let $M_{\ell}^{(\gamma)}$ be second moment matrix of the hidden representations $H_{\ell}^{(\gamma)}$, i.e.  $ M_\ell^{(\gamma)}:= H_{\ell}^{(\gamma)} \left( H_{\ell}^{(\gamma)}\right)^{\top}/N$. Batch normalization ensures that the rows of $H_{\ell}$ have the same norm $\sqrt{N}$ for $\ell>0$ --where $N$ is the size of mini-batch. Let $\Hs$ be space of $d\times d$-matrices that obey this propery. This property enforces two key characteristics on $M_{\ell}^{(\gamma)}$: 
\begin{align}
    \text{(p.1)} & \quad \quad \text{its diagonal elements are one}  \label{p1m}\\ 
    \text{(p.2)} & \quad \quad \text{the absolute value of its off-diagonal elements is less than one} \label{p2m}
\end{align}
 Property (p.1) directly yields that the trace of $M_\ell^{(\gamma)}$ (and hence the sum of its eigenvalues) is equal to $d$. We will repeatedly use these properties in our analysis.

Furthermore, the sequence $\{H_{\ell}^{(\gamma)}\}_{\ell=1}^\infty$ constitute a Markov chain. Under mild assumptions, this chain admits an invariant distribution that is defined bellow\cite{douc2018markov}. 
\begin{definition} \label{definition:invariance}
Distribution $\nu$ is an invariant distribution of the hidden representations $\{H_{\ell}^{(\gamma)}\}_{\ell=1}^\infty$ if it obeys
\begin{align}
    \int \bn(H+\gamma W H) \mu(d W) \nu(d H) = \int \bn(H) \nu(dH)
\end{align}
where $\mu$ denotes the probability measure of random weights. 
\end{definition}
Later, we will see that the above invariance property allows us to determine the commutative behaviour of the sequence of hidden presentations.

\section{Lower bounds on the (soft) rank}
Recall that we introduced the ratio $r(H)= \tr(M(H))^2/\| M(H)\|_F^2$ in Eq.~\eqref{eq:rH} as a lower bound on both the $\rank(H)$ as well as the soft rank $\rank_\tau(H)$ (stated in Lemma~\ref{lemma:soft_rank_lowerbound}). This section establishes these lower bounds.  
\begin{proof}[Proof of Lemma~\ref{lemma:soft_rank_lowerbound} (part 1).]
We first prove that $\rank(H) \geq r(H)$. 
Let $M := M(H) = H H^\top/N$. Since the eigenvalues of $H$ are obtained by a constant scaling factor of the squared singular values of $H$, these two matrices have the same rank. We now establish a lower bound on $\rank(M)$. 
Let $\blambda \in \R^d$ contain the eigenvalues of matrix $\M$, hence $\|\blambda\|_1= \tr(\M)$ and $\| \lambda \|_2^2 = \| M \|_F^2$. Given $\lambda$, we define the vector $\w \in \R^d$ as
\begin{align}
    \w_i = \begin{cases} 
     1/\|\lambda\|_0 & : \lambda_i \neq 0 \\ 
     0 & : \lambda_i = 0.
    \end{cases}
\end{align}
To proof the assertion, we make use if a straightforward application of Cauchy-Schwartz
\begin{align}
    & | \langle \lambda, \w \rangle | \leq \| \lambda\|_2 \| \w \|_2 \\ 
    \implies&  \| \lambda \|_1/ \| \lambda\|_0 \leq \| \lambda\|_2/\| \lambda \|_0^{1/2} \\ 
    \implies&  \frac{\|\lambda\|_1}{\|\lambda\|_2} \leq \| \lambda\|_0^{1/2}.
\end{align}
Replacing $\| \lambda \|_2 = \| \M \|_F$ and $\| \lambda \|_1 = \tr(\M)$ into the above equation concludes the result. Note that the above proof technique has been used in the planted sparse vector problem~\cite{barak2016proofs}. 
\end{proof}

\begin{proof}[Proof of Lemma~\ref{lemma:soft_rank_lowerbound} (part 2)]
Now, we prove that $\rank_\tau(H_\ell^{(\gamma)}) \geq (1-\tau)^2 r(H_\ell^{(\gamma)}) $. Let $\lambda \in \R^d_+$ be a vector containing the eigenvalues of the matrix $M_\ell^{(\gamma)} = M(H_\ell^{(\gamma)})$. Let $\sigma \in \R^d_+$ contain the singular values of $H$. Then, one can readily check that $\sigma_i^2/N = \lambda_i$. Furthermore, $\| \lambda \|_1 = d$ due to (p.1) in Eq.~\eqref{p1m}. Furthermore, we have by definition that
\begin{align}
    \rank_\tau(H_\ell^{(\gamma)}) = h_\tau(\lambda) :=  \sum_{i=1}^d \ones(\sigma_i^2/N \geq \tau) =  \sum_{i=1}^d \ones(\lambda_i \geq \tau).
\end{align}
Let us now define a vector $w \in \R^d$ with entries
\begin{align}
    \w_i = \begin{cases} 
     1/h_\tau(\lambda) & : \lambda_i \geq \tau \\ 
     0 & : \text{otherwise.} 
    \end{cases}
\end{align}
Then, we use Cauchy-Schwartz to get 
\begin{align} \label{eq:chauchy_2}
    | \langle \lambda, w \rangle | \leq \| \lambda \|_2 \| w \|_2.
\end{align}
It is easy to check that $\| \w \|_2 = h_\tau(\lambda)^{-\sfrac{1}{2}}$ holds. Furthermore, 
\begin{align}
    h_\tau(\lambda) |\langle w, \lambda \rangle| & = \sum_{|\lambda_i| \geq \tau}^d |\lambda_i|  \\ 
    & \geq \| \lambda \|_1 - d \tau \\ 
    & \geq (1-\tau) \| \lambda \|_1,
\end{align}
where we used the fact that $\| \lambda \|_1 =d$ in the last inequality.
Replacing this result into the bound of Eq.~\eqref{eq:chauchy_2} yields 
\begin{align} 
 \rank_\tau(H_\ell^{(\gamma)}) = h_\tau(\lambda) \geq (1-\tau)^2 \| \lambda \|_1^2/ \| \lambda \|_2^2 = (1-\tau)^2 r(H_\ell^{(\gamma)}),
\end{align}
which conludes the proof.
\end{proof}

\section{Initialization consequences} \label{section:init_consequences_appendix}
The particular weight initialization scheme consider through out this work (recall Def.~\ref{definition:weight_init}), imposes an interesting structure in the invariant distribution of the sequence of hidden presentations (defined in Def.~\ref{definition:invariance}). 

\begin{lemma} \label{lemma:symmetricity}
Suppose that the chain $\{H_{\ell}^{(\gamma)}\}_{\ell=1}^\infty$ (defined in Eq.~\ref{eq:chain_app}) admits a unique invariant distribution $\nu_\gamma$ and $H$ is drawn from $\nu_\gamma$; then, the law of $H_{i:}$ equates the law of $-H_{i:}$ where $H_{i:}$ denotes the $i$th row of matrix $H$. 
\end{lemma}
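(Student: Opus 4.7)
The symmetry should fall directly out of two ingredients built into the model: the entrywise symmetry $[W_\ell]_{jk}\stackrel{d}{=}-[W_\ell]_{jk}$ from Def.~\ref{definition:weight_init}, and the row-equivariance of the (mean-free) operator $\bn = \bn_{0,\ones_d}$ under coordinate sign flips. Fix the coordinate $i$ and let $D := I - 2\, e_i e_i^\top \in \R^{d\times d}$ be the diagonal matrix that is $+1$ on the diagonal except for a $-1$ in position $(i,i)$; then $D^2=I$ and left-multiplication by $D$ simply flips the sign of the $i$-th row. My target is to prove that $H\stackrel{d}{=} D H$ whenever $H\sim\nu_\gamma$; marginalising on row $i$ then gives $H_{i:}\stackrel{d}{=}-H_{i:}$.

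I would first record two elementary facts. (i) Because $(D M)(D M)^\top/N = D\,(M M^\top/N)\,D$ has the same diagonal as $M M^\top/N$ (the sign flips cancel on the diagonal), and because diagonal matrices commute, one gets $\bn(D M) = D\,\bn(M)$. (ii) Conjugation by $D$ only flips the signs of the entries in row $i$ and column $i$ of $W_\ell$ (the $(i,i)$-entry is fixed), and since these entries are independent draws from the symmetric law $\P$, $DW_\ell D \stackrel{d}{=} W_\ell$. Using $D^2 = I$ and writing $H = D(D H)$, a one-line computation gives
\[
H+\gamma W H \;=\; D\bigl(\, D H + \gamma\,\tilde W \,(D H)\bigr), \qquad \tilde W := D W D,
\]
and applying (i) yields $\bn(H+\gamma W H) \;=\; D\,\bn\!\bigl(D H + \gamma \tilde W\,(D H)\bigr)$.

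To finish, let $\tilde\nu_\gamma$ denote the pushforward of $\nu_\gamma$ under $H\mapsto D H$, and verify that $\tilde\nu_\gamma$ also satisfies the invariance identity of Def.~\ref{definition:invariance}. If $H'\sim\tilde\nu_\gamma$ then $DH'\sim\nu_\gamma$; drawing $W\sim\mu$ independently of $H'$ and using (ii) plus independence, $\tilde W=DWD$ is again $\mu$-distributed and independent of $DH'$; invariance of $\nu_\gamma$ then gives $\bn(DH'+\gamma \tilde W (DH'))\sim\nu_\gamma$; and the displayed identity (applied with $H=H'$) turns this into $\bn(H'+\gamma W H') \sim \tilde\nu_\gamma$. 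Hence $\tilde\nu_\gamma$ is an invariant distribution for the chain, and the assumed uniqueness forces $\tilde\nu_\gamma=\nu_\gamma$, i.e.\ $H\stackrel{d}{=} DH$ when $H\sim\nu_\gamma$.

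The only genuinely delicate step is the bookkeeping around $W$ versus $\tilde W$ in the last paragraph: one must be careful that $\tilde W$ is drawn from $\mu$ \emph{and} is independent of $D H'$, so that the invariance hypothesis for $\nu_\gamma$ truly applies. Items (i) and (ii), as well as the algebraic rewriting of $H+\gamma W H$, are all direct and carry no hidden technicalities.
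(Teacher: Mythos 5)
Your proposal is correct and follows essentially the same route as the paper's proof: both exploit the sign-flip equivariance of the mean-free $\bn$ operator (the diagonal of $M(H)$ is unchanged under a diagonal $\pm 1$ conjugation, and diagonal matrices commute), the distributional invariance $DWD\stackrel{d}{=}W$ from the entrywise symmetry of $\P$, and the uniqueness of the invariant measure to conclude $DH\stackrel{d}{=}H$. The only cosmetic difference is that the paper works with a general sign-flipping diagonal matrix $S$ while you fix $D=I-2e_ie_i^\top$, and your bookkeeping of the independence of $\tilde W$ from $DH'$ is, if anything, slightly more careful than the paper's.
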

\begin{proof}
Let $S$ be a sign filliping matrix: it is diagonal and its diagonal elements are in $\{+1,-1\}$.   Then $S \W \stackrel{d}{=}\W$ holds for a random matrix $\W$ whose elements are drawn i.i.d. from a symmetric distribution. Let $H$ be drawn from the invariant distribution of the chain denoted by $\nu_\gamma$; Leveraging the invariance property, we get
\begin{align}
   H \stackrel{d}{=}  H_{+} \stackrel{d}{=}  \left(\diag 
    (H_{\sfrac{1}{2}}H_{\sfrac{1}{2}}^\top/N)\right)^{-\sfrac{1}{2}} H_{\sfrac{1}{2}}, \quad H_{\sfrac{1}{2}}:= H + \gamma S \W S H
\end{align}
By multiplying both sides with $S$, we get 
\begin{align} \label{eq:SQplus}
    SH \stackrel{d}{=} S H_+ \stackrel{d}{=} \left(\diag \left( H_{\sfrac{1}{2}} H_{\sfrac{1}{2}}^\top/N \right) \right)^{-\sfrac{1}{2}} \widetilde{H}_{\sfrac{1}{2}}, \quad  \widetilde{H}_{\sfrac{1}{2}} := S H + \gamma  W  S H
\end{align}
Note that we use the fact that diagonal matrices commute in the above derivation.  According to the definition, $S^2 = I$ holds. Considering this fact, we get
\begin{align}
   \diag\left( H_{\sfrac{1}{2}}  H_{\sfrac{1}{2}}^\top\right) & = \diag\left( \left( H + \gamma S \W S H \right) \left(H + \gamma S \W S H \right)^\top \right)  \\
    & = \diag \left(\left( S SH + \gamma S \W S H \right) \left(S SH + \gamma S \W S H \right)^\top \right)\\ 
    & = \diag\left( S\left( SH + \gamma \W S H \right) \left( SH + \gamma \W S H \right)^\top S \right) \\ 
    & = \diag\left(\left( SH + \gamma \W S H \right) \left( SH + \gamma \W S H \right)^\top \right) \\ 
    & = \widetilde{H}_{\sfrac{1}{2}} \widetilde{H}_{\sfrac{1}{2}}^\top
\end{align}
Replacing the above result into Eq.~\eqref{eq:SQplus} yields 
\begin{align} \label{eq:SQplus}
    S H \stackrel{d}{=} S H_+ \stackrel{d}{=} \diag^{-\sfrac{1}{2}} \left( \widetilde{H}_{\sfrac{1}{2}} \widetilde{H}_{\sfrac{1}{2}}^\top/N \right) \widetilde{H}_{\sfrac{1}{2}}, \quad  \widetilde{H}_{\sfrac{1}{2}} := S H + \gamma  W  S H.
\end{align}
Hence the law of $S H$ is invariant too. Since the invariant distribution is assumed to be unique, $S H \stackrel{d}{=} H$ holds and thus $H_{i:} \stackrel{d}{=} - H_{i:}$.

\end{proof}

\paragraph{Comment on \bn{}-centering} 

Let $\nu_\gamma$ be the unique invariant distribution associated with Markov chain $\{H_{\ell}^{(\gamma)}\}$. A straightforward implication of last Lemma is $\E \left[ H_i \right] =0 $ for $H\sim \nu_\gamma$, hence the rows of $H_{\ell}^{(\gamma)}$ are mean-zero, hence their average is close to zero \footnote{When $d$ is sufficiently large and assuming that coordinates in one row are weakly dependent, the central limit theorem implies that the empirical average of the rows converges to zero.} and the mean-zero operation in \bn{} is redundant. Although this theoretical argument is established for linear networks, we empirically observed that \bn{} without  centering also works well on modern neural architectures. For example, Fig.~\ref{fig:mean_deduction_test} shows that the centering does not affect the performance of \bn{} on a VGG net when training CIFAR-10.

\begin{figure}[h!]
    \centering
    \begin{tabular}{c c}
       \includegraphics[width=0.35\textwidth]{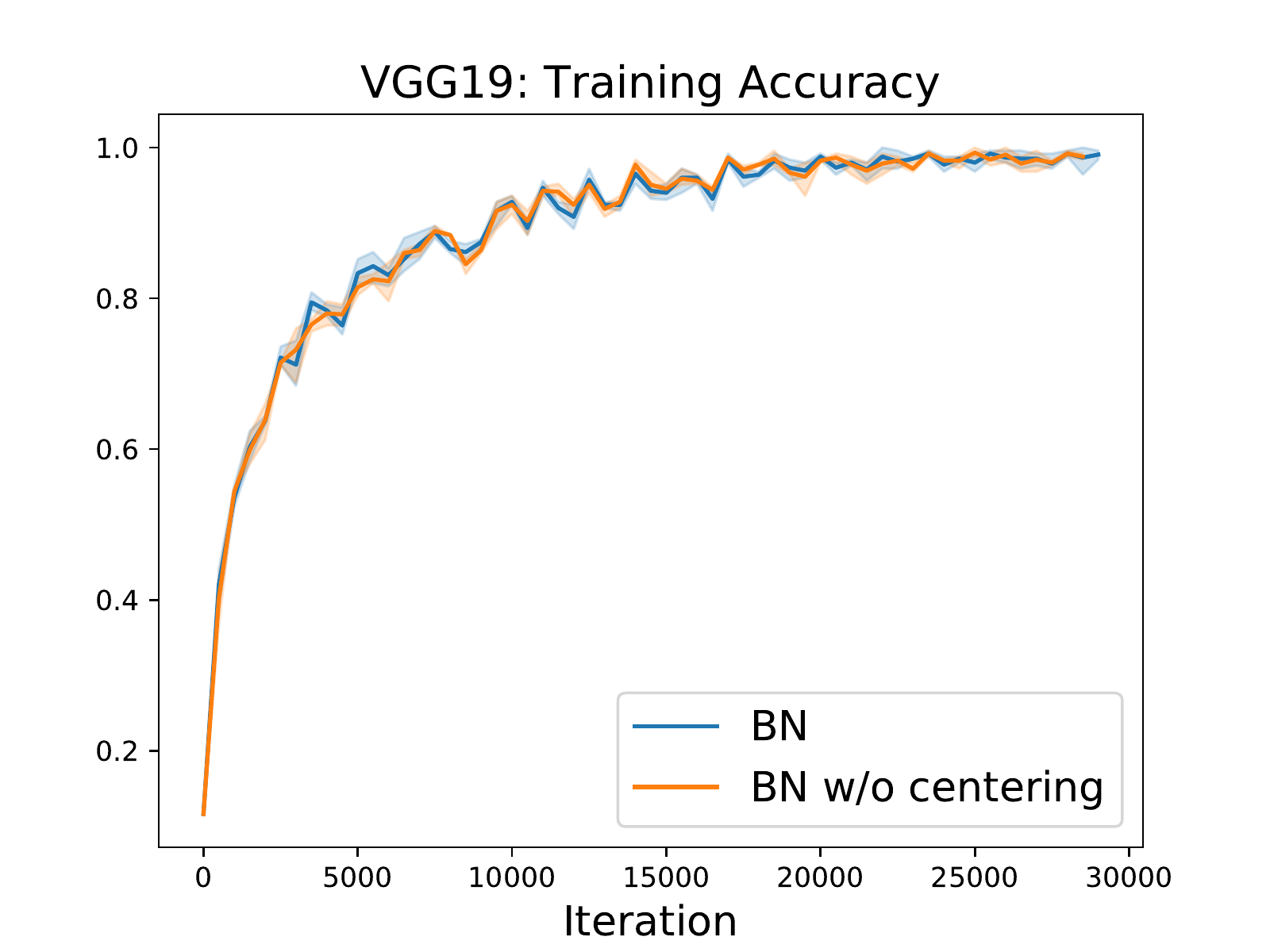}  &  \includegraphics[width=0.35\textwidth]{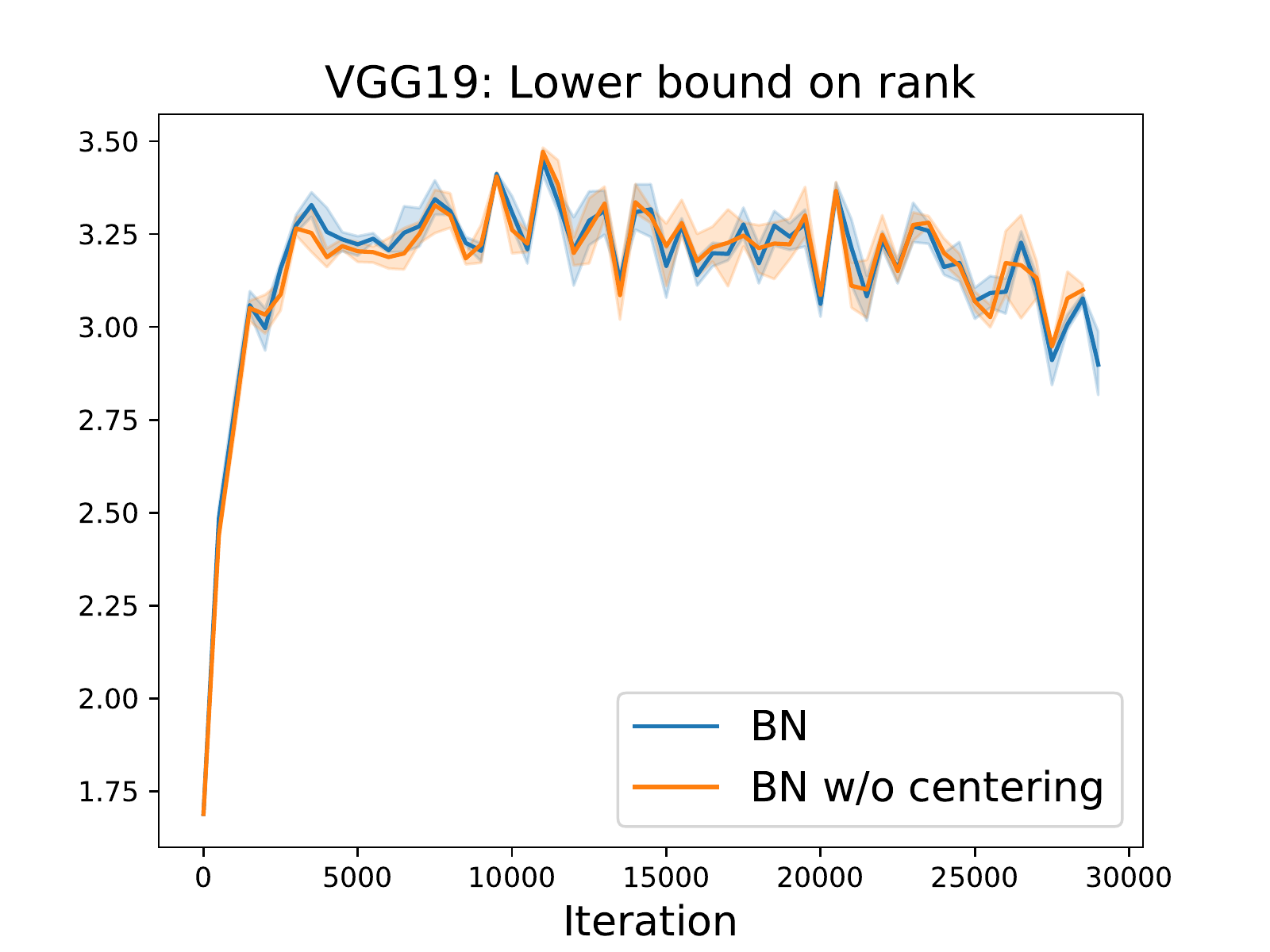}\\
        Training Accuracy & $r(H_L)$
    \end{tabular}
    \caption{Centering for \bn{}. The experiment is conducted on a VGG network. The blue line indicates the original \bn{} network and the orange line is \bn{} without mean adaption. The vertical axis in the left plot is training accuracy. In the right plot it is $r(H_L)$, where $H_L$ is the data representation in the last hidden layer $L$. The horizontal axis indicates the number of iterations.}
    \label{fig:mean_deduction_test}
\end{figure}
\section{Main Theorem: warm-up analysis}

As a warm-up analysis, the next lemma proves that $\rank(H_{\ell}^{(\gamma)}) \geq 2 $ holds. Later, we will prove a stronger result. Yet, this initial results provides valuable insights into our proof technique. Furthermore, we will use the following result in the next steps. 
\begin{lemma}
 \label{lemma:rank2_odd_activations}
 Suppose that each element of the weight matrices is independently drawn from distribution $\P$ that is zero-mean, unit-variance, and its support lies in interval $[-B,B]$.  If the Markov chain $\{ H_\ell \}_{\ell\geq 1}$ admits a unique invariant distribution, then
\begin{align} 
\rank(H_\ell^{(\gamma)}) \geq 2 
\end{align}
holds almost surely for all integers $\ell$ and $\gamma \leq 1/(8d)$. 
\end{lemma}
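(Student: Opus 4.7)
The plan is to argue by contradiction via a conservation-law property of the chain on the rank-one stratum. By property (p.1) in Eq.~\eqref{p1m}, any rank-one $H \in \mathcal{H}$ admits the canonical decomposition $H = \sqrt{N}\, s\, \hat{v}^\top$ with $s \in \{-1,+1\}^d$ and $\hat{v}$ a unit vector in $\mathbb{R}^N$. A direct computation then yields
\begin{align*}
\bn\bigl((I+\gamma W)\, \sqrt{N}\, s\, \hat{v}^\top\bigr) \;=\; \sqrt{N}\, \text{sign}(s + \gamma W s)\, \hat{v}^\top,
\end{align*}
so on the rank-one stratum the dynamics factor into a finite $s$-chain on $\{-1,+1\}^d$ while leaving $\hat{v}$ invariant. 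In other words, $\hat{v}$ is a conserved quantity of the restricted chain.

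Assume for contradiction $\nu_\gamma(\{\rank(H) = 1\}) > 0$. For each direction $\hat{v}_0$ in the support of the conditional law of $\hat{v}$ given $\{\rank(H) = 1\}$, the finite $s$-chain admits at least one invariant measure $\mu_{\hat{v}_0}$ (Markov chains on finite sets always do), whose push-forward $s \mapsto \sqrt{N}\, s\, \hat{v}_0^\top$ is an invariant measure of the full chain concentrated on the single $\hat{v}_0$-fibre. Varying $\hat{v}_0$ over distinct directions (or comparing with an initial condition at a different fibre, in case the conditional support collapses to a single point) produces a family of pairwise distinct invariant measures, contradicting the assumed uniqueness of $\nu_\gamma$. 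Hence $\nu_\gamma$ is concentrated on $\{\rank(H) \geq 2\}$. A rank-preservation argument --- $I + \gamma W$ is almost surely invertible since $\det(I+\gamma W)$ is a non-trivial polynomial in $W$ and the weight law is non-degenerate, while BN merely rescales rows by positive scalars and thus cannot alter rank outside a null event --- then propagates this conclusion to every $\ell \geq 1$ via induction on the chain.

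The main obstacle lies in the disintegration step, where the $\hat{v}$-conservation must be turned into genuinely distinct invariant measures. The sign ambiguity $(s, \hat{v}) \leftrightarrow (-s, -\hat{v})$ forces $\hat{v}$ to live on the quotient $\mathbb{RP}^{N-1}$ rather than the unit sphere, and one needs a measurable selection of the $s$-chain's invariant $\mu_{\hat{v}_0}$ in $\hat{v}_0$ before the family $\{\nu'_{\hat{v}_0}\}$ can be declared invariant for the full chain; the finiteness of $\{-1,+1\}^d$ makes this essentially cosmetic, but it still demands a careful argument. The explicit constant $\gamma \leq 1/(8d)$ appears to enter only through the deterministic operator-norm bound $\gamma \|W\|_{\mathrm{op}} \leq \gamma d B \leq B/8$ (using Popoviciu's inequality $B \geq 1$), which keeps $I + \gamma W$ uniformly non-singular and underwrites the rank-preservation step of the induction.
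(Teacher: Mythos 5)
Your computation on the rank-one stratum is correct, and your observation that these configurations form absorbing fibres with $\hat v$ conserved is essentially the paper's Lemma~\ref{lemma:contradiction_rank2_non_linear} in disguise (there: if $\rank(H_k^{(\gamma)})=1$ then $M(H_\ell^{(\gamma)})$ freezes with all off-diagonal entries in $\{\pm 1\}$). But the argument has a genuine gap in its last step, and it bypasses the ingredient the paper actually relies on. The lemma's conclusion is about the specific trajectory $\{H_\ell^{(\gamma)}\}$ started at $H_0=X$, whereas your contradiction only concerns the support of the invariant measure $\nu_\gamma$. The bridge you propose --- $I+\gamma W$ invertible and $\bn$ a positive row rescaling, hence rank preserved layer to layer --- does not transfer information from $\nu_\gamma$ to the trajectory; it propagates $\rank(H_\ell^{(\gamma)})=\rank(X)$, and this lemma makes no assumption on $\rank(X)$. (If $\rank(X)\geq 2$, that induction alone proves the claim and the invariant-measure machinery does no work; if $\rank(X)=1$, it disproves it.) What is missing is an ergodic-theorem step linking time averages along the trajectory to $\nu_\gamma$-averages --- e.g.\ Birkhoff applied to $\ones(\rank(H)=1)$ combined with your absorption observation would close the gap. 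That is exactly the engine of the paper's proof, which applies Birkhoff to the observable $[M(H)]_{ij}$: the sign-symmetry of $\P$ forces $SH\stackrel{d}{=}H$ under $\nu_\gamma$ (Lemma~\ref{lemma:symmetricity}), hence $\E_{\nu_\gamma}\left[[M(H)]_{ij}\right]=0$ for $i\neq j$, while rank collapse would force the time average of $[M(H_\ell^{(\gamma)})]_{ij}$ to equal $\pm 1$. Your route never uses the symmetry of the weight law at all, which is the crux of the paper's contradiction.

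Two smaller points. First, your fibre construction never actually uses the assumption $\nu_\gamma(\{\rank(H)=1\})>0$: invariant measures supported on distinct rank-one fibres exist unconditionally (every such fibre is forward-invariant), so taken at face value your argument shows the uniqueness hypothesis is never satisfiable on all of $\Hs$, which would make the lemma vacuous rather than prove it; the hypothesis must be read as restricting to the part of the state space reachable from a full-rank $X$. Second, the bound $\gamma\|W\|_{\mathrm{op}}\leq\gamma d B\leq B/8$ does not keep $I+\gamma W$ non-singular unless $B<8$ (Popoviciu gives $B\geq 1$, which points the wrong way); the paper's own proof of Lemma~\ref{lemma:contradiction_rank2_non_linear} quietly uses $\gamma\leq 1/(8Bd)$ rather than the $1/(8d)$ of the statement, and your sign-stability and invertibility claims need the same correction.
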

\begin{proof}
Let the weights $\{ W_\ell\}$ be drawn from the distribution $\mu$, defined in Def.~\ref{definition:weight_init}. Such a distribution obeys an important property: element-wise symmetricity. That is, $[W_{\ell}]_{ij}$ is distributed as $-[W_\ell]_{ij}$. Such an initialization enforces an interesting structural property for the invariant distribution $\nu_\gamma$ that is stated in Lemma~\ref{lemma:symmetricity}. It is easy to check that this implies 
\begin{align} \label{eq:symmetricity_consequence}
    \E \left[ [M(H_{\ell}^{(\gamma)})]_{ij}\right] = -\E \left[ [M(H_{\ell}^{(\gamma)})]_{ij}\right] = 0, 
\end{align}
for any $i \neq j$. Recall, $M(H) = H H^\top/N$. The above property enforces $[M(H)]^2_{ij}$ to be small and hence $\| M_\ell^{(\gamma)}\|_F^2$ is small as well. Now, as $\rank(H_\ell^{(\gamma)})$ is proportional to $1/\| M_{\ell}^{(\gamma)} \|_F^2$ (compare Eq.~(\ref{eq:rH})), it must consequently stay large. The rest of the proof is based on this intuition. Given the uniqueness of the invariant distribution, we can invoke Birkhoff’s Ergodic Theorem for Markov Chains (Theorem 5.2.1 and 5.2.6 \cite{douc2018markov}) which yields 
\begin{align}\label{eq:non_linear_ergod}
    \lim_{L\to \infty}  \frac{1}{L} \sum_{\ell=1}^L [M_{\ell}^{(\gamma)}]_{ij} &= \E_{H \sim \nu_\gamma} \left[ [M(H)]_{ij} \right].
\end{align}
This allows us to conclude the proof by a simple contradiction. Assume that $\rank(H_k^{(\gamma)})$ is indeed one. Then, as established in the following Lemma, in the limit all entries of $M(H_{\ell}^{(\gamma)})$ are constant and either $-1$ or $1$.
\begin{lemma} \label{lemma:contradiction_rank2_non_linear}Suppose the assumptions of Lemma~\ref{lemma:rank2_odd_activations} hold. 
If $\rank(H_k^{(\gamma)})=1$ for an integer $k$, then  $M(H_{\ell}^{(\gamma)}) = M(H_k^{(\gamma)})$ holds for all $\ell>k$. Furthermore, all elements of all matrices $\{M(H_{\ell}^{(\gamma)})\}_{\ell\geq k}$ have absolute value one, hence 
\begin{align} \label{eq:rank2_contradiction_lemma_limit_app}
    \lim_{L\to \infty}  \frac{1}{L} \sum_{\ell=1}^L [M(H_{\ell}^{(\gamma)})]_{ij}  \in \{ 1, -1 \}
\end{align}
holds.
\end{lemma}

As a result, leveraging the ergodicity established in (\ref{eq:non_linear_ergod}), we get that then
\begin{align}
     \E_{H \sim \nu_\gamma} \left[ [M(H)]_{ij} \right] \in \{+1, -1 \}
\end{align}
must also hold. However, this contradicts the consequence of the symmetricity (Eq.~\eqref{eq:symmetricity_consequence}) which states that for any $j\neq i$ we have $ \E_{H \sim \nu_\gamma} \left[ [M(H)]_{ij} \right] = - \E_{H \sim \nu_\gamma} \left[ [M(H)]_{ij} \right] =0$. Thus, the rank one assumption cannot hold, which proves the assertion. 
\end{proof}
To complete the proof of the last theorem, we prove Lemma~\ref{lemma:contradiction_rank2_non_linear}. 
\begin{proof}[Proof of Lemma~\ref{lemma:contradiction_rank2_non_linear}]
 For the sake of simplicity, we omit all superscripts $(\gamma)$ throughout the proof. Suppose that $\rank(H_k) =1$, then $\rank(H_{\ell}) =1 $ for all $\ell \geq k$ as the sequence $\{ \rank(H_{\ell})\}$ is non-increasing~\footnote{Recall that the updates in Eq.~\eqref{eq:nn} is obtained by matrix multiplications, hence it does not increase the rank.}. Invoking the established rank bound from Lemma~\ref{lemma:soft_rank_lowerbound}, we get  
\begin{align} 
    r(H_\ell) = \frac{\tr(M_\ell)^2}{\| M_\ell \|_F^2} \leq \rank(H_\ell) = 1.
\end{align}
 The above inequality together with properties (p.1) and (p.2) (presented in Eqs~\ref{p1m} and ~\ref{p2m}) yield $\tr(M_{\ell}) = d$. Replacing this into the above equation gives that $\| M_\ell \|_F^2 \geq d^2$ must hold for the rank of $H_\ell$ to be one. Yet, recalling property (p.2), this can only be the case if $[M_{\ell}]_{ij} \in \{ +1, -1 \}$ for all $i,j$.
Replacing the definition $M(H) = H H^\top/N$ into updates of hidden presentation in Eq.~\ref{eq:nn} obtains 
\begin{align}
    M_{\ell+1} = \diag\left(M_{\ell + \frac{1}{2}}\right)^{-\sfrac{1}{2}} \left(M_{\ell + \frac{1}{2}}\right) \diag\left(M_{\ell + \frac{1}{2}}\right)^{-\sfrac{1}{2}}
\end{align}
where 
\begin{align}
    M_{\ell + \frac{1}{2}} = M_{\ell} + \Delta M_{\ell}, \quad \Delta M_{\ell} := \gamma W_\ell M_{\ell} + \gamma M_{\ell} W_\ell^\top + \gamma^2 W_\ell M_{\ell} W_{\ell}^\top
\end{align}
 We now prove that the sign of $[M_{\ell}]_{ij}$ and $[M_{\ell+1}]_{ij}$ are the same for $[M_{\ell}]_{ij} \in \{+1,-1\}$. The above update formula implies that the sign of $[M_{\ell+1}]_{ij}$ equates that of $[M_{\ell+\sfrac{1}{2}}]_{ij}$. 
Furthermore, it is easy to check that $|[\Delta M_\ell]_{ij}|\leq 4\gamma B$. For $\gamma \leq 1/(8Bd)$, this bound yields $|[\Delta M_\ell ]_{ij}|\leq \frac{1}{2}$. Therefore, the sign of $[M_{\ell+\sfrac{1}{2}}]_{ij}$ is equal to the one of $[M_{\ell}]_{ij}$. Since furthermore $[M_{\ell+1}]_{ij} \in \{1,-1\}$ holds, we conclude that all elements of $M_\ell$ remain constant for all $\ell \geq k$, which yields the limit stated in Eq.~ \ref{eq:rank2_contradiction_lemma_limit_app} . 
\end{proof}

\section{Main theorem: Proof}\label{sec:main_theorm}

In this section, we prove that \bn{} yields an $\Omega(\sqrt{d})$-rank for hidden representation. 

\begin{proof}[Proof sketch for Thm.~\ref{thm:linear-rank-lowerboun}]
The proof is based on an application of ergodic theory (as detailed for example in Section 5 of \cite{douc2018markov}). In fact, the chain of hidden representations, denoted by $H_\ell^{(\gamma)}$ \eqref{eq:nn}, constitutes a Markov chain in a compact space. This chain admits at least one invariant distribution $\nu$ for which the following holds
\begin{align}
   \int g(\bn_{0,\ones_d}(H + \gamma W H)) \mu(dW) \nu(dH) = \int g(H) \nu(dH),
\end{align}
for every bounded Borel function $g:\R^{d\times d} \to \R^d$. The above invariance property provides an interesting characterization of the invariant measure $\nu$. Particularly, we show in Lemma~\ref{lemma:fnorm_invariance} that 
\begin{align}\label{eq:invariance_sqrt_d}
      \int r(H) \nu(dH) = \Omega(\sqrt{d})
\end{align}
holds, where $r(H)$ is the established lower-bound on the rank (see Lemma~\ref{lemma:soft_rank_lowerbound}). Under weak assumptions, the chain obey Birkhoff's Ergodicity, which yields that the average behaviour of the hidden representations is determined by the invariant measure $\nu$: 
\begin{align}
   \lim_{L \to \infty} \frac{1}{L} \sum_{i=\ell} r(H_{\ell}^{(\gamma)}) = \int r(H) \nu(dH) \overset{\eqref{eq:invariance_sqrt_d}}{=}\Omega(\sqrt{d}). 
\end{align}
Finally, the established lower bound in Lemma~\ref{lemma:soft_rank_lowerbound} allows us to directly extend this result to a lower bound on the soft rank itself. 
\end{proof}

\noindent\textbf{\underline{Characterizing the change in Frobenius norm}}
Recall the established lower bound on the rank denoted by $r(H)$, for which 
\begin{align}
    r(H_\ell) = \frac{\tr(M_\ell)^2}{\| M_{\ell} \|_F^2 } = \frac{d^2}{\| M_\ell \|_F^2}
\end{align}
holds for all $H_{\ell}$ defined in Eq.~\ref{eq:nn}.\footnote{Recall $\tr(M_{\ell}) = d$ holds due to property (p.2) in Eq.~\ref{p2m}} Therefore, $\| M_\ell \|_F^2$ directly influences $\rank_\tau(H_\ell)$ (and also $\rank(H_\ell)$) according to Lemma~\ref{lemma:soft_rank_lowerbound}. Here, we characterize the change in $\|M(H)\|_F^2$ after applying one step of the recurrence in Eq.~\ref{eq:chain_app} to $H$, i.e. passing it trough one hidden layer. This yields
\begin{align}
    H_+ = \left(\diag(M(H_\gamma(W)) \right)^{-\sfrac{1}{2}} H_\gamma(W), \quad  H_\gamma(W)  = (I + \gamma W) H.
\end{align}
Let $M = M(H)$ and $M_+ = M(H_+)$ for simplicity.
The next lemma estimates the expectation (taken over the randomness of $W$) of the difference between the Frobenius norms of $M_+$ and $M$.

\begin{lemma} \label{lemma:Fnorm_expansion}
If $\W \sim \mu$ (defined in Def.~\ref{definition:weight_init}), then
\begin{align} \label{eq:}
     \left(\E_W \| M_+\|_F^2  - \|M \|_F^2\right)/(\gamma^2)= \underbrace{2 d^2 - 2 \| M \|_F^2 - 8 \tr(M^3) + 8 \tr(\diag(M^2)^2)}_{\delta_F(M)} + \bigo(\gamma)
\end{align}
holds as long as the support of distribution $\P$ (in Def.~\ref{definition:weight_init}) lies in a finite interval $[-B,B]$. 
\end{lemma}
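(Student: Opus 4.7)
The plan is a direct Taylor expansion of $\|M_+\|_F^2 = \sum_{ij}[M_+]_{ij}^2 = \sum_{ij}[M_{1/2}]_{ij}^2/(D_{ii}D_{jj})$ to order $\gamma^2$, followed by taking expectation term by term. First I would write
\begin{align*}
M_{1/2} \;=\; (I+\gamma W)\,M\,(I+\gamma W)^\top \;=\; M + \gamma A + \gamma^2 C,
\end{align*}
with $A := WM + MW^\top$ (symmetric) and $C := WMW^\top$. By property (p.1), $M_{ii}=1$, so the diagonal is $D_{ii} = 1 + \gamma A_{ii} + \gamma^2 C_{ii}$, and a geometric expansion yields
\begin{align*}
\frac{1}{D_{ii}D_{jj}} = 1 - \gamma(A_{ii}+A_{jj}) + \gamma^2\bigl(A_{ii}^2+A_{jj}^2+A_{ii}A_{jj}-C_{ii}-C_{jj}\bigr) + O(\gamma^3).
\end{align*}
Multiplying this by $[M_{1/2}]_{ij}^2 = M_{ij}^2 + 2\gamma M_{ij}A_{ij} + \gamma^2(A_{ij}^2+2M_{ij}C_{ij}) + O(\gamma^3)$ gives an explicit polynomial expression for $[M_+]_{ij}^2$ modulo $O(\gamma^3)$.

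Next I would take expectation over $W$. Linear-in-$\gamma$ contributions vanish since they are linear in the zero-mean entries of $W$. For the $\gamma^2$ coefficient, the key moment identities, using $\E[W_{ij}W_{kl}]=\delta_{ik}\delta_{jl}$ together with the symmetry of $M$, are
\begin{align*}
\E[A_{ij}^2] &= (M^2)_{ii} + (M^2)_{jj} + 2\delta_{ij}(M^2)_{ii},\\
\E[A_{ij}A_{ii}] \;=\; \E[A_{ij}A_{jj}] &= 2(M^2)_{ij} + 2\delta_{ij}(M^2)_{ii},\\
\E[A_{ii}A_{jj}] &= 4\,\delta_{ij}(M^2)_{ii}, \qquad \E[C_{ij}] \;=\; d\,\delta_{ij}.
\end{align*}
Substituting these into the $\gamma^2$ coefficient of $[M_+]_{ij}^2$, summing over $(i,j)$, and repeatedly using $M_{ii}=1$ together with $\sum_{ij}M_{ij}(M^2)_{ij}=\tr(M^3)$ and $\sum_i (M^2)_{ii}^2 = \tr(\diag(M^2)^2)$, the various $\pm 2d\|M\|_F^2$ contributions cancel and what remains is exactly $2d^2 - 2\|M\|_F^2 - 8\tr(M^3) + 8\tr(\diag(M^2)^2) = \delta_F(M)$.

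To justify the $O(\gamma)$ remainder I would invoke the bounded support hypothesis $|W_{ij}|\le B$, which together with $|M_{ij}|\le 1$ (property (p.2)) makes $\max_{ij}|A_{ij}|$ and $\max_{ij}|C_{ij}|$ uniformly bounded by constants depending only on $d$ and $B$. Hence for sufficiently small $\gamma$ the Neumann expansion of $(D_{ii}D_{jj})^{-1}$ converges almost surely with a remainder dominated by $c(d,B)\gamma^3$, so expectation commutes with the Taylor expansion and the error in $\E[\|M_+\|_F^2]-\|M\|_F^2$ is of order $\gamma^3$, giving $O(\gamma)$ after dividing by $\gamma^2$. The main obstacle I anticipate is not conceptual but combinatorial: the $\gamma^2$ coefficient of $[M_+]_{ij}^2$ decomposes into six distinct terms and the cancellations producing $\delta_F(M)$ are delicate, so I would organize the bookkeeping by computing each of the six row-sums separately and then combining them.
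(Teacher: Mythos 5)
Your proposal is correct and follows essentially the same route as the paper: a second-order expansion of $[M_+]_{ij}^2=[M_{\sfrac{1}{2}}]_{ij}^2/([M_{\sfrac{1}{2}}]_{ii}[M_{\sfrac{1}{2}}]_{jj})$ in $\gamma$, with the first-order terms killed by $\E[W]=0$, the second-order terms evaluated via $\E[W_{ij}W_{kl}]=\delta_{ik}\delta_{jl}$ (your moment identities for $A=WM+MW^\top$ and $C=WMW^\top$ coincide with the paper's $K_{\alpha,\beta}$ table and $\E[\Delta M_{pq}]$), and the bounded support used to control the remainder. The only difference is presentational — you Neumann-expand the denominator directly in $\gamma$ where the paper writes a multivariate Taylor expansion of $g_{ij}$ in the entries $M_{pq}$, $pq\in\{ii,ij,jj\}$ — and the final summation over $i\neq j$ does produce the claimed cancellations yielding $\delta_F(M)$.
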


The proof of the above lemma is based on a Taylor expansion of the \bn{} non-linear operator. We postpone the detailed proof to the end of this section. While the above equation seems complicated at first glance, it provides some interesting insights.

\noindent\textbf{Interlude: Intuition behind Lemma~\ref{lemma:Fnorm_expansion}.}
In order to gain more understanding of the implications of the result derived in Lemma~\ref{lemma:Fnorm_expansion}, we make the simplifying assumption that all the rows of matrix $M$ have the same norm. We emphasize that this assumption is purely for intuition purposes and is not necessary for the proof of our main theorem. Under such an assumption, the next proposition shows that the change in the Frobenius norm directly relates to the spectral properties of matrix $M$. 

\begin{proposition} \label{proposition:spectral_presentation}
Suppose that all the rows of matrix $M$ have the same norm. Let $\lambda \in \R^d$ contain the eigenvalues of matrix $M$. Then, 
\begin{align}
    \tr(M^3) = \| \lambda \|^3_3,\quad  \tr(\diag(M^2))^2 = \| \lambda \|^4/d, \quad  \| M \|_F^2 = \| \lambda \|^2_2
\end{align}
holds and hence 
\begin{align}
    \delta_F(M) = \delta_F(\lambda) := 2d^2 -2 \| \lambda \|_2^2 - 8 \| \lambda \|^3_3 + 8 \| \lambda \|^4/d.
\end{align}
\end{proposition}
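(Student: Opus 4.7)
The plan is that this is really a reorganization of trace identities, since $M=HH^\top/N$ is symmetric positive semi-definite by construction, and so admits an orthonormal eigendecomposition $M=U\Lambda U^\top$ with $\Lambda=\diag(\lambda)$ and $\lambda\in\R_{\geq 0}^d$. Hence for every positive integer $k$, $\tr(M^k)=\sum_i \lambda_i^k$, and because the $\lambda_i$ are nonnegative this is $\|\lambda\|_k^k$. Two of the three identities in the statement fall out of this with no further assumption: $\|M\|_F^2=\tr(M^\top M)=\tr(M^2)=\|\lambda\|_2^2$, and $\tr(M^3)=\sum_i \lambda_i^3=\|\lambda\|_3^3$.

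The only identity that actually uses the hypothesis that all rows of $M$ have the same norm is $\tr(\diag(M^2)^2)=\|\lambda\|_2^4/d$. For this I would expand $(M^2)_{ii}=\sum_j M_{ij}^2=\|M_{i:}\|_2^2$, which is precisely the squared norm of the $i$-th row of $M$. Under the equal-row-norm hypothesis the value $(M^2)_{ii}$ does not depend on $i$, so it must equal the average
\begin{equation*}
(M^2)_{ii}=\frac{1}{d}\sum_{j=1}^d (M^2)_{jj}=\frac{\tr(M^2)}{d}=\frac{\|\lambda\|_2^2}{d}.
\end{equation*}
Then $\tr(\diag(M^2)^2)=\sum_i ((M^2)_{ii})^2=d\cdot(\|\lambda\|_2^2/d)^2=\|\lambda\|_2^4/d$, as claimed.

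Finally, plugging the three identities into the definition
\begin{equation*}
\delta_F(M)=2d^2-2\|M\|_F^2-8\tr(M^3)+8\tr(\diag(M^2)^2)
\end{equation*}
from Lemma~\ref{lemma:Fnorm_expansion} immediately yields $\delta_F(M)=2d^2-2\|\lambda\|_2^2-8\|\lambda\|_3^3+8\|\lambda\|_2^4/d=\delta_F(\lambda)$. There is no real obstacle here; the proposition is essentially a bookkeeping statement, and the only non-trivial observation is the elementary averaging argument used to handle the $\diag(M^2)$ term. If anything subtle could arise, it would only be verifying that the equal-row-norm hypothesis is used \emph{exclusively} in that one place, which is worth stating explicitly so that the reader can see the other two identities are structural consequences of $M$ being a symmetric Gram matrix.
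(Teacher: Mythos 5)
Your proof is correct and follows essentially the same route as the paper's: the spectral decomposition gives the two trace identities, and the equal-row-norm hypothesis is used only to replace each diagonal entry of $M^2$ (the squared row norm) by the average $\tr(M^2)/d=\|\lambda\|_2^2/d$ before summing the squares. Your explicit remarks that positive semi-definiteness justifies writing $\sum_i\lambda_i^3$ as $\|\lambda\|_3^3$ and that the hypothesis enters in exactly one place are worthwhile clarifications of details the paper leaves implicit.
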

We postpone the proof to the end of this section. This proposition re-expresses the polynomial of Lemma \ref{lemma:Fnorm_expansion} in terms of the eigenspectrum of $M$. 

Based on the above proposition, we can make sense of interesting empirical observation reported in Figure~\ref{fig:markov_chain_d}.b. This figure plots the evolution of the eigenvalues of $M(H_\ell^{(\gamma)})$ after starting from a matrix $M(H_0)$ whose leading eigenvalue is large and all other eigenvalues are very small. We observe that a certain fraction of the small eigenvalues of $M(H_\ell^{(\gamma)})$ grow quickly with $\ell$, while the leading eigenvalue is decreases in magnitude. In the next example, we show that the result of the last proposition actually predicts this observation.
 \begin{example} \label{exm:eigv}
Suppose that $M$ is a matrix whose rows have the same norm. Let $\lambda_1 \geq \lambda_2, \dots, \lambda_d$ be the eigenvalues associated with the matrix $M$ such that $\lambda_d = \lambda_{d-1} = \lambda_2 = \gamma^2$ and $\lambda_1 = d-\gamma^2(d-1)$. In this setting, Prop.~\ref{proposition:spectral_presentation} implies that  $\E_W\|M_+ \|_F^2 < \| M \|_F^2- \gamma^4 d^2$ for a sufficiently small $\gamma$. This change has two consequences in expectation:(i.) the leading eigenvalue of $M_+$ is $\bigo(-\gamma^4 d)$ smaller than the leading eigenvalue of $M$, and (ii.) some small eigenvalues of $M_+$ are greater than those of $M$ (see Fig.~\ref{fig:markov_chain_d}.b).
\end{example}

We provide a more detailed justification for the above statement at the end of this section. 
This example illustrates that the change in Frobenius norm (characterized in Lemma~\ref{lemma:Fnorm_expansion}) can predict the change in the eigenvalues of $M(H_{\ell}^{(\gamma)})$ (singular values of $H_\ell^{(\gamma)}$) and hence the desired rank. Inspired by this, we base the proof of Theorem~\ref{thm:linear-rank-lowerboun} on leveraging the invariance property of the unique invariant distribution with respect to Frobenius norm -- i.e. setting $g(H) = \| M(H) \|_F^2$ in Def.~\ref{definition:invariance}.

\noindent\textbf{\underline{An observation: regularity of the invariant distribution}} We now return to the result derived in Lemma~\ref{lemma:Fnorm_expansion} that characterizes the change in Frobenius norm of $M(H)$ after the recurrence of Eq.~\eqref{eq:chain_app}. We show how such a result can be used to leverage the invariance property with respect to the Frobenius norm. First, we observe that the term $\tr(M(H)^3)$ in the expansion can be shown to dominate the term $\tr(\diag(M(H)^2)^2)$ in expectation. The next definition states this dominance formally. 

\begin{definition} \label{def:regularity}
(Regularity constant $\alpha$)
Let $\nu$ be a distribution over $H \in \Hs$. Then the regularity constant associated with $\nu$ is defined as the following ratio:
\begin{align}
    \alpha = \E_{H \sim \nu} \left[ \tr\left(\diag(M(H)^2)^2 \right)\right]/\left( \E_{H \sim \nu} \left[ \tr\left(M(H)^3\right) \right]\right).
\end{align}
\end{definition}

The next lemma states that the regularity constant $\alpha$ associated with the invariant distribution $\nu_\gamma$ is always less than one. Our analysis will in fact directly rely on $\alpha<1$.

\begin{lemma} \label{lemma:regularity}
Suppose that the chain $\{ H_\ell^{(\gamma)}\}$ admits the unique invariant distribution $\nu_\gamma$ (in Def.~\ref{definition:invariance}). Then, the regularity constant of $\nu_\gamma$ (in Def.~\ref{def:regularity}) is less than one for a sufficiently small $\gamma$. 
\end{lemma}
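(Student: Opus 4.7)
The key idea is to exploit the invariance of $\nu_\gamma$ with the test function $g(H)=\|M(H)\|_F^2$, combined with the Taylor expansion of Lemma~\ref{lemma:Fnorm_expansion}. Applying Def.~\ref{definition:invariance} to $g$ yields
\begin{equation*}
\E_{H\sim \nu_\gamma}\E_W\!\left[\|M_+\|_F^2 - \|M\|_F^2\right]=0.
\end{equation*}
Because $M$ lies in the compact set of PSD matrices with $M_{ii}=1$ and $|M_{ij}|\le 1$, the $O(\gamma^3)$ remainder in Lemma~\ref{lemma:Fnorm_expansion} is uniform in $M$. Dividing by $\gamma^2$, taking expectation under $\nu_\gamma$, and rearranging the explicit form of $\delta_F$ produces the key identity
\begin{equation*}
\E_{\nu_\gamma}\!\left[\tr(M^3)\right]-\E_{\nu_\gamma}\!\left[\tr(\diag(M^2)^2)\right]=\frac{d^2-\E_{\nu_\gamma}\!\left[\|M\|_F^2\right]}{4}+O(\gamma).
\end{equation*}

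\textbf{From the identity to $\alpha<1$.} The denominator of $\alpha$ is strictly positive, since $\tr(M^3)=\sum_i \lambda_i^3>0$ for $M\succeq 0$ with $\tr(M)=d>0$, so $\E_{\nu_\gamma}[\tr(M^3)]>0$. Hence $\alpha<1$ reduces to showing the right-hand side of the identity is positive. By properties (p.1)--(p.2), $\|M\|_F^2\le d^2$ with equality iff $|M_{ij}|=1$ for every $i,j$, which together with $M\succeq 0$ forces rank one. Lemma~\ref{lemma:rank2_odd_activations} precludes rank-one $M$ $\nu_\gamma$-almost surely, so $\E_{\nu_\gamma}[\|M\|_F^2]<d^2$ strictly, and $\alpha<1$ follows once this gap dominates the $O(\gamma)$ term.

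\textbf{Main obstacle.} The delicate step is converting the qualitative inequality $\E_{\nu_\gamma}[\|M\|_F^2]<d^2$ into a quantitative, $\gamma$-uniform gap that absorbs the $O(\gamma)$ correction for small $\gamma$. This is subtle because at $\gamma=0$ the dynamics trivialise (every element of $\Hs$ is a fixed point of $\bn$), so Lemma~\ref{lemma:rank2_odd_activations} relies essentially on $\gamma>0$ and degenerates in the limit. A natural path is a Prokhorov compactness argument on the invariant measures, combined with the element-wise symmetry $\E_{\nu_\gamma}[M_{ij}]=0$ from Lemma~\ref{lemma:symmetricity} --- which passes to any weak limit $\nu^{*}$ and excludes concentration on the balanced $\pm 1$ rank-one matrices that would otherwise saturate $\|M\|_F^2=d^2$. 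A useful refinement is to note that the distribution of $W$ is symmetric, so all odd-order terms in the Taylor expansion of $\|M_+\|_F^2$ vanish in expectation; the remainder in the key identity can therefore be sharpened to $O(\gamma^2)$, making the quantitative gap easier to secure.
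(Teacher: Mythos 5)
Your proposal is essentially the paper's own argument, merely rearranged: the paper proves the same statement by contradiction (assume $\alpha\ge 1$, combine Lemma~\ref{lemma:Fnorm_expansion} with the pointwise bound $\|M(H)\|_F^2< d^2$ that follows from the rank-$\ge 2$ guarantee of Lemma~\ref{lemma:rank2_odd_activations}, and conclude that $\nu_\gamma$ could not be invariant for $g(H)=\|M(H)\|_F^2$), whereas you solve the invariance identity directly for $\E[\tr(M^3)]-\E[\tr(\diag(M^2)^2)]$ and read off its sign --- the same ingredients in the same roles. The one substantive remark is that the ``main obstacle'' you flag (turning the strict inequality $\E_{\nu_\gamma}[\|M\|_F^2]<d^2$ into a gap that is uniform in $\gamma$ and dominates the $O(\gamma)$ remainder, given that $\nu_\gamma$ itself varies with $\gamma$) is a genuine subtlety that the paper's proof silently elides with the phrase ``for a sufficiently small $\gamma$''; your compactness/symmetry sketch for closing it is a reasonable route, so your write-up is, if anything, more candid than the original on this point.
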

\begin{proof}
We use a proof by contradiction where we suppose that the regularity constant of distribution $\nu_\gamma$ is greater than one. In this case, we prove that the distribution cannot be invariant with respect to the Frobenius norm.

If the regularity constant $\alpha$ is greater than one, then  
\begin{align} \label{eq:alpha_greater_one}
   \E_{H \sim \nu_\gamma} \left[ -\tr(M(H)^3) + \tr(\diag(M(H)^2)^2) \right]  \geq 0
\end{align}
holds.
According to Theorem~\ref{lemma:rank2_odd_activations}, the rank of $M(H)$ is at least 2. Since the sum of the eigenvalues is constant $d$, the leading eigenvalue is less than $d$. This leads to \[\| M(H)\|_F^2 = \sum_{i} \lambda_i^2 \leq \max_i \lambda_i \left( \sum_{j} \lambda_j \right) \leq d \max_i \lambda_i < d^2.\] Plugging the above inequality together with inequality~\ref{eq:alpha_greater_one} into the established bound in Lemma~\ref{lemma:Fnorm_expansion} yields 
\begin{align}
    \E_{W, H \sim \nu_\gamma} \left[ \| M(H_+) \|^2_F - \| M(H)\|_F^2 \right] > 0 
\end{align}
for a sufficiently small $\gamma$. Therefore, $\nu_\gamma$ does not obey the invariance property for $g(H) = \| M(H)\|_F^2$ in Def.~\ref{definition:invariance}. 
\end{proof}

We can experimentally estimate the regularity constant $\alpha$ using the Ergodicity of the chain. Assuming that the chain is Ergodic\footnote{The uniqueness of the invariant distribution implies Ergodicity (see Theorem 5.2.1 and 5.2.6 \cite{douc2018markov}).}, 
\begin{align} \label{eq:ergodic_averaging}
    \lim_{L\to \infty} \frac{1}{L} \sum_{\ell=1}^L g(H_\ell^{(\gamma)}) = \E_{H \sim \nu_\gamma} \left[ g(H) \right]  
\end{align}
holds almost surely for every Borel bounded function $g: \Hs \to \R$. By setting $g_1(H) = \tr(M(H)^3)$ and $g_2(H) = \tr(\diag(M(H)^2)^2)$, we can estimate $\E_{H \sim \nu_\gamma} \left[g_i(H)\right] $ for $i=1,$ and 2. Given these estimates, $\alpha$ can be estimated. Our experiments in Fig.~\ref{fig:regualrity} show that the regularity constant of the invariant distribution $\nu_\gamma$ is less than $0.9$ for $d>10$. 
\begin{figure}[h!]
    \centering
    \begin{tabular}{c c}
        \includegraphics[width=0.4\textwidth]{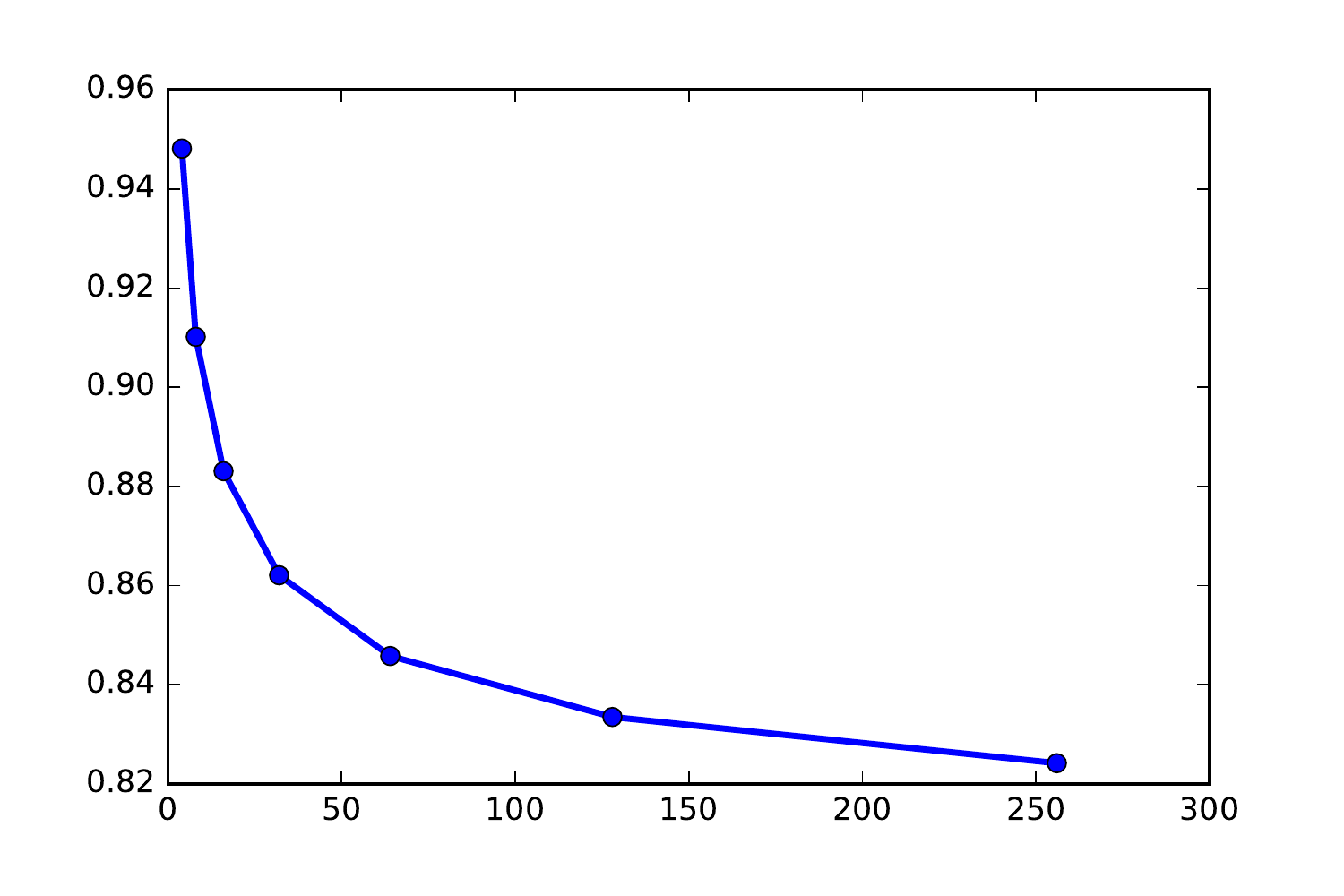} & \includegraphics[width=0.4\textwidth]{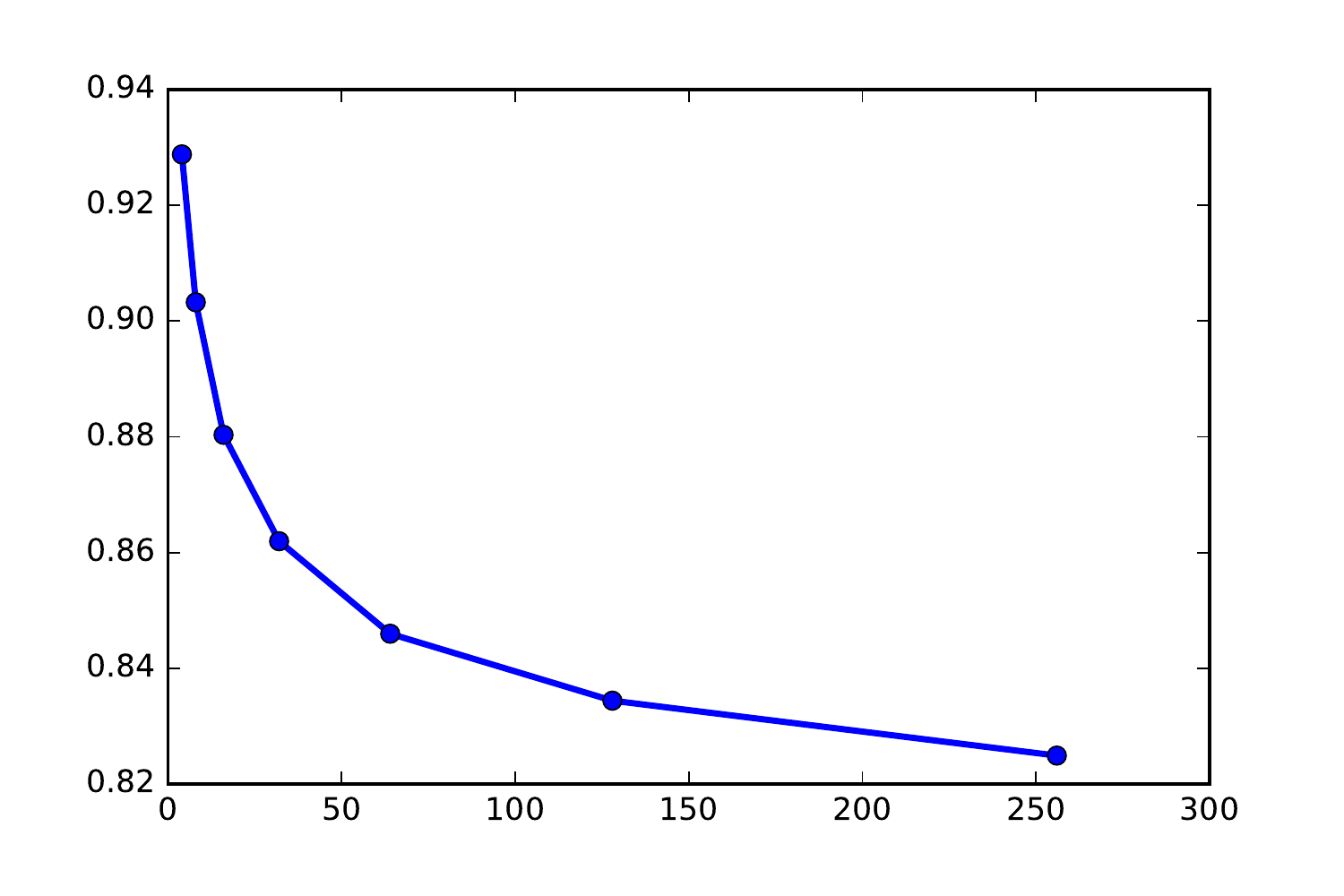} \\
         $\gamma=1$ &  $\gamma = 0.1$ 
    \end{tabular}
    
    \caption{Regularity constant of the invariant distribution. The vertical axis is the estimated regularity constant $\alpha$ and the horizontal axis is $d$. We use $L=10^{5}$ (in Eq.~\eqref{eq:ergodic_averaging}).}
    \label{fig:regualrity}
\end{figure}

\noindent\textbf{Interlude: intuition behind the regularity}
We highlight the regularity constant does by itself not yield the desired rank property in Theorem~\ref{thm:linear-rank-lowerboun}. This is illustrated in the next example that shows how the regularity constant relates to the spectral properties of $M(H)$.

\begin{example} \label{exm:regualrity}
Suppose that the support of distribution $\nu$ contains only matrices $H \in \Hs$ for which all rows of $M(H)$ have the same norm. If the regularity constant of $\nu$ is greater than or equal to one, then all non-zero eigenvalues of matrix $M(H)$ are equal. 
\end{example}

A detailed justification of the above statement is presented at the end of this section. This example shows that the regularity constant does not necessarily relate to the rank of $H$, but instead it is determined by how much non-zero eigenvalues are close to each other. We believe that a sufficient variation in non-zero eigenvalues of $M(H)$ imposes the regularity of the law of $H$ with a constant less than one (i.e. $\alpha<1$ in Def.~\ref{def:regularity}). The next example demonstrates this. 

\begin{example} \label{exm:regularity_variation}
Suppose the support of distribution $\nu$ contains matrices $H \in \Hs$ for which all rows of $M(H)$ have the same norm. Let $\lambda\in \R^d$ contain sorted eigenvalues of $M(H)$. If $\lambda_1 = \Theta(d^{\beta})$ and $\lambda_i = o(d^\beta)$ for $i>1$ and $\beta<1$,\footnote{According to definition, $\lim_{d\to \infty} o(d^\beta)/\Theta(d^{\beta}) = 0 $} then the regularity constant $\alpha$ associated with $\nu$ is less than $0.9$ for sufficiently large $d$.  
\end{example}
We later provide further details about this example. 

\noindent\textbf{\underline{Invariance consequence}}
The next lemma establishes a key result on the invariant distribution $\nu_\gamma$. 
\begin{lemma} \label{lemma:fnorm_invariance}
Suppose that the chain $ \{H_{\ell}^{(\gamma)}\}$ (see Eq.~\ref{eq:chain_app}) admits the invariant distribution $\nu_\gamma$ (see Def.~\ref{definition:invariance}). If the regularity constant associated with $\nu_\gamma$ is $\alpha<1$ (defined in Def.~\ref{def:regularity}), then 
\begin{align}
    \E_{H \sim \nu_\gamma}  \left[ \| M(H)\|_F^2 \right] \leq d^{\sfrac{3}{2}}/\sqrt{1-\alpha}
\end{align}
holds for a sufficiently small $\gamma$. 
\end{lemma}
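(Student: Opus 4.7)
The plan is to leverage the invariance of $\nu_\gamma$ against the test function $g(H) = \|M(H)\|_F^2$ (which is bounded on $\Hs$ since property (p.2) gives $|M_{ij}|\le 1$ and hence $\|M\|_F^2\le d^2$). Invariance then reads
\begin{align*}
\E_{H\sim\nu_\gamma}\E_{W\sim\mu}\bigl[\|M(H_+)\|_F^2\bigr] \;=\; \E_{H\sim\nu_\gamma}\bigl[\|M(H)\|_F^2\bigr],
\end{align*}
so the inner $W$-expectation difference, characterized by Lemma~\ref{lemma:Fnorm_expansion}, must integrate to zero. Dividing by $\gamma^2$ and integrating the expansion against $\nu_\gamma$ yields
\begin{align*}
\E_{H\sim\nu_\gamma}\bigl[\delta_F(M(H))\bigr] \;=\; O(\gamma),
\end{align*}
and crucially this $O(\gamma)$ is uniform in $H$ because $M(H)$ and $W$ both have uniformly bounded entries on the support of $\nu_\gamma\otimes\mu$.

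Next I would expand $\delta_F$ and bring in the regularity constant. Setting $x := \E_{H\sim\nu_\gamma}[\|M(H)\|_F^2]$ and using $\tr M = d$ together with Definition~\ref{def:regularity} (so that $\E[\tr(\diag(M^2)^2)] = \alpha\,\E[\tr(M^3)]$), the previous display becomes
\begin{align*}
2d^2 - 2x - 8(1-\alpha)\,\E_{H\sim\nu_\gamma}\bigl[\tr(M(H)^3)\bigr] \;=\; O(\gamma).
\end{align*}
Since $M(H)\succeq 0$, both $x\ge 0$ and the trace-cube term are non-negative, and $(1-\alpha)>0$ by hypothesis, so we may drop $2x$ and rearrange to
\begin{align*}
8(1-\alpha)\,\E_{H\sim\nu_\gamma}\bigl[\tr(M(H)^3)\bigr] \;\le\; 2d^2 + O(\gamma).
\end{align*}

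The final ingredient is a lower bound on $\tr(M^3)$ in terms of $\|M\|_F^2$. Writing the eigenvalues of $M$ as $\lambda_1,\ldots,\lambda_d\ge 0$ with $\sum_i\lambda_i = d$, Cauchy--Schwarz gives
\begin{align*}
\|M\|_F^4 \;=\; \Bigl(\sum_i\lambda_i^2\Bigr)^{\!2} \;=\; \Bigl(\sum_i \lambda_i^{3/2}\cdot\lambda_i^{1/2}\Bigr)^{\!2} \;\le\; \Bigl(\sum_i\lambda_i^3\Bigr)\Bigl(\sum_i\lambda_i\Bigr) \;=\; d\,\tr(M^3),
\end{align*}
so $\tr(M^3)\ge \|M\|_F^4/d$ pointwise, and Jensen then gives $\E[\tr(M^3)] \ge x^2/d$. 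Substituting into the previous inequality yields
\begin{align*}
\frac{8(1-\alpha)}{d}\,x^2 \;\le\; 2d^2 + O(\gamma),
\end{align*}
whence $x \le d^{3/2}/(2\sqrt{1-\alpha}) + O(\sqrt{\gamma})$, which for sufficiently small $\gamma$ is comfortably below the claimed bound $d^{3/2}/\sqrt{1-\alpha}$.

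The main obstacle I anticipate is the careful handling of the $O(\gamma)$ remainder from Lemma~\ref{lemma:Fnorm_expansion}: we must verify that the implicit constant can be made uniform across $H$ in the support of $\nu_\gamma$, so that Fubini applies when we integrate the expansion against $\nu_\gamma$. This uniformity follows from (p.1)--(p.2), which bound $M(H)$ entrywise, together with the compact support $[-B,B]$ of $\P$; both ensure that the higher-order Taylor terms of the batch-norm nonlinearity are controlled by an absolute constant times $\gamma^3$ independently of $H$. Once uniformity is in place, the rest of the argument is the elementary chain of inequalities above.
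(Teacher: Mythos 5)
Your proposal is correct and follows essentially the same route as the paper's proof: invariance of $\nu_\gamma$ applied to $g(H)=\|M(H)\|_F^2$, the expansion from Lemma~\ref{lemma:Fnorm_expansion}, the regularity constant to absorb the $\tr(\diag(M^2)^2)$ term, and the Cauchy--Schwarz bound $\|\lambda\|_2^4\le \|\lambda\|_1\,\|\lambda\|_3^3$. The only (immaterial) difference is that you apply Jensen to the convex map $x\mapsto x^2$ on $\E[\|M\|_F^2]$, whereas the paper applies it to the concave square root of $\E[\|\lambda\|_3^3]$; both yield the stated bound.
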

\begin{proof}
Leveraging invariance property in Def.~\ref{definition:invariance}, \begin{align}
    \E_{W,H \sim \nu_\gamma} \left[ \|M(H_+) \|_F^2 - \| M(H) \|_F^2 \right] = 0 
\end{align}
holds where the expectation is taken with respect to the randomness of $W$ and $\nu_\gamma$.\footnote{This result is obtained by setting $g(H)= \|M(H)\|_F^2$ in Def.~\ref{definition:invariance}.} Invoking the result of Lemma~\ref{lemma:Fnorm_expansion}, we get 
\begin{align}
    \E_{H \sim \nu_\gamma}  \left[ 2 d^2 - 2 \| M(H) \|_F^2 - 8 \tr(M(H)^3) + 8 \tr(\diag(M(H)^2)^2) \right] + \bigo(\gamma) = 0.
\end{align}
Having 
a regularity constant less than one for $\nu_\gamma$ implies 
\begin{align}
  0 \leq   2 d^2 - \E_{H \sim \nu_\gamma}  \left[ 2 \| M(H) \|_F^2 - 8(1-\alpha) \tr(M(H)^3)\right] 
\end{align}
holds for sufficiently small $\gamma$. Let $\lambda \in \R^d$ be a random vector containing the eigenvalues of the random matrix $M(H)$.\footnote{Note that $H \in \Hs$ is a random matrix whose law is $\nu_\gamma$, hence $\lambda \in \R^d$ is also a random vector.} The eigenvalues of $M^3$ are $\lambda^3$, hence the invariance result can be written alternatively as 
\begin{align}
    0 \leq   2 d^2 - \E \left[ 2 \| \lambda \|_2^2 - 8(1-\alpha) \| \lambda \|_3^3 \right]. 
\end{align}
The above equation leads to the following interesting spectral property: 
\begin{align} \label{eq:invariance_result_lambda_3}
     \E \| \lambda \|_3^3  \leq  d^2/(1-\alpha).
\end{align}
A straightforward application of Cauchy-schwarz yields:
\begin{align}
     \| \lambda \|_2^2 & = \sum_{i} \lambda_i^2 = \sum_{i} \lambda_i^{\sfrac{1}{2}} \lambda_i^{\sfrac{3}{2}} \leq \sqrt{\sum_{i}\lambda_i  \sum_{j} \lambda_i^3}  \leq 
    \sqrt{d \| \lambda \|_3^3}
\end{align}
Given (i) the above bound, (ii) an application of Jensen's inequality, (iii) and the result of Eq.~\eqref{eq:invariance_result_lambda_3}, we conclude with the desired result: 
\begin{align}
    \E_{H \sim \nu_\gamma} \left[ M(H)\right] = \E \left[ \| \lambda \|_2^2 \right] 
    \stackrel{\text{(i)}}{\leq}  \E \sqrt{d \| \lambda \|_3^3}  \stackrel{\text{(ii)}}{\leq} \sqrt{d \E \| \lambda \|_3^3} \stackrel{\text{(iii)}}{\leq} d^{\sfrac{3}{2}}/\sqrt{1-\alpha}
\end{align}
\end{proof}

Notably, the invariant distribution is observed to have a regularity constant less than $0.9$ (in Fig.~\ref{fig:regualrity}) for sufficiently large $d$. This implies that an upper-bound $\bigo\left( d^{\sfrac{3}{2}}\right)$ is achievable on the Frobenius norm. Leveraging Ergodicity  (with respect to Frobenius norm in Eq.~\eqref{eq:ergodic_averaging}), we experimentally validate the result of the last lemma in Fig.~\ref{fig:fnorm_check}. 
\begin{figure}[h!]
    \centering
    \begin{tabular}{c c}
      \includegraphics[width=0.4\textwidth]{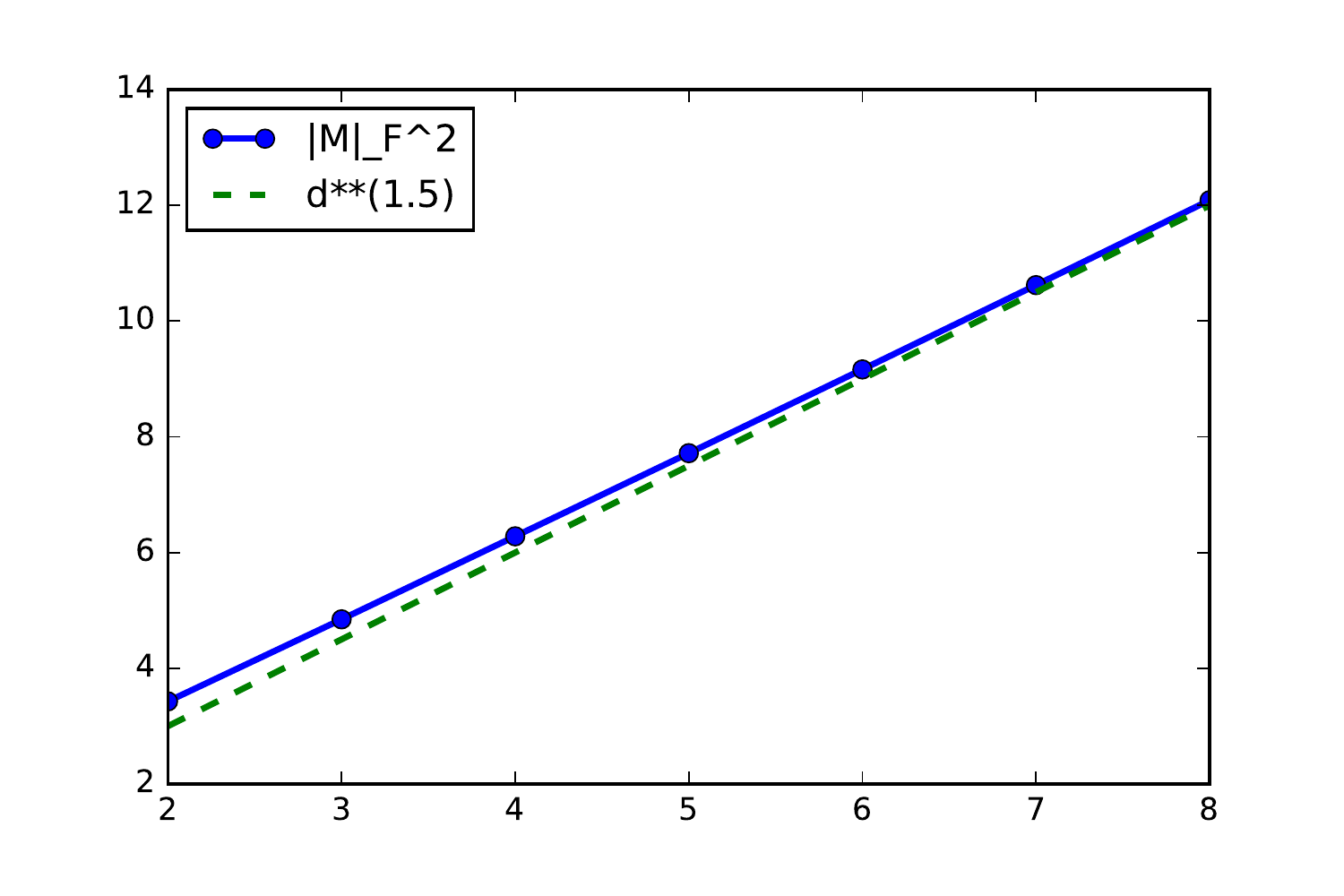}   & \includegraphics[width=0.4\textwidth]{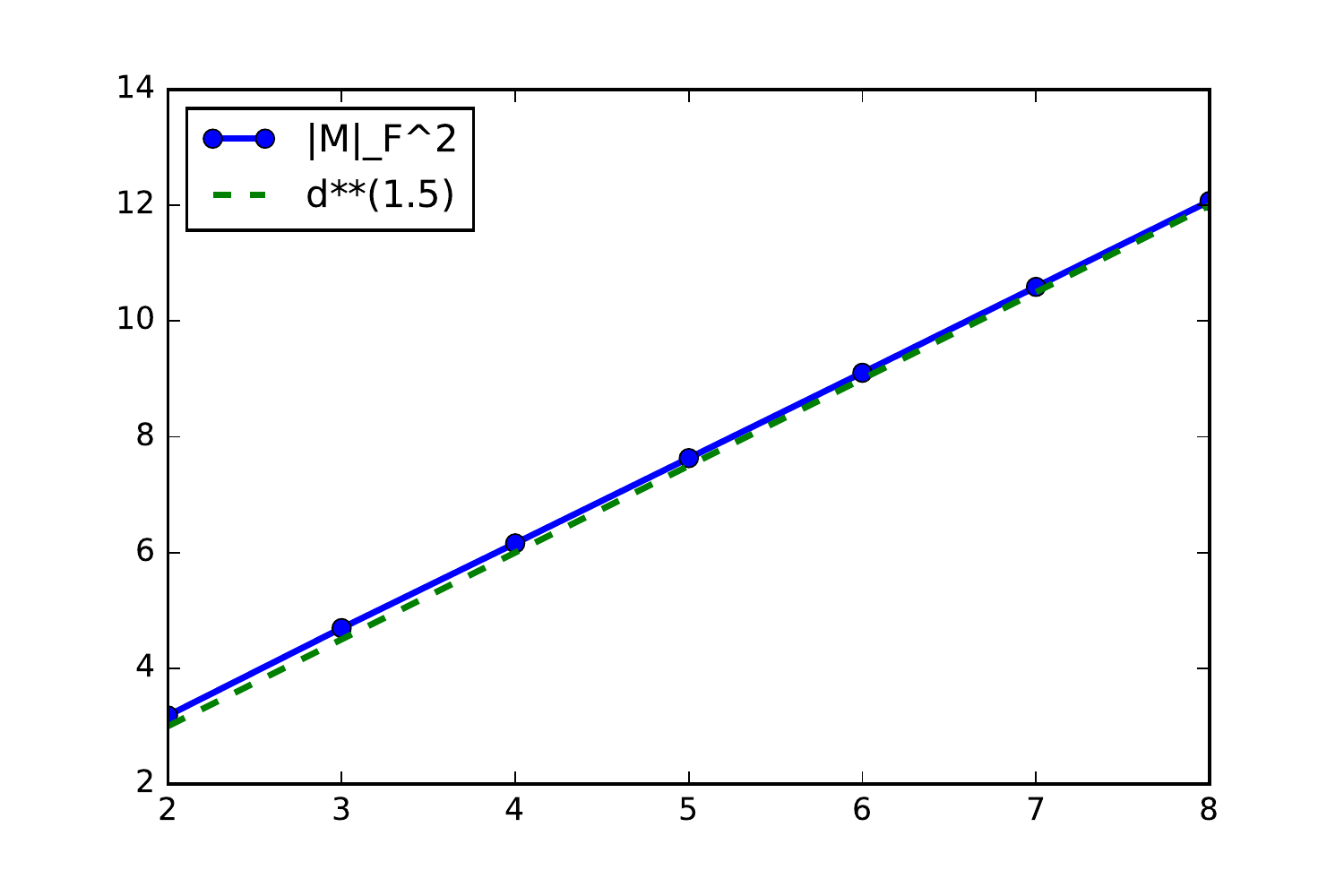} \\
        $\gamma = 1$ & $\gamma = 0.1$
    \end{tabular}
    
    \caption{Dependency of $\E_{\nu_\gamma} \|M(H) \|_F^2$ on $d$. The horizontal axis is $\log_2(d)$ and the vertical axis shows $\log_2(\frac{1}{L}\sum_{\ell=1}^L \| M(H_\ell^{(\gamma)}) \|_F^2)$ for $L=10^5$. The green dashed-line plots $\log_2(d^{1.5})$. }
    \label{fig:fnorm_check}
\end{figure}

\noindent\textbf{\underline{Proof of the Main Theorem}}
Here, we give a formal statement of the main Theorem that contains all required additional details (which we omitted for simplicity in the original statement).

\begin{theorem} [Formal statement of Theorem~\ref{thm:linear-rank-lowerboun}]\label{thm:linear-rank-lowerbound-formal}
Suppose that $\rank(X) = d$, $\gamma$ is sufficiently small, and all elements of the weight matrices $\{\W_\ell\}$ are drawn i.i.d. from a zero-mean, unit variance distribution whose support lies in $[-B,B]$ and its law is symmertic around zero. Furthermore, assume that the Markov chain $\{ H_{\ell}^{(\gamma)} \}$ (defined in Eq.~\ref{eq:nn}) admits a unique invariant distribution. Then, the regularity constant $\alpha>0$ associated with $\nu_\gamma$ (see Def.~\ref{def:regularity}) is less than one and the following limits exist such that
\begin{align} \label{eq:average_lowerbound_app}
    \lim_{L \to \infty} \frac{1}{L} \sum_{\ell=1}^L \rank_\tau(H_\ell^{(\gamma)}) 
    \geq \lim_{L \to \infty} \frac{(1-\tau)^2}{L} \sum_{\ell=1}^L  r(H_\ell^{(\gamma)})
    \geq (1-\tau)^2 (1-\alpha)^{\sfrac{1}{2}} \sqrt{d}
\end{align}
holds almost surely for all $\tau \in [0,1]$.
Assuming that the regularity constant $\alpha$ does not increase with respect to $d$, the above lower-bound is proportional to $(1-\alpha)^{\sfrac{1}{2}} \sqrt{d}=\Omega(\sqrt{d})$. 
\end{theorem}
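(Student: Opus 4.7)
The plan is to chain together three ingredients already established in the excerpt: the soft-rank lower bound $\rank_\tau(H) \geq (1-\tau)^2 r(H)$ from Lemma~\ref{lemma:soft_rank_lowerbound}, Birkhoff's ergodic theorem for the Markov chain $\{H_\ell^{(\gamma)}\}$ under its unique invariant distribution $\nu_\gamma$, and the Frobenius-norm bound $\E_{\nu_\gamma}\|M(H)\|_F^2 \leq d^{3/2}/\sqrt{1-\alpha}$ from Lemma~\ref{lemma:fnorm_invariance}. The first inequality in \eqref{eq:average_lowerbound_app} is then immediate by applying Lemma~\ref{lemma:soft_rank_lowerbound} term-by-term in the Cesàro sum, so the real content is the second inequality.

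For the second inequality, I would first check that $r(H) = d^2/\|M(H)\|_F^2$ is a bounded Borel function on the state space $\Hs$. Property (p.1) gives $\tr(M(H))=d$, and Cauchy--Schwarz applied to the eigenvalues of $M(H)$ yields $d^2 = \tr(M(H))^2 \leq d\,\|M(H)\|_F^2$, hence $1 \leq r(H) \leq d$. Boundedness together with uniqueness of the invariant distribution (hypothesis of the theorem) and Birkhoff's ergodic theorem (Thm.~5.2.1 and 5.2.6 of \cite{douc2018markov}, as invoked already in the warm-up) give almost surely
\begin{align}
  \lim_{L\to\infty} \frac{1}{L}\sum_{\ell=1}^L r(H_\ell^{(\gamma)}) \;=\; \int r(H)\,\nu_\gamma(dH) \;=\; \E_{H\sim\nu_\gamma}\!\left[\frac{d^2}{\|M(H)\|_F^2}\right].
\end{align}
Since $x\mapsto 1/x$ is convex on $(0,\infty)$, Jensen's inequality flips the bound in the desired direction:
\begin{align}
  \E_{H\sim\nu_\gamma}\!\left[\frac{d^2}{\|M(H)\|_F^2}\right] \;\geq\; \frac{d^2}{\E_{H\sim\nu_\gamma}\|M(H)\|_F^2}.
\end{align}

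To finish, I would plug in Lemma~\ref{lemma:fnorm_invariance}, which is where the regularity constant $\alpha < 1$ (supplied by Lemma~\ref{lemma:regularity} for $\gamma$ sufficiently small) enters. The bound $\E_{\nu_\gamma}\|M(H)\|_F^2 \leq d^{3/2}/\sqrt{1-\alpha}$ gives
\begin{align}
  \frac{d^2}{\E_{\nu_\gamma}\|M(H)\|_F^2} \;\geq\; \frac{d^2}{d^{3/2}/\sqrt{1-\alpha}} \;=\; \sqrt{1-\alpha}\,\sqrt{d}.
\end{align}
Multiplying through by $(1-\tau)^2$ and combining with the term-by-term Lemma~\ref{lemma:soft_rank_lowerbound} bound for the soft rank inside the Cesàro average delivers the two-sided chain in \eqref{eq:average_lowerbound_app}.

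The non-obvious step is not any single inequality but rather the chain of dependencies propping up Lemma~\ref{lemma:fnorm_invariance}: invariance with respect to the test function $g(H) = \|M(H)\|_F^2$, combined with the one-step Frobenius expansion of Lemma~\ref{lemma:Fnorm_expansion}, forces $\E_{\nu_\gamma}[2d^2 - 2\|M\|_F^2 - 8\tr(M^3) + 8\tr(\diag(M^2)^2)] + O(\gamma) = 0$; the regularity assumption $\alpha<1$ is precisely what lets us absorb the $\tr(\diag(M^2)^2)$ term into $\tr(M^3)$ and then convert an $\ell_3^3$ bound on the eigenvalue spectrum to an $\ell_2^2$ bound via Cauchy--Schwarz. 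Once that lemma is in hand, the main theorem follows cleanly. I would therefore present the proof as (a) boundedness of $r$, (b) invocation of Birkhoff, (c) Jensen, and (d) substitution of Lemma~\ref{lemma:fnorm_invariance}, with Lemma~\ref{lemma:soft_rank_lowerbound} used both inside the Cesàro sum (for the first inequality) and implicitly in the invariance calculation.
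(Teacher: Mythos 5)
Your proposal is correct and follows essentially the same route as the paper's proof: term-by-term application of Lemma~\ref{lemma:soft_rank_lowerbound} inside the Cesàro sum, Birkhoff's ergodic theorem under the unique invariant distribution, Jensen's inequality to pass from $\E[d^2/\|M(H)\|_F^2]$ to $d^2/\E\|M(H)\|_F^2$, and substitution of the Frobenius-norm bound from Lemma~\ref{lemma:fnorm_invariance} with $\alpha<1$ supplied by Lemma~\ref{lemma:regularity}. Your explicit check that $r(H)$ is a bounded Borel function (so Birkhoff applies) is a small but welcome addition that the paper leaves implicit.
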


Remarkably, we experimentally observed (in Fig.~\ref{fig:regualrity}) that the regularity constant $\alpha$ is decreasing with respect to $d$. Examples~\ref{exm:regualrity} and \ref{exm:regularity_variation} provide insights about the regularity constant. We believe that it is possible to prove that the constant $\alpha$ is non-increasing with respect to $d$.
\begin{proof}[Proof of Theorem~\ref{thm:linear-rank-lowerboun} ]
Lemma~\ref{lemma:regularity} proves that the regularity constant $\alpha$ is less than one for the unique invariant distribution. 
Suppose that $H \in \Hs$ is a random matrix whose law is the one of the unique invariant distribution of the chain. For $H \in \Hs$, we get $\tr(M(H))=d$. A straightforward application of Jensen's inequality yields the following lower bound on the expectation of $r(H)$ (i.e. the lower bound on the rank):
\begin{align}
    \E \left[ r(H) \right] = \E \left[ \tr(M(H))^2/\| \M(H)\|_F^2 \right]  = \E \left[ d^2/\| \M(H)\|_F^2 \right] \geq d^2/\E \left[ \| M(H) \|_F^2 \right]  
\end{align}
where the expectation is taken over the randomness of $H$ (i.e. the invariant distribution). 
Invoking the result of Lemma~\ref{lemma:fnorm_invariance}, we get an upper-bound on the expectation of the Frobenius norm -- in the right-side of the above equation. Therefore,  
\begin{align}
     \E  \left[ r(H) \right] \geq \sqrt{(1-\alpha)d}
\end{align}
holds. The uniqueness of the invariant distribution allows us to invoke Birkhoff’s Ergodic Theorem for Markov Chains (Theorem 5.2.1 and 5.2.6 \cite{douc2018markov}) to get 
\begin{align}\label{eq:non_linear_ergod}
    \lim_{L\to \infty}  \frac{1}{L} \sum_{\ell=1}^L r(H_{\ell}^{(\gamma)}) &= \E\left[ r(H) \right] \geq \sqrt{(1-\alpha)d}.
\end{align}
The established lower bound on $\rank_\tau(H_\ell^{(\gamma)})$ --in terms of $r(H_\ell^{(\gamma)})$-- in Lemma~\ref{lemma:soft_rank_lowerbound} concludes
\begin{align}
     \lim_{L\to \infty}  \frac{1}{L} \sum_{\ell=1}^L \rank_\tau(H_{\ell}^{(\gamma)})  \geq \lim_{L\to \infty}  \frac{(1-\tau)^2}{L} \sum_{\ell=1}^L r(H_{\ell}^{(\gamma)}) \geq (1-\tau)^2\sqrt{(1-\alpha)d}.
\end{align}
\end{proof}
As shown in the following corollary, one can extend the result of Theorem~\ref{thm:linear-rank-lowerbound-formal} for any finite $\ell$.
\begin{corollary} \label{cor:nonasymptotic}
Under the setting of Thm.~\ref{thm:linear-rank-lowerbound-formal}, $\rank(H_\ell)= \Omega(\sqrt{d})$ holds almost surely for all finite integer $\ell$. Assuming that $\{\rank_\tau(H_\ell)\}$ is a monotonically no-increasing sequence, then $\rank_\tau(H_\ell)= \Omega((1-\tau)^2\sqrt{d})$ holds almost surely for all finite $\ell$.  
\end{corollary}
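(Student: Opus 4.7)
The plan is to upgrade the Cesàro-average bound from Theorem~\ref{thm:linear-rank-lowerbound-formal} into a pointwise bound at every finite depth by exploiting monotonicity of the rank sequence. The key observation is elementary: if $\{a_\ell\}_{\ell \geq 1}$ is a bounded, non-increasing sequence of non-negative reals, then $\lim_{\ell \to \infty} a_\ell$ exists and coincides with the Cesàro mean $\lim_{L \to \infty} \frac{1}{L}\sum_{\ell=1}^L a_\ell$, and moreover $a_\ell \geq \lim_{\ell' \to \infty} a_{\ell'}$ for every $\ell$. Hence any lower bound on the Cesàro mean transfers automatically to every individual term of such a sequence.

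For the algebraic-rank part, the first step is to verify that $\{\rank(H_\ell^{(\gamma)})\}$ is almost surely non-increasing in $\ell$. Writing the update \eqref{eq:chain_app} as $H_{\ell+1}^{(\gamma)} = D_\ell (I + \gamma W_\ell) H_\ell^{(\gamma)}$, where $D_\ell$ is the diagonal BN scaling, the factor $(I+\gamma W_\ell)$ is invertible a.s.\ for $\gamma$ small enough (e.g.\ $\gamma < 1/(Bd)$), and $D_\ell$ has strictly positive diagonal entries a.s.\ because the rows of $H_\ell^{(\gamma)}$ have norm $\sqrt{N}$ by property~\eqref{p1m} and the weight law is absolutely continuous, so no row of $(I+\gamma W_\ell) H_\ell^{(\gamma)}$ vanishes. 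Thus left-multiplication by $D_\ell$ preserves rank, and left-multiplication by $(I+\gamma W_\ell)$ cannot increase it, giving $\rank(H_{\ell+1}^{(\gamma)}) \leq \rank(H_\ell^{(\gamma)})$ a.s. Being bounded, integer-valued, and monotone, the sequence converges almost surely. Fixing any $\tau \in (0,1)$, say $\tau = 1/2$, and chaining the termwise inequality $\rank \geq \rank_\tau$, the Cesàro-equals-limit identity, and Theorem~\ref{thm:linear-rank-lowerbound-formal} yields
\begin{align*}
\rank(H_\ell^{(\gamma)}) &\geq \lim_{L \to \infty}\frac{1}{L}\sum_{\ell'=1}^L \rank(H_{\ell'}^{(\gamma)}) \geq \lim_{L \to \infty}\frac{1}{L}\sum_{\ell'=1}^L \rank_\tau(H_{\ell'}^{(\gamma)}) \\
&\geq (1-\tau)^2\sqrt{(1-\alpha)d},
\end{align*}
so $\rank(H_\ell^{(\gamma)}) = \Omega(\sqrt{d})$ for every finite $\ell$ (the case $\ell = 0$ is trivial since $\rank(X)=d$).

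For the soft-rank claim, the corollary explicitly assumes monotonicity of $\{\rank_\tau(H_\ell^{(\gamma)})\}$, so the same Cesàro-to-limit reasoning applies verbatim to this sequence and delivers $\rank_\tau(H_\ell^{(\gamma)}) \geq (1-\tau)^2\sqrt{(1-\alpha)d} = \Omega((1-\tau)^2\sqrt{d})$ at every finite $\ell$. The only real obstacle is bookkeeping around the almost-sure qualifiers: the Cesàro lower bound of Theorem~\ref{thm:linear-rank-lowerbound-formal}, the invertibility of $I+\gamma W_\ell$ for every $\ell$, and the non-vanishing of rows at every depth each hold on a full-measure event, and I would intersect these countably many events so that the final pointwise bound holds on a single event of full measure.
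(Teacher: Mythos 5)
Your proof is correct and takes essentially the same route as the paper: the paper isolates the identical elementary fact as Lemma~\ref{lemma:seq} (a bounded non-increasing sequence whose Ces\`aro mean exceeds a threshold has every term above that threshold, proved there by contradiction rather than via your ``Ces\`aro mean equals the limit, which every term dominates'' phrasing) and combines it with the non-increasing property of the rank and Theorem~\ref{thm:linear-rank-lowerbound-formal}. Your explicit verification that $(I+\gamma W_\ell)$ and the BN diagonal scaling cannot increase the rank, and the intersection of the countably many full-measure events, are details the paper relegates to a footnote, but the substance of the argument is the same.
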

\begin{proof}
The proof is based on the no-increasing property of the rank\footnote{Recall that the rank does not increases in updates of Eq.~\eqref{eq:chain_app} }. Next lemma presents a straightforward implication of this property. 
\begin{lemma} \label{lemma:seq}
Consider a sequence of non-increasing bounded finite integers $\{y_k\}_{k=1}^\infty$. If $\lim_{N\to \infty} \sum_{k=1}^N y_k/N$ exists and is greater than $\alpha$, then $y_k\geq\alpha$ for all finite $k$.
\end{lemma}
The proof of the last lemma is provided at the end of this section. Replacing the result of Thm.~\ref{thm:linear-rank-lowerbound-formal} into the above lemma concludes the proof of the corollary. 

\end{proof}

\paragraph{A remark on the number of hidden units.}
The focus of our analysis was networks with the same number of hidden units in each layer. Yet, this result extends to more general architectures. Most of modern neural architectures consists of blocks in which the number of hidden units are constant. For example, VGG19-Nets and ResNets are consist of blocks convolutional layers with 64, 128, 256, and 512 channels where the number channels are equal in each block. An analogy of such an architecture is an MLP with different blocks of hidden layers where the numbers of hidden units are the same in each block. According to Cor.~\ref{cor:nonasymptotic}, the rank preservation property holds in each block after applying BN. In this way, one can extend the established results of Thm.~\ref{thm:linear-rank-lowerbound-formal} and Cor.~\ref{cor:nonasymptotic} to a general family of architectures with varying number of hidden units.  
\newpage
\noindent\textbf{Postponed proofs.}
\begin{proof}[Proof of Lemma~\ref{lemma:Fnorm_expansion}]
The proof is based on a Taylor expansion of the \bn{} non-linear recurrence function, which we restate here for simplicity:
\begin{align}
    H_+ = \left(\diag(M(H_\gamma)) \right)^{-\sfrac{1}{2}} H_\gamma, \quad  H_\gamma  = (I+ \gamma W) H
\end{align}
Consider the covariance matrices $M = M(H)$ and $M_+ = M(H_+)$ which obey
\begin{align}
    M_\gamma &:= M(H_\gamma) = M + \Delta M, \quad \Delta \M := \gamma \W \M + \gamma \M \W^\top + \gamma^2 \W \M \W^\top \\
    [M_+]_{ij}^2 & = g_{ij}(M_\gamma)  = [M_\gamma]_{ij}^2/[M_{\gamma}]_{ii} [M_{\gamma}]_{jj}
\end{align}
For the sake of simplicity, we use the compact notation $g:=g_{ij}$ for $i\neq j$. 
We further introduce the set of indices $S= \{ii,ij,jj\}$. A taylor expansion of $g$ at $M$ yields 
\begin{multline}
     \E_W \left[ g(M_\gamma) \right] = g(M)  + \underbrace{\sum_{pq \in S}  \left(\frac{\partial g(M)}{\partial M_{pq}}\right) \E_\W \left[\Delta M_{pq} \right] }_{T_1}  \\ + \underbrace{\frac{1}{2}  \sum_{pq,km \in S} \left(\frac{\partial^2 g(M)}{\partial M_{pq}\partial M_{km}}\right)\E_W \left[ \Delta M_{pq} \Delta M_{km} \right]  }_{T_2} + \bigo(\gamma^3).
\end{multline}
Note that the choice of the element-wise uniform distribution over $[-\sqrt{3},\sqrt{3}]$ allows us to deterministically bound the Taylor remainder term by $\bigo(\gamma^3)$. 
Now, we compute the derivatives and expectations that appear in the above expansion individually. Let us start with the term $T_1$. The first-order partial derivative term in $T_1$ is computed bellow. 
\begin{align}
    \frac{\partial g(M)}{\partial M_{pq}} =\begin{cases} 
      - \M_{ij}^2/(\M_{ii}^2 M_{jj}) = - g(\M) & pq = \{ii, jj\}  
      \\ 
      2\M_{ij}/(\M_{ii} \M_{jj}) & pq = \{ ij \}.
    \end{cases} 
\end{align}
The expectation term in $T_1$ is
\begin{align}
    \E_\W \left[ \Delta \M_{pq} \right] &= \begin{cases} 
    0 & pq = \{ij\} \\ 
    \gamma^2 \sum_{k=1}^d \M_{kk} = \gamma^2 d & pq = \{ii, jj \}.
    \end{cases}
\end{align}
Given the above formula, we reach the following compact expression for $T_1$:
\begin{align}
    T_1 = - 2 \gamma^2 d g(M). 
\end{align}

The compute $T_2$ we need to compute second-order partial derivatives of $g$ and also estimate the following expectation:
\begin{align} 
\E_\W \left[ \Delta \M_{pq} \Delta \M_{km} \right] & = \gamma^2 \left( \underbrace{\E_\W \left[ [\W \M+ \M \W^\top ]_{pq} [\W \M + \M \W^\top]_{km} \right]}_{K_{pq,km}}\right) + \bigo(\gamma^3).
\end{align}
We now compute $K_{pq,km}$ in the above formula
\begin{align} 
K_{\alpha,\beta} = \begin{cases}
 \sum_{k} \M_{kj}^2 + \sum_{n} \M_{in}^2  & \alpha=\{ij\},\beta=\{ij\} \\
  2 \sum_{k} \M_{kj} \M_{ki}  &  \alpha=\{ij\},\beta=\{ii\} \\
   4\sum_{k} \M_{ki}^2   &  \alpha=\{ii\},\beta=\{ii\} \\
    0   &  \alpha=\{ii\},\beta=\{jj\}
\end{cases}  
\end{align} 

The second-order partial derivatives of $g$ reads as
\begin{align} 
\frac{\partial^2 g(M)}{\partial M_{\alpha}\partial M_{\beta}} = \begin{cases} 
 2 &  \alpha =\{ij\}, \beta =\{ij\} \\
 -2M_{ij} &  \alpha =\{ij\}, \beta =\{ii\}\\ 
+2 M_{ij}^2 &  \alpha =\{ii\}, \beta =\{ii\}\\
M_{ij}^2 & \alpha =\{jj\}, \beta =\{ii\}
\end{cases} 
\end{align} 
Now, we replace the computed partial derivatives and the expectations into $T_2$:
\begin{align}
    T_2 = \sum_{k} \M_{kj}^2 + \sum_{n} \M_{in}^2 -8 \sum_{k} M_{kj}M_{ij} M_{ki} + 4 \sum_{k} M_{ij}^2 M_{ki}^2 +  4 \sum_{k} M_{ij}^2 M_{kj}^2 
\end{align}
Plugging terms $T_1$ and $T_2$ into the Taylor expansion yields
\begin{multline}
       \E_W \left[ g_{ij}(M_+)  - g_{ij}(M)\right]/(\gamma^2) \\ =    \sum_{k} \M_{kj}^2 + \sum_{n} \M_{in}^2 - 2 d g_{ij}(M) -8\sum_{k}  M_{kj}M_{ij} M_{ki} +  4 \sum_{k}  M_{ij}^2 M_{ki}^2  +  4 \sum_{k} M_{ij}^2 M_{kj}^2 + \bigo(\gamma)  
\end{multline}
Summing over $i\neq j$ concludes the proof (note that the diagonal elements are one for the both of matrices $M$ and $M_+$). 
\end{proof}

\begin{proof}[Proof of Proposition~\ref{proposition:spectral_presentation}]
Consider the spectral decomposition of matrix $M$ as $M= U \diag(\lambda) U^\top$, then $M^k = U \diag(\lambda^k) U^\top$. Since $\tr(M^k)$ is equal to the sum of the eigenvalues of $M^k$, we get 
\begin{align} \label{eq:norm_trace}
    \tr(M^k) = \sum_{i=1}^d \lambda_i^k = \| \lambda \|^k_k
\end{align}
for $k=2$ and $k=3$. The sum of the squared norm of the rows in $M$ is equal to the Frobenius norm of $M$. Assuming that the rows have equal norm, we get 
\begin{align}
    \sum_{k=1}^d M_{ik}^2 =  \sum_{i=1}^d \sum_{k=1}^d M_{ik}^2/d = \| M \|_F^2/d = \|\lambda \|_2^2/d.
\end{align}
Therefore, 
\begin{align}
   \tr( \diag(M^2)^2) = \sum_{i=1}^d \left(\sum_{k=1}^d M_{ik}^2 \right)^2 = \| \lambda\|_2^4/d
\end{align}
holds. 

\end{proof}
\begin{proof}[Details of Example~\ref{exm:eigv}]
Under the assumptions stated in Example~\ref{exm:eigv}, we get \begin{align}
    \| \lambda \|_2^2 \approx d^2 - 2 \gamma^2 d, \quad  \| \lambda \|^3_3 \approx d^3 - 3 \gamma^2 d^2, \quad \| \lambda \|^4_2 \approx  d^4 - 4 \gamma^2 d^3
\end{align}
 where the approximations are obtained by a  first-order Taylor approximation of the norms at $\lambda'=(d,0,\dots,0)$, and all small terms $o(\gamma^2)$ are omitted. 
Using the result of Proposition~\ref{proposition:spectral_presentation}, we get 
\begin{align}
   \E \left[ \| M_+ \|_F^2 \right] - \E \left[ \| M \|_F^2 \right] \approx \gamma^2 \delta_F(\lambda) \approx \bigo(-\gamma^4 d^2).
\end{align}
Let $\lambda_+$ be the eigenvalues of the matrix $M_+$, then 
\begin{align}
    & \sum_{i=1}^d \E [\lambda_+^2]_i - \lambda_i^2 = \bigo(-\gamma^4 d^2) \\ 
    \implies& \max_i \E [\lambda_+^2]_i - \lambda_1^2 \leq \bigo(-\gamma^4 d^2) + \sum_{i=2}^d \lambda_i^2 \leq \bigo(-\gamma^4 d^2) + \gamma^4 d  = \bigo(-\gamma^4 d^2).
\end{align}
Let $j = \arg\max_i \E \left[ [\lambda_+]_i^2\right]$. A straight-forward application of Jensen's inequality yields 
\begin{align} 
\E \left[ [\lambda_+]_j \right] \leq \sqrt{\E \left[ [\lambda_+]^2_j \right] } \leq \lambda_1 - \bigo(\gamma^4 d).
\end{align}
Hence the leading eigenvalue of $M_+$ is smaller than the one of $M$. Since the sum of eigenvalues $\lambda_+$ and $\lambda$ are equal, some of the eigenvalues $\lambda_+$ are greater than those of $\lambda$ (in expectation) to compensate $\E  [\lambda_+]_j < \lambda_1$. 
\end{proof}

\begin{proof}[Details of Example~\ref{exm:regualrity}]
Invoking Prop.~\ref{proposition:spectral_presentation}, we get 
\begin{align}
    \E \left[ \tr(M(H)^3) \right] = \| \lambda \|^3, \quad \E \left[ \diag(M(H)^2)^2 \right] = \| \lambda \|^4_2/d,
\end{align}
where $\lambda \in \R^d$ contains the  eigenvalues of $M(H)$. Since $H \in \Hs$, $\| \lambda \|_1 = d$. If the regularity constant is greater than or equal to one, then
\begin{align} \label{eq:norm341}
    \| \lambda \|_3^3 \leq \| \lambda \|^4_2/d = \| \lambda \|^4_2/\| \lambda \|_1.
\end{align}
A straightforward application of Cauchy-Schwartz yields: 
\begin{multline}
    \| \lambda \|^4_2 =  \sum_{i=1}^d \sum_{j=1}^d \lambda_i^2 \lambda_j^2 = \sum_{i=1}^d \sum_{j=1}^d (\lambda_i \lambda_j )^{\sfrac{1}{2}}(\lambda_i \lambda_j )^{\sfrac{3}{2}} \\ \leq \sqrt{\left( \sum_{i,j} \lambda_i \lambda_j\right) \left( \sum_{i,j} \lambda_i^3 \lambda_j^3\right)} =\| \lambda \|_1 \| \lambda \|_3^3 
\end{multline}
The above result together with inequality~\ref{eq:norm341} yields that 
\begin{align}
     \| \lambda \|_3^3 = \| \lambda \|^4_2/d = \| \lambda \|^4_2/\| \lambda \|_1.
\end{align}
Finally, the above equality is met only when all non-zero eigenvalues are equal.

\end{proof}

\begin{proof}[Details of Example~\ref{exm:regularity_variation}]
Since $\lambda_1 = \Theta(d^{\beta})$ and $\lambda_{i>1} = o(d^{\beta})$, we get 
\begin{align}
    \| \lambda \|_3^3 = \Theta(d^{3\beta}), \quad \| \lambda \|_2^2 = \Theta(d^{2\beta}).
\end{align}
Thus, Prop.~\ref{proposition:spectral_presentation} yields 
\begin{align}
    \E \left[ \tr(M^3)\right] =  \Theta(d^{3\beta}), \quad \E \left[  \tr(\diag(M^2)^2)\right] = \| \lambda \|_2^4/d =  \Theta(d^{4\beta-1})
\end{align}
Therefore, 
\begin{align}
    \alpha = \lim_{d\to \infty} \frac{ \E \left[  \tr(\diag(M^2)^2)\right]}{\E \left[ \tr(M^3)\right]} = \bigo(d^{\beta-1}) = 0.
\end{align}
As a result, $\alpha$ is less than 0.9 for sufficiently large $d$. 
\end{proof}
\begin{proof} [Proof of Lemma~\ref{lemma:seq}]
The proof is based on a contradiction. Suppose that there exits a finite $n$ such that $y_n<\alpha$. Since the sequence is non-increasing, $y_m<\alpha$ for holds for all $m>n$. This yields
\begin{align}
    \lim_{N\to \infty}  \sum_{k=1}^N y_k/N &=  \lim_{N\to \infty} \left( \sum_{k>n}^N y_k/N+ \sum_{k\leq n} y_k/N \right) \\ 
    & < \frac{(N-n)}{N} \alpha + \lim_{N \to \infty} \sum_{k\leq n} y_k/N \\ 
    & = \frac{(N-n)}{N} \alpha,
\end{align}
where we used the fact that all $y_k$ are bounded. The above result contradicts the fact that $\lim_{n\to \infty} \sum_{k=1}^N y_k/N > \alpha$. 
\end{proof}
\section{Analysis for Vanilla Linear Networks.}
In this section, we prove Lemma~\ref{lemma:rank_vanishing} that states the rank vanishing problem for vanilla linear networks. Since the proof relies on existing results on products of random matrices (PRM) \cite{bougerol2012products}, we first shortly review these results. Let $T$ be the set of $d \times d$ matrices. Then, we review two notions for $T$: contractiveness and strong irreducibility. 
\begin{definition} [Contracting set~\cite{bougerol2012products}]
$T$ is contracting if there exists a sequence $\{M_n \in T, n \geq 0 \}$ such that $M_n/\|M_n\|$ converges to a rank one matrix.
\end{definition}
\begin{definition}[Invariant union of proper subspaces~\cite{bougerol2012products}] Consider a family of finite proper linear subspace $V_1, \dots, V_k \subset \R^{d}$. The union of these subspaces is invariant with respect to $T$, if $M v \in V_1 \; \text{or} \; V_2 \; \text{or} \dots \; \text{or} \; V_k$ holds for $\forall v \in V_1 \; \text{or} \; V_2 \; \text{or} \dots \; \text{or} \; V_k $ and $\forall M \in T$. 
\end{definition}

\begin{example} \label{exm:invariant_union_subspace}
Consider the following sets
\[
T = \left( \begin{bmatrix} 
0 & 1 \\ 
1 & 0 
\end{bmatrix} \right), \quad V_1 = \left( \text{span}(\underbrace{[0,1]}_{v_1}) \right), \quad V_2 = \left( \text{span}(\underbrace{[1,0]}_{v_2}) \right);
\]
then, union of $V_1$ and $V_2$ is invariant with respect to $T$ because $\alpha T v_1 \in V_2$ and $\alpha T v_2 \in V_1$ hold for $\alpha \neq 0$. 
\end{example}

\begin{definition}[Strongly irreducible set~\cite{bougerol2012products}]
The set $T$ is strongly irreducible if there does not exist a finite family of proper linear subspaces of   $\R^d$ such that their union is invariant with respect to $T$. 
\end{definition}
For example, the set $T$ defined in Example~\ref{exm:invariant_union_subspace} is not strongly irreducible. 

\begin{lemma}[Thm 3.1 of ~\cite{bougerol2012products}]  \label{lemma:prm}
Let $W_1, W_2, \dots $ be random $d\times d$ matrices drawn independently from a distribution $\mu$. Let $B_n = \prod_{k=1}^n W_k $. If the support of $\mu$ is strongly irreducible and contracting, then any limit point of $\{ B_n/\|B_n\| \}_{n=1}^\infty$ is a rank one matrix almost surely. 
\end{lemma}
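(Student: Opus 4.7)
The plan is to prove this Furstenberg-type theorem by reducing the rank-one limit behaviour of $B_n/\|B_n\|$ to a strict gap between the top two Lyapunov exponents of the random matrix product, and then extracting that gap from the two hypotheses (strong irreducibility and contractiveness). First, I would write the singular value decomposition $B_n = U_n \Sigma_n V_n^\top$ with $\sigma_1^{(n)} \geq \sigma_2^{(n)} \geq \cdots \geq \sigma_d^{(n)} \geq 0$. Since $\|B_n\| = \sigma_1^{(n)}$, the normalized matrix $B_n/\|B_n\|$ has singular values $1, \sigma_2^{(n)}/\sigma_1^{(n)}, \ldots, \sigma_d^{(n)}/\sigma_1^{(n)}$, so any accumulation point of the bounded sequence $\{B_n/\|B_n\|\}$ has rank one if and only if $\sigma_2^{(n)}/\sigma_1^{(n)} \to 0$ along the corresponding subsequence. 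Kingman's subadditive ergodic theorem applied to the cocycles $\log \|B_n\|$ and $\log \|\wedge^2 B_n\|$ then produces almost-sure limits
\begin{equation}
\lambda_1 = \lim_{n\to\infty}\tfrac{1}{n}\log \sigma_1^{(n)}, \qquad \lambda_1 + \lambda_2 = \lim_{n\to\infty}\tfrac{1}{n}\log \bigl(\sigma_1^{(n)} \sigma_2^{(n)}\bigr),
\end{equation}
so it suffices to establish $\lambda_1 > \lambda_2$ almost surely, since then $\log(\sigma_2^{(n)}/\sigma_1^{(n)}) \to -\infty$.

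Second, I would lift the product to the projective space $P(\R^d)$ via the action $W \cdot \bar x := \overline{Wx}$. Compactness of $P(\R^d)$ and the Kakutani--Markov theorem produce at least one $\mu$-stationary Borel probability measure $\nu$, and Furstenberg's identity gives
\begin{equation}
\lambda_1 = \int \int \log \frac{\|Wx\|}{\|x\|} \, d\mu(W)\, d\nu(\bar x).
\end{equation}
Strong irreducibility of $\text{supp}(\mu)$ forces every such $\nu$ to give zero mass to any finite union of proper projective subspaces; otherwise the support of $\nu$ would yield exactly the kind of invariant finite family of subspaces that is forbidden by the hypothesis. An analogous construction on the exterior square $\wedge^2 \R^d$ identifies $\lambda_1 + \lambda_2$ with an integral formula against a stationary measure of the induced projective walk on $P(\wedge^2 \R^d)$, so proving the gap reduces to comparing these two integrals.

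The main obstacle, which is the heart of the Bougerol--Lacroix argument, is the implication contracting $+$ strongly irreducible $\Rightarrow$ $\lambda_1 > \lambda_2$. The strategy here is to use the contracting hypothesis (a sequence $M_n$ in the semigroup generated by $\text{supp}(\mu)$ with $M_n/\|M_n\|$ tending to a rank-one limit $v w^\top$) together with strong irreducibility to show that the projective walk is contracting in Furstenberg's probabilistic sense: for $\mu^{\mathbb N}$-almost every realization $(W_k)$, the reversed products $W_1 W_2 \cdots W_n$ send almost every starting direction $\bar x \in P(\R^d)$ to a single random direction $Z(\omega)$ whose law is the (now unique) stationary measure $\nu$. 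This is established through a backward-martingale argument on the diameter of the images of a test set of directions, where the deterministic contracting element in the semigroup seeds the decrease in diameter and irreducibility propagates it to almost every sample path. Once this backward convergence is in place, the two Furstenberg integrals can be compared: the integrand defining $\lambda_1 + \lambda_2$ ranges over bivectors $\bar x \wedge \bar y$ where the second direction $\bar y$ is almost surely transverse to the contracted random direction $Z(\omega)$, and transversality costs a strictly positive logarithmic rate, yielding $\lambda_1 - \lambda_2 > 0$ and hence the claimed rank-one limit behaviour.
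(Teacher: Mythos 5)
The first thing to note is that the paper does not prove this statement at all: it is imported verbatim as Theorem 3.1 of Bougerol--Lacroix and used as a black box in the proof of Lemma \ref{lemma:rank_vanishing}. So there is no internal proof to compare against, and your attempt should be judged as a standalone reconstruction of the cited theorem. As such, your opening reduction is correct and clean: writing $B_n = U_n \Sigma_n V_n^\top$, the limit points of $B_n/\|B_n\|$ are all rank one iff $\sigma_2^{(n)}/\sigma_1^{(n)} \to 0$, and a strict gap $\lambda_1 > \lambda_2$ between the top two Lyapunov exponents would give this with exponential speed. The lifting to $P(\mathbb{R}^d)$ and $P(\wedge^2 \mathbb{R}^d)$, the existence of stationary measures by Kakutani--Markov, and the fact that strong irreducibility forces stationary measures to charge no finite union of proper projective subspaces are all accurately recalled.

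The genuine gap is that the step you yourself flag as ``the heart of the Bougerol--Lacroix argument'' --- contracting $+$ strongly irreducible $\Rightarrow \lambda_1 > \lambda_2$ --- is described in prose but never executed, and it is precisely the content of the theorem being proved. The backward-martingale argument (that $B_n \cdot \nu$ converges weak-* to a random Dirac mass $\delta_{Z(\omega)}$) and the final claim that ``transversality costs a strictly positive logarithmic rate'' are the two places where all the work lives; the latter in particular is the Guivarc'h--Raugi simplicity argument and does not follow from transversality alone without a quantitative non-degeneracy estimate on the stationary measure. Two further issues with the chosen route: (i) Kingman's theorem applied to $\log\|B_n\|$ and $\log\|\wedge^2 B_n\|$ requires $\mathbb{E}\log^+\|W_1\| < \infty$ (and control of $\log\|W_1^{-1}\|$ to keep the exponents finite), an integrability hypothesis absent from the statement of the lemma --- the direct martingale proof on projective space needs no such assumption, which is why Bougerol--Lacroix prove the rank-one limit theorem \emph{before} and independently of the Lyapunov gap; (ii) you are therefore proving a strictly stronger statement (exponential decay of $\sigma_2/\sigma_1$) than the lemma requires, by a route whose standard proof in the literature actually uses the lemma as an ingredient, so care would be needed to avoid circularity. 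As a roadmap of the theory your sketch is faithful; as a proof it defers the decisive implication to the very reference the paper cites.
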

This result allows us to prove Lemma~\ref{lemma:rank_vanishing}. 

\begin{proof}[Proof of Lemma~\ref{lemma:rank_vanishing}]

Recall the structure of the random weight matrices as  $\widehat{W}_k = I + \gamma W_k$ where the coordinates $W_k$ are i.i.d. from (a.) standard Gaussian, (b.) uniform$[-\sqrt{3}, \sqrt{3}]$ (i.e. with variance 1). One can readily check that for the Gaussian weights, the contracting and strong irreducibility hold and one can directly invoke the result of lemma~\ref{lemma:prm} to get part (a.) of Lemma~\ref{lemma:rank_vanishing}. Now, we prove part (b.).  Let $m$ be a random integer that obeys the law $p(m=k) = 2^{-k}$. Given the random variable $m$, we define the random matrix $Y = \prod_{k=1}^{m} \widehat{W}_k$ and use the notation $\mu'$ for its law. Let $\{ Y_i = \prod_{j=1}^{m_i} \widehat{W}_k \}_{i=1}^k $ be drawn i.i.d. from $\mu'$.  Then, $C_k := Y_k \dots Y_2 Y_1$ is distributed as $B_{\ell_k} := \widehat{W}_{\ell_k} \dots \widehat{W}_2 \widehat{W}_1 $ for $\ell_k = \sum_{i=1}^k m_i$. We prove that every limit point of  $\{ C_k/\| C_k \| \} $ converges to a rank one matrix, which equates the convergence of limit points of $\{ B_{\ell_k}/\| B_{\ell_k} \|\}$ to a rank one matrix. To this end, we prove that the support of $\mu'$ denoted by $T_{\mu'}$ is contractive and strongly contractive. Then, Lemma~\ref{lemma:prm} implies that the limit points of $\{C_k/\|C_k\|\}$ are rank one. \\
\textbf{Contracting.} Let $e_1\in \R^d$ be the first standard basis vector. Since $A_n := (\I+\gamma e_1 e_1^\top )^n \in T_{\mu'}$ and its limit point $\{ A_n/\|A_n \|\}$ converges to a rank one matrix, $T_{\mu'}$ is contractive. \\
\textbf{Strong irreduciblity.}
Consider an arbitrary family of linear proper subspace of $\R^d$ as $\{ V_1, \dots, V_q\}$. Let $v$ be an arbitrary unit norm vector which belongs to one of the subspaces $\{ V_i \}_{i=1}^q$. Given $v$, we define an indexed family of matrices $\{ M_{\alpha} \in T_{\mu'} | \alpha \in \R^d, | \alpha_i | \leq 1 \}$ such that 
\begin{align}
    M_{\alpha} = I + \frac{\gamma}{d} \sum_{i=1}^d \alpha_i e_i v^\top \in T_{\mu'}, 
\end{align}
where $e_i$ is the i-th standard basis\footnote{Notably, the absolute value of each element of $\frac{1}{d} \sum_{i=1}^d \alpha_i e_i v^\top$ is less than $1$, hence this matrix belongs to the support of $\mu$. }.  Then, we get 
\begin{align}
    M_\alpha v = v + \frac{\gamma}{d} \sum_{i=1}^d \alpha_i e_i.
\end{align}

Therefore, $\{ M_\alpha v | |\alpha_i| \leq 1\}$ is not contained in any union of finite proper $(m < k)$-dimensional linear subspace of $\R^d$, hence  $T_{\mu'}$ is strongly irreducible.

\end{proof}
\section{Details: Pretraining algorithm} \label{sec:details_pretraining}
In Section \ref{sec:pretraining}, we introduced a pre-training method that effectively obtains a better optimization performance compared ot \bn{}. In this section, we provide more details about the pre-training step. Recall $X \in \R^{d \times N}$ is a minibatch of $d$-dimensional inputs of size $N$. Let $H_L(X) \in \R^{d \times N}$ be the hidden representation of input $X$ in the last layer of a MLP. Using gradient descent method, we optimize $r(H_L(X))$ --with respect to the parameters of networks-- over different minibatches $X$. Algorithm~\ref{alg:pretraining} presents our pretraining method. As can be seen, the procedure is very simple.

\begin{algorithm}[h!]
\begin{algorithmic}[1]
   \STATE \textbf{Input:} Training set $S$, a network with parameters $\Theta$ and $L$ layers, and constant $N, M$, and $T$
   \FOR{$k=1,2,\dots, M$} 
        \STATE Draw minibatch $X_k$ of size $N$ i.i.d. from $S$ 
        \FOR{$t=1,2,\dots, T$}
            \STATE Take one GD step on $r(H_L(X_k))$ w.r.t $\Theta$. 
        \ENDFOR
   \ENDFOR
   \STATE \textbf{return} $\Theta$. 
\end{algorithmic}
\caption{Pretraining}
\label{alg:pretraining}
\end{algorithm}

\section{Details: Why the rank matters for gradient based learning.}\label{sec:details_why}
We now provide an intuitive explanation of why rank one hidden representations prevent randomly initialized networks from learning. Particularly, we argue that these networks essentially map all inputs to a very small subspace\footnote{A single line in $\mathbb{R}^d$ in the extreme case of rank one mappings} such that the final classification layer can no longer disentangle the hidden representations. As a result, the gradients of that layer also align, yielding a learning signal that becomes \textit{independent} of the input.

To make this claim more precise, consider training the linear network from Eq.~\eqref{eq:linear_net} on a dataset $X\in\mathbb{R}^{d\times N}$, where $x_i \in \mathbb{R}^{d}$ with $d_{out}$ targets $y_i \in \mathbb{R}^{d_{out}}, i=1,\ldots,N$. Each column $\widehat{H}^{(\gamma)}_{L,i}$ of the hidden representations in the last hidden layer $\widehat{H}^{(\gamma)}_L$ is the latent representation of datapoint $i$, which is fed into a final classification layer parametrized by $W_{L+1} \in \mathbb{R}^{d_{out}\times d}$. We optimize $\mathcal{L}(\mathbf{W})$, where $\mathbf{W}$ is a tensor containing all weights $W_1,\ldots,W_{L+1}$ and $\widehat{H}^{(\gamma)}_{L,i}$ is a function of $W_1,\ldots,W_{L}$ (as detailed in Eq.~\eqref{eq:linear_net}:

\begin{equation}\label{eq:opt_problem}
\min_{\mathbf{W}}\mathcal{L}(\mathbf{W}) =\sum_{i=1}^N  \underbrace{\ell\left(y_i,W_{L+1}\widehat{H}^{(\gamma)}_{L,i}(W_1,...,W_{L}) \right)}_{:=\mathcal{L}_i(\mathbf{W})},
\end{equation}
and $\ell:\mathbb{R}^{d_{out}}\rightarrow\mathbb{R}^+$ is a differentiable loss function. Now, if the the hidden representations become rank one (as predicted by Lemma \ref{lemma:rank_vanishing} and Fig. \ref{fig:vanilla_bn_rank}), one can readily check that the stochastic gradients of any neuron $k$ in the last linear layer, i.e., $\nabla_{W_{L,[k,:]}} \mathcal{L}_i(\mathbf{W})= (\nabla \ell_i)_k \widehat{H}^{(\gamma)}_{L,i}$, align for both linear and ReLU networks.

\begin{proposition}\label{prop:grad_direction_formal} Consider a network with rank one hidden representations in the last layer $\widehat{H}^{(\gamma)}_L(W_1,...,W_{L})$, then for any neuron $k$ and any two datapoints $i,j$ with non-zero errors $\mathcal{L}_i$ and $\mathcal{L}_j$ we have
\begin{equation}
\nabla_{W_{L+1,[k,:]}} \mathcal{L}_i(\mathbf{W})=\underbrace{\frac{c(\nabla \ell_i)_k}{(\nabla \ell_j)_k}}_{\in \mathbb{R}}\nabla_{W_{{L+1},[k,:]}} \mathcal{L}_j(\mathbf{W})
\end{equation} 
$\forall i,j$. That is, all stochastic gradients of neuron $k$ in the final classification layer align along one single direction in $\mathbb{R}^d$.
\end{proposition}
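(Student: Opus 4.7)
The proof is a direct consequence of the chain rule combined with the rank-one structural hypothesis, so the plan is short and there is no deep obstacle to overcome.

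First I would compute the gradient of $\mathcal{L}_i$ with respect to the $k$-th row of the classification layer. Since $\mathcal{L}_i(\mathbf{W}) = \ell(y_i, W_{L+1}\widehat{H}^{(\gamma)}_{L,i})$ and the row $W_{L+1,[k,:]}$ only affects the $k$-th coordinate of $W_{L+1}\widehat{H}^{(\gamma)}_{L,i}$, a one-line chain rule calculation yields
\[
\nabla_{W_{L+1,[k,:]}}\mathcal{L}_i(\mathbf{W}) \;=\; (\nabla\ell_i)_k\,\widehat{H}^{(\gamma)}_{L,i},
\]
where $(\nabla\ell_i)_k$ denotes the $k$-th component of the gradient of $\ell$ with respect to its second argument evaluated at the prediction for sample $i$. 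This formula holds identically for linear activations and, via sub-gradient bookkeeping, for ReLU networks as well (since only the final linear map contributes to the derivative with respect to $W_{L+1,[k,:]}$).

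Second, I would invoke the rank-one hypothesis. Because $\widehat{H}^{(\gamma)}_L\in\mathbb{R}^{d\times N}$ has rank one, all of its columns lie on a single line in $\mathbb{R}^d$: there exist a non-zero direction $v\in\mathbb{R}^d$ and scalars $c_1,\dots,c_N\in\mathbb{R}$ such that $\widehat{H}^{(\gamma)}_{L,i} = c_i v$ for every $i$. Substituting this factorization into the gradient formula above gives
\[
\nabla_{W_{L+1,[k,:]}}\mathcal{L}_i(\mathbf{W}) \;=\; (\nabla\ell_i)_k\, c_i\, v,
\]
so for any two samples $i,j$ the two gradient vectors lie on the span of the common vector $v$.

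Third, I would combine these two expressions for indices $i$ and $j$. Under the non-zero error assumption, for standard differentiable losses the scalar $(\nabla\ell_j)_k c_j$ is non-zero whenever the corresponding gradient is, so the proportionality
\[
\nabla_{W_{L+1,[k,:]}}\mathcal{L}_i(\mathbf{W}) \;=\; \frac{c_i(\nabla\ell_i)_k}{c_j(\nabla\ell_j)_k}\,\nabla_{W_{L+1,[k,:]}}\mathcal{L}_j(\mathbf{W})
\]
matches the claim of the proposition with the scalar $c = c_i/c_j$ absorbed into the ratio. The only delicate step is ensuring that the relevant ratios are well defined; this is a minor point handled by the non-zero error hypothesis together with the standard fact that losses such as squared error or cross-entropy have non-vanishing per-coordinate gradients off their minimizer.
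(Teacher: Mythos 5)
Your proposal is correct and follows essentially the same route as the paper: a chain-rule computation showing $\nabla_{W_{L+1,[k,:]}}\mathcal{L}_i = (\nabla\ell_i)_k\,\widehat{H}^{(\gamma)}_{L,i}$, followed by substituting the rank-one factorization of the last hidden layer to obtain the stated proportionality. Your explicit factorization $\widehat{H}^{(\gamma)}_{L,i}=c_i v$ and the remark on well-definedness of the ratio are slightly more careful than the paper's version, but the argument is the same.
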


\begin{proof}
The result follows directly from a simple application of the chain rule
\begin{equation}
\begin{aligned}
\frac{\partial \mathcal{L}_i(\Wm) }{\partial W_{L+1}}=\frac{\partial \ell(\y_i,W_{L+1}\widehat{H}^{(\gamma)}_{L,i})}{\partial W_{L+1}\widehat{H}^{(\gamma)}_{L,i}} \frac{\partial W_{L+1}\h_{L,i}}{\partial W_{L+1}}=\nabla_{W_{L+1}\widehat{H}^{(\gamma)}_{L,i}} \ell(\y_i,W_{L+1}\h_{L,i})(\widehat{H}^{(\gamma)}_{L,i})^\intercal \\
=\begin{bmatrix} \nabla \ell_{i,1} \widehat{H}^{(\gamma)}_{L,i,1},\ldots,\nabla \ell_{i,1} \widehat{H}^{(\gamma)}_{L,i,d}\\
\ddots\\
\nabla \ell_{i,d_{out}} \widehat{H}^{(\gamma)}_{L,i,1},\ldots,\nabla \ell_{i,d_{out}} \widehat{H}^{(\gamma)}_{L,i,d}
\end{bmatrix}\; \in \mathbb{R}^{d_{out}\times d}
\end{aligned}
\end{equation}

The same holds for $j$. Now, if $\widehat{H}^{(\gamma)}_{L,i}=c\widehat{H}^{(\gamma)}_{L,i}, c \in \mathbb{R}  \setminus \{0\} $ then

\begin{equation*}
 \left(   \frac{\partial \mathcal{L}_i(\Wm) }{\partial W_{L+1}}\right)_{k,:}=\underbrace{c\frac{\nabla \ell_{i,k}}{\nabla \ell_{j,k}}}_{\in \mathbb{R}}\left(   \frac{\partial \mathcal{L}_j(\Wm) }{\partial W_{L+1}}\right)_{k,:}
\end{equation*}
\end{proof}

To validate this claim, we again train CIFAR-10 on the VGG19 network from Figure \ref{fig:broken_bn} (top).

\begin{figure}[H]
 \centering 
          \begin{tabular}{c@{}c@{}}
               \includegraphics[width=0.4\linewidth]{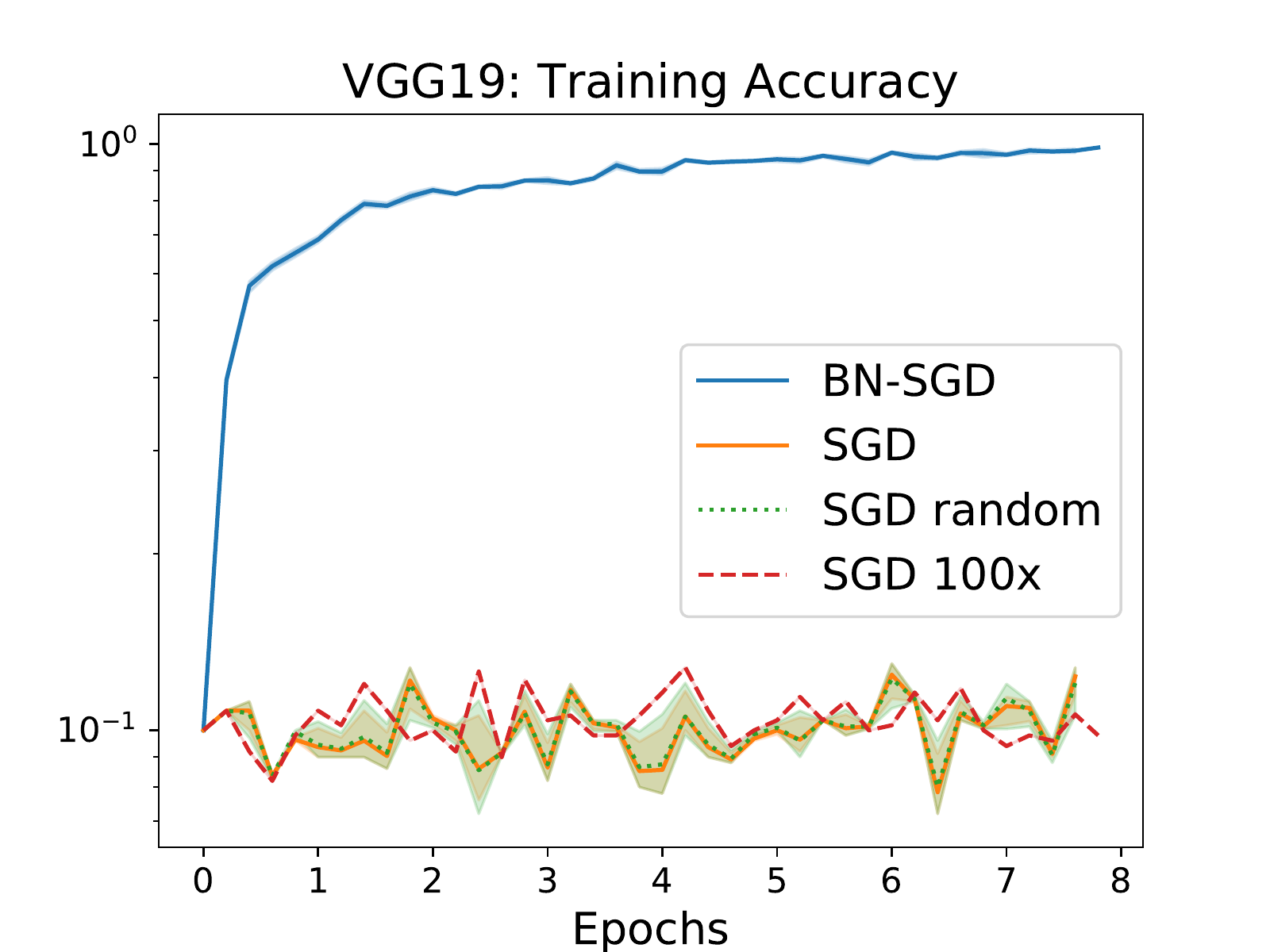}  &    \includegraphics[width=0.4\linewidth]{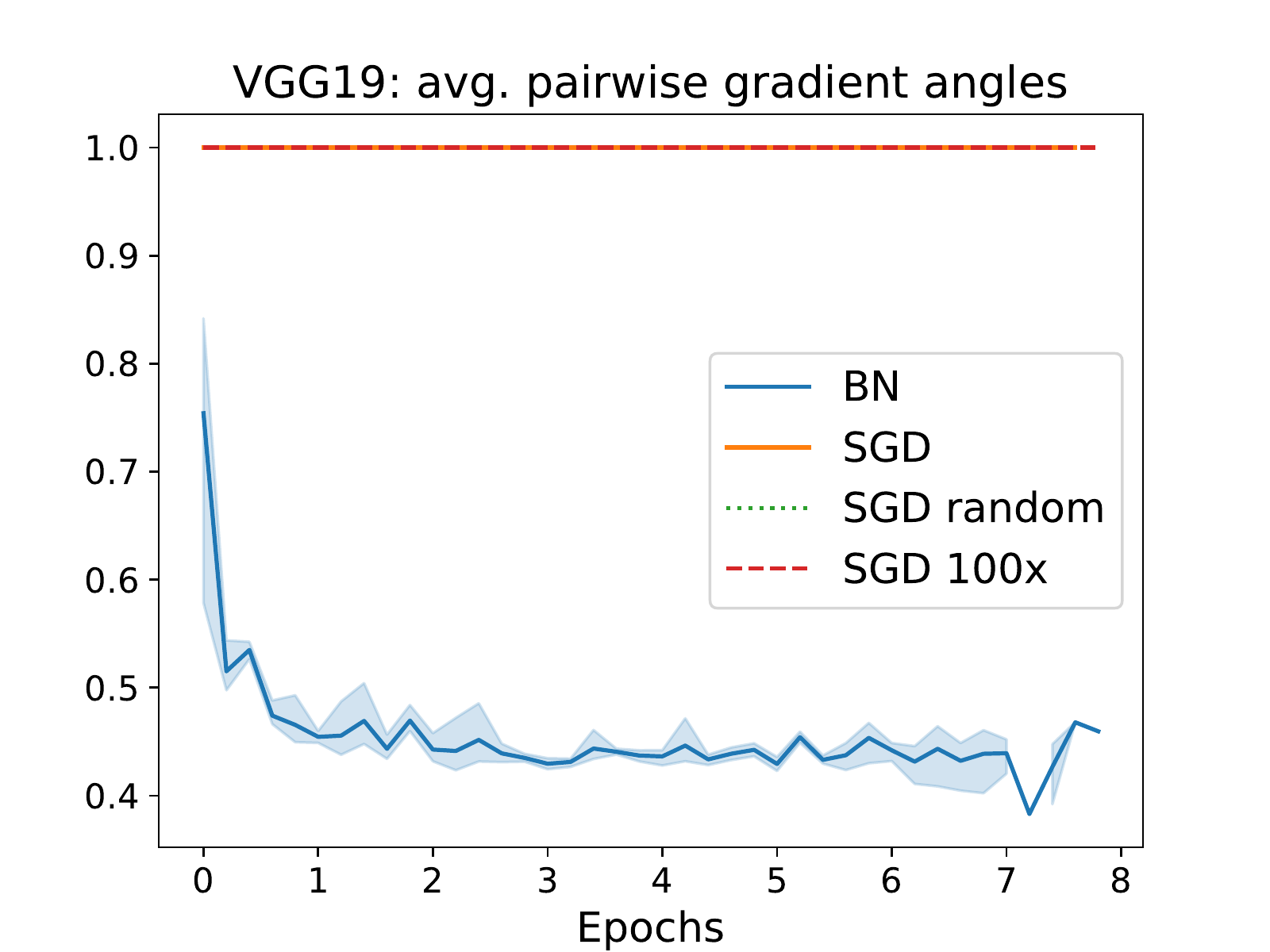}
           
 	  \end{tabular}
          \caption{ \footnotesize{\textbf{Directional gradient vanishing} CIFAR-10 on a VGG19 network with BN, SGD, SGD with 100x learning rate and SGD on random data. Average and 95\% confidence interval of 5 independent runs.
          }}
          \label{fig:vgg}
\end{figure}

As expected, the network shows perfectly aligned gradients without \bnp~(right hand side of Fig. \ref{fig:vgg}), which renders it un-trainable. In a next step, we replace the input by images generated randomly from a uniform distribution between 0 and 255 and find that SGD takes almost the exact same path on this data (compare log accuracy on the left hand side). Thus, our results suggest that the commonly accepted vanishing gradient \textit{norm} hypothesis is not descriptive enough since SGD does not take small steps into the \textit{right} direction- but into a \textit{random} one after initialization in deep neural networks. As a result, even a 100x increase in the learning rate does not allow training. We consider our observation as a potential starting point for novel theoretical analysis focusing on understanding the propagation of information through neural networks, whose importance has also been highlighted by \cite{bjorck2018understanding}.

\section{Additional Experiments} \label{sec:additional_experiments}
\paragraph{Outperforming \bn{}}

\begin{figure}[H]
    \centering
    \begin{tabular}{c@{}c@{}c@{}}
     \includegraphics[width=0.32\textwidth]{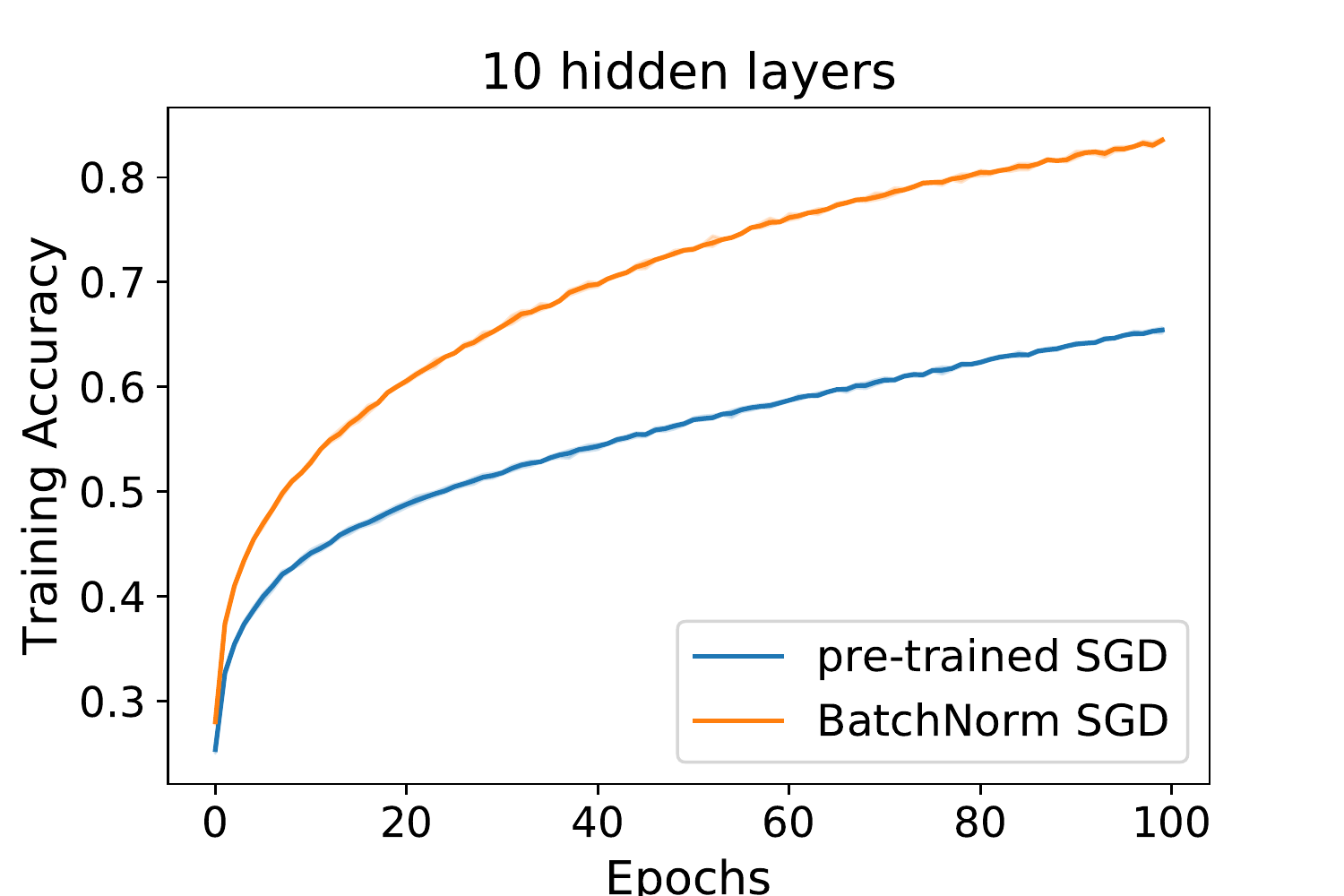} & \includegraphics[width=0.32\textwidth]{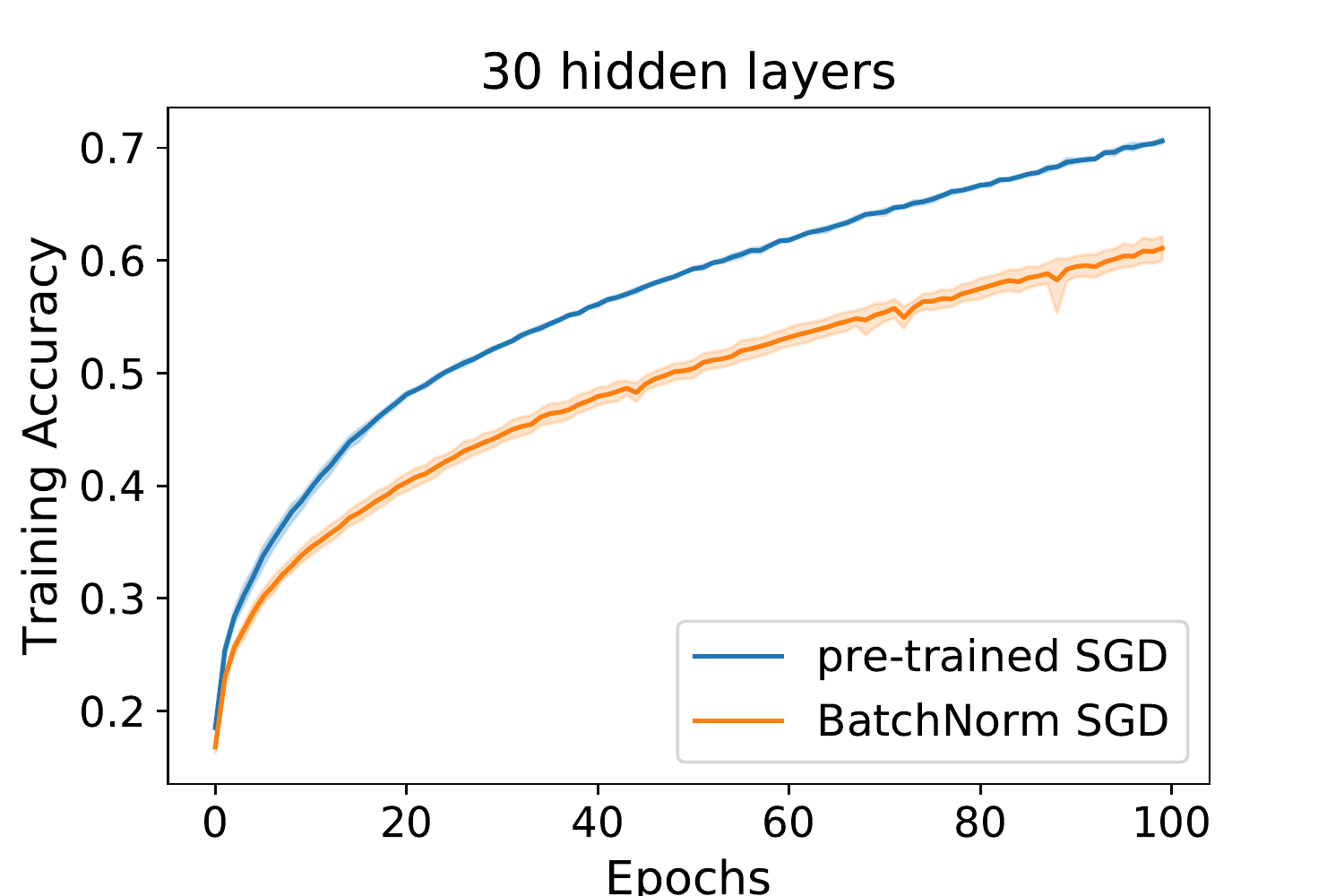} & \includegraphics[width=0.32\textwidth]{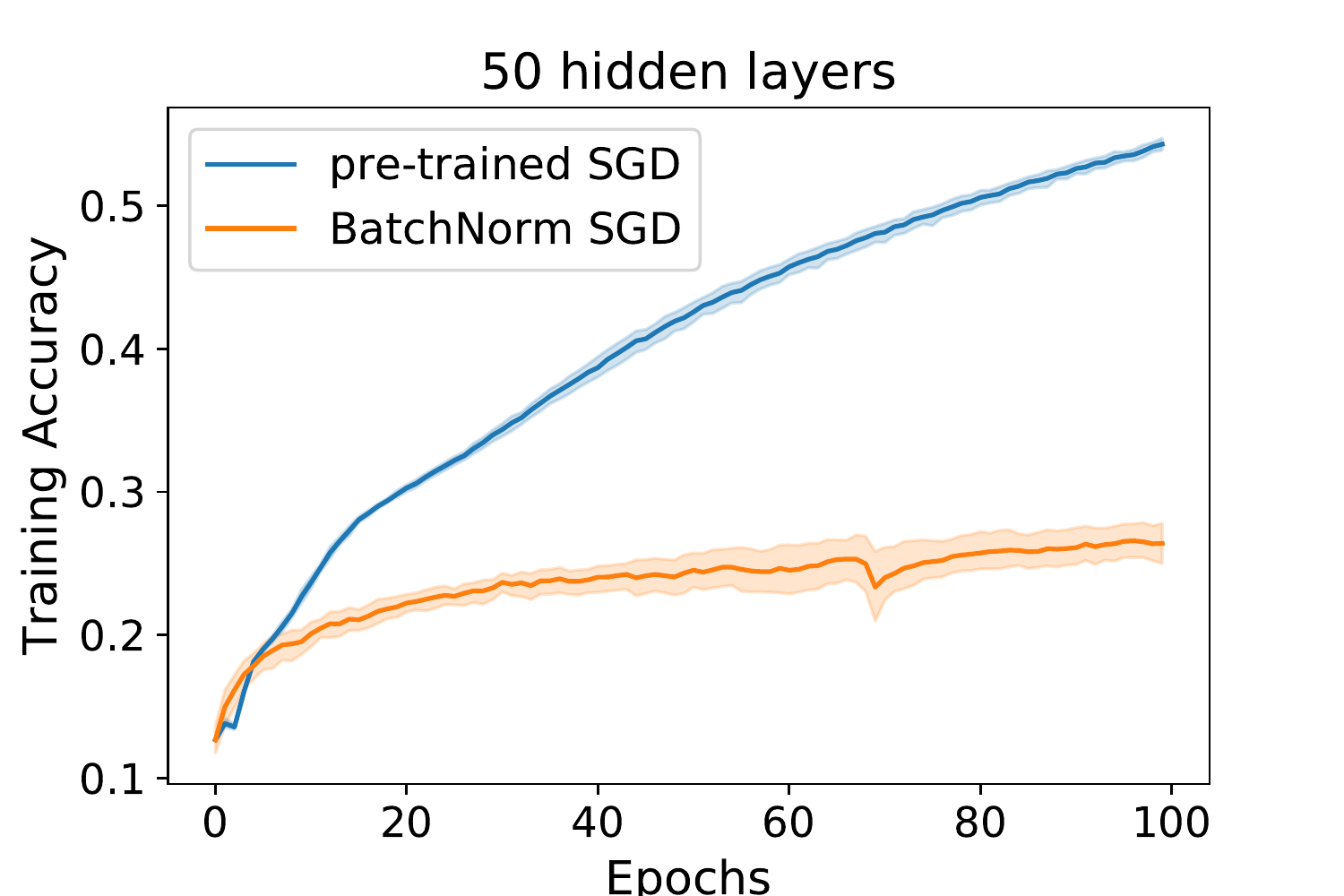} \\
     
     \includegraphics[width=0.32\textwidth]{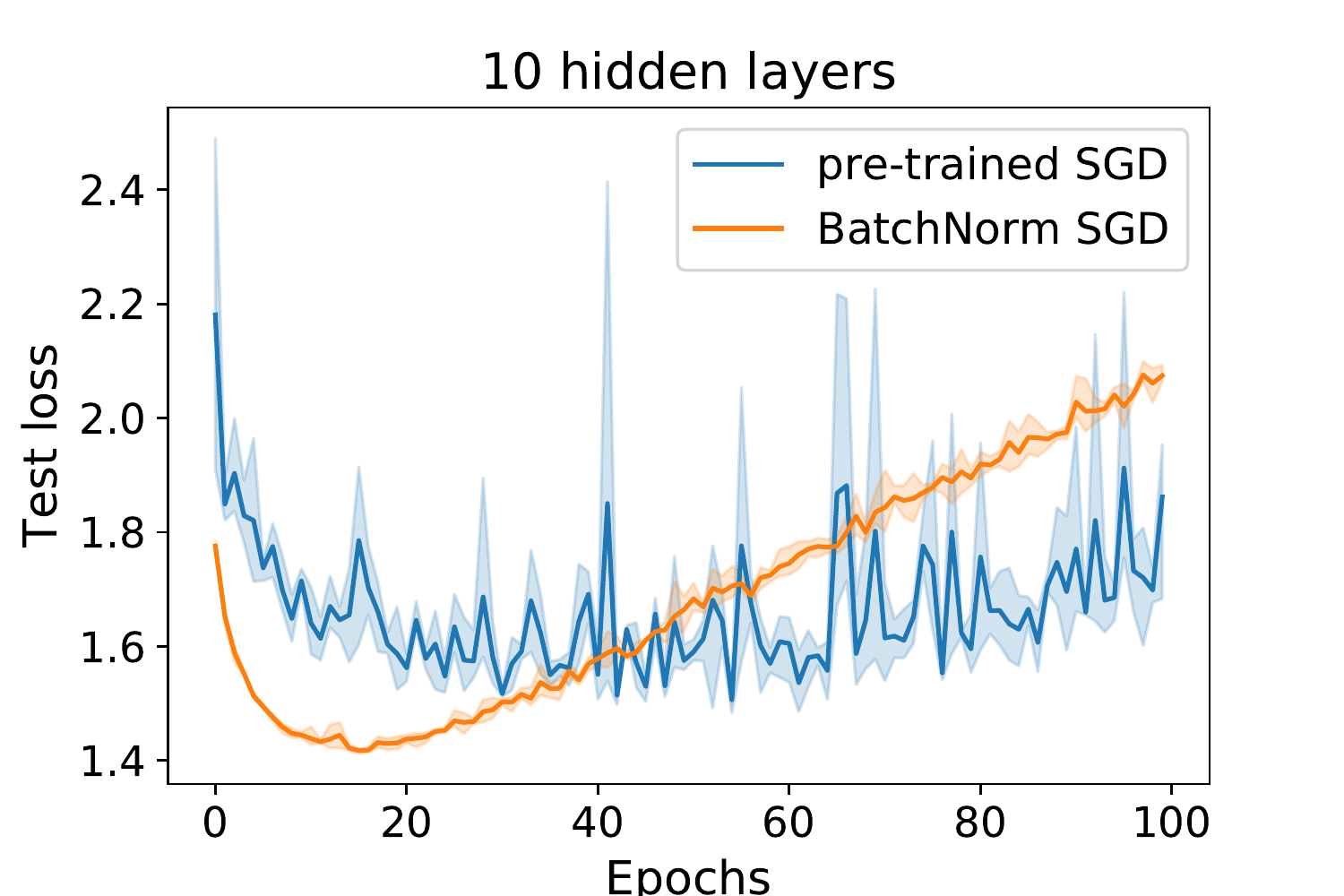} & \includegraphics[width=0.32\textwidth]{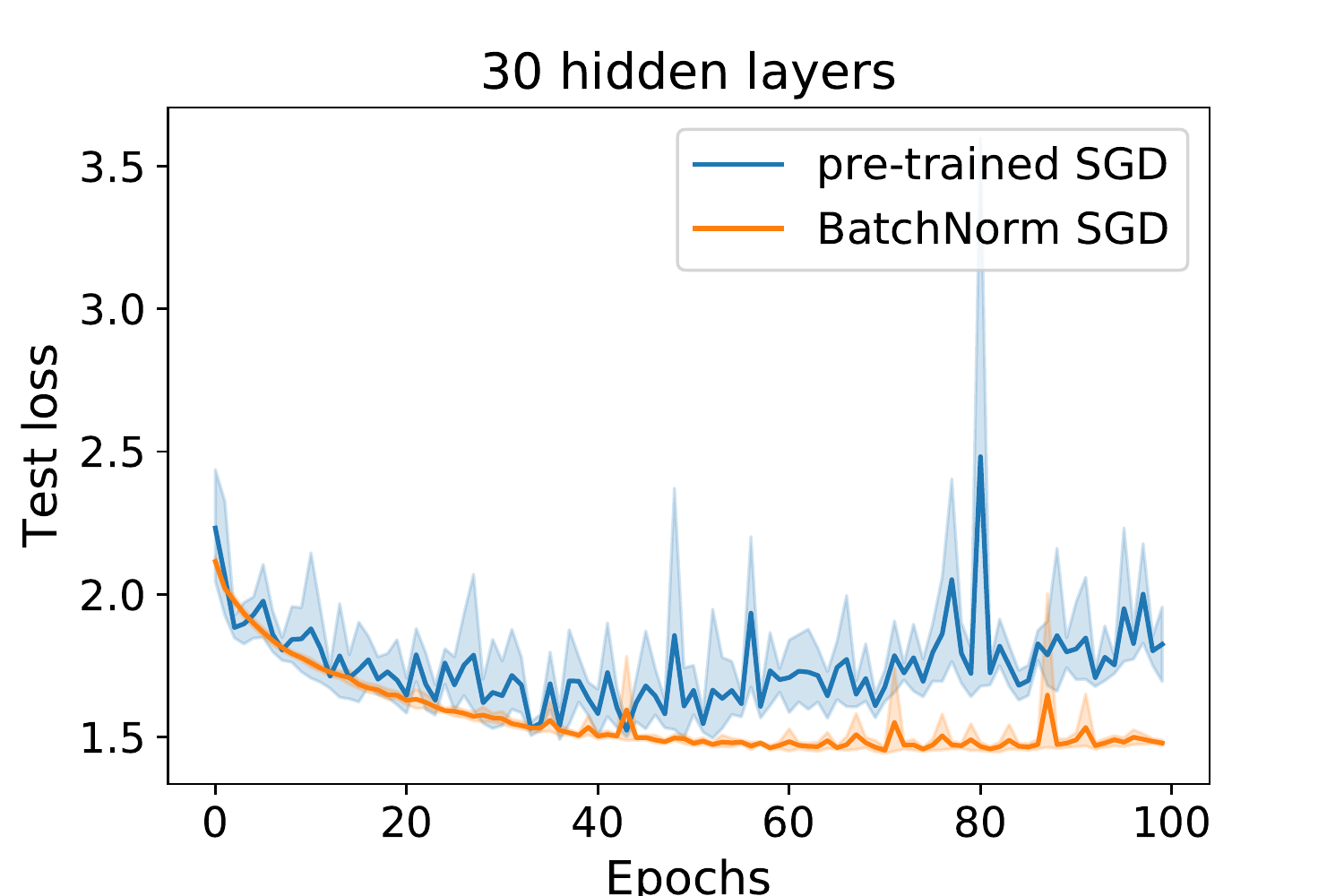} & \includegraphics[width=0.32\textwidth]{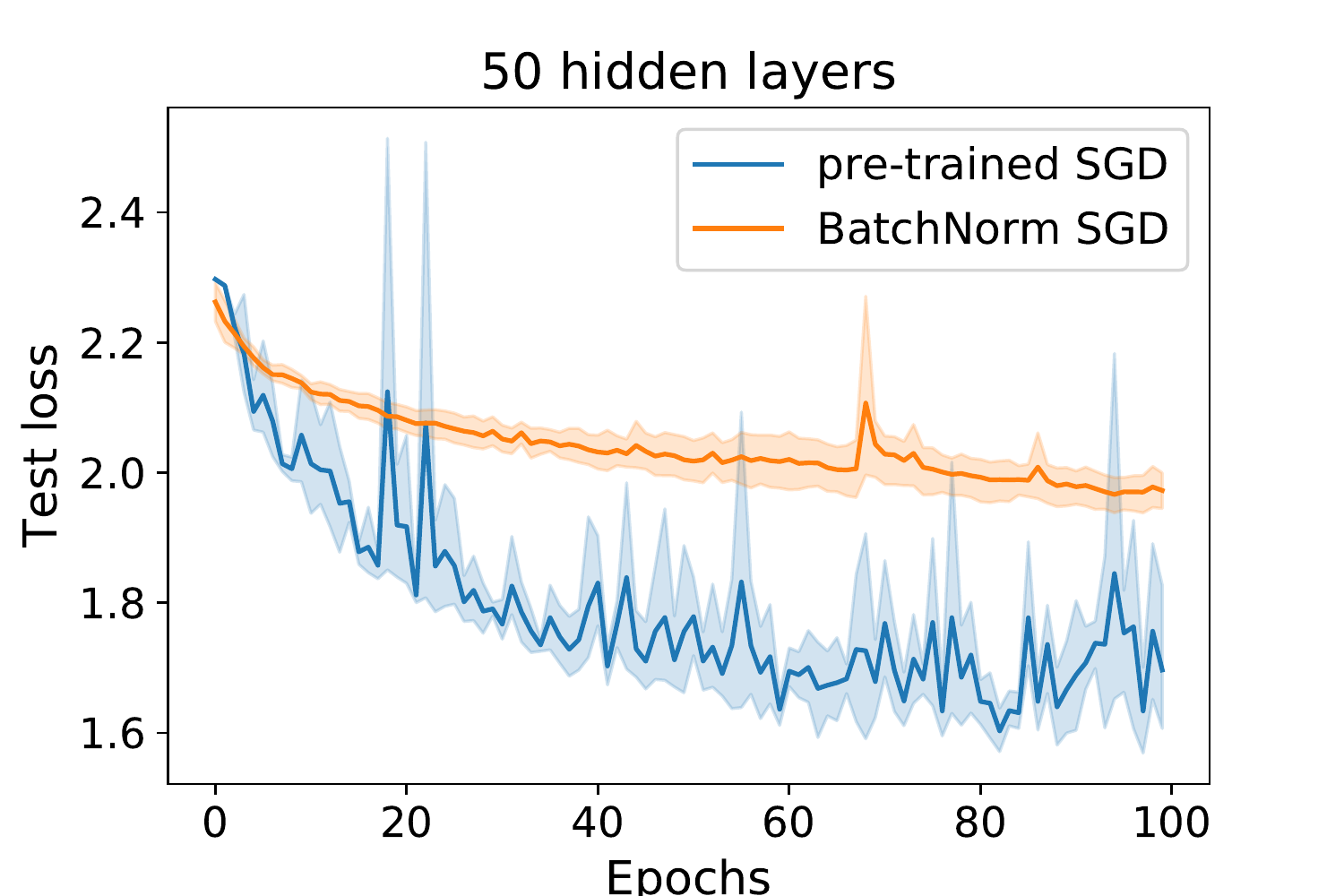}
    \end{tabular}
     \caption{\footnotesize{CIFAR-10: Same setting as Fig.\ref{fig:outperform_bn_cifar} but now showing accuracy and test loss}}
    \label{fig:beating_bn_app}

\end{figure}

The following Figure shows the result of the experiment of Fig.~\ref{fig:outperform_bn_cifar} that is repeated for FashionMNIST dataset.

\begin{figure}[H]
    \centering
    \begin{tabular}{c@{}c@{}c@{}}
     \includegraphics[width=0.32\textwidth]{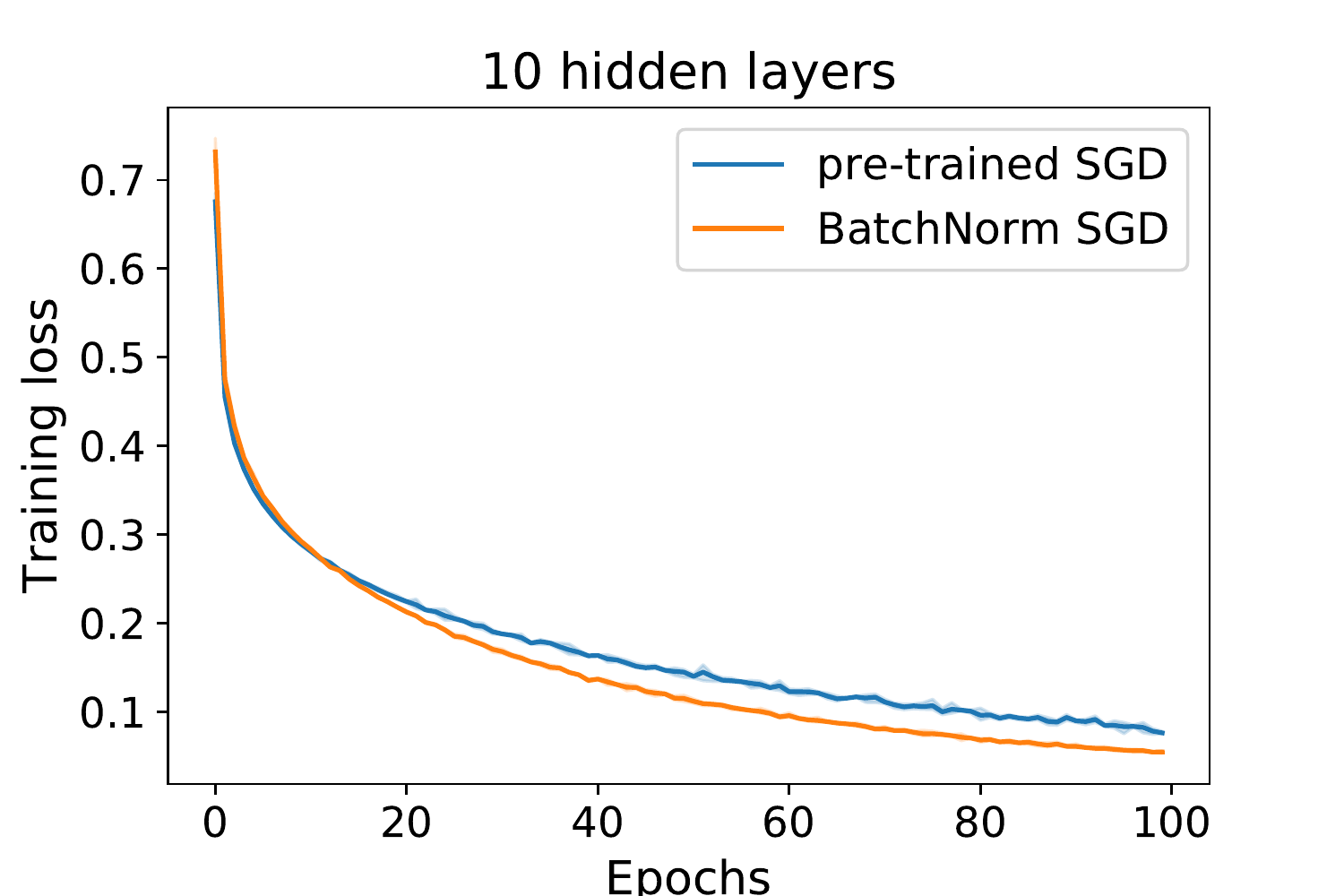} & \includegraphics[width=0.32\textwidth]{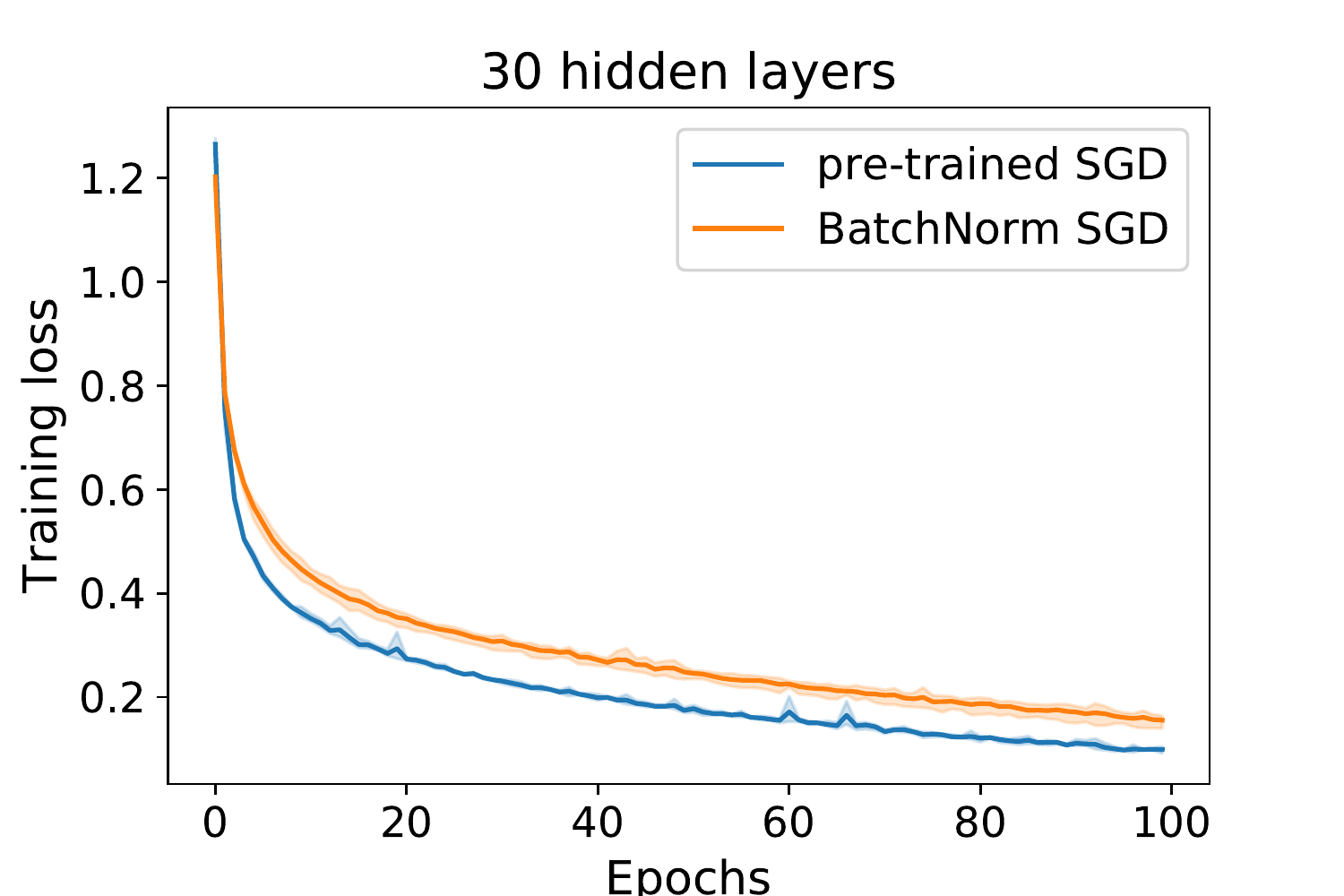} & \includegraphics[width=0.32\textwidth]{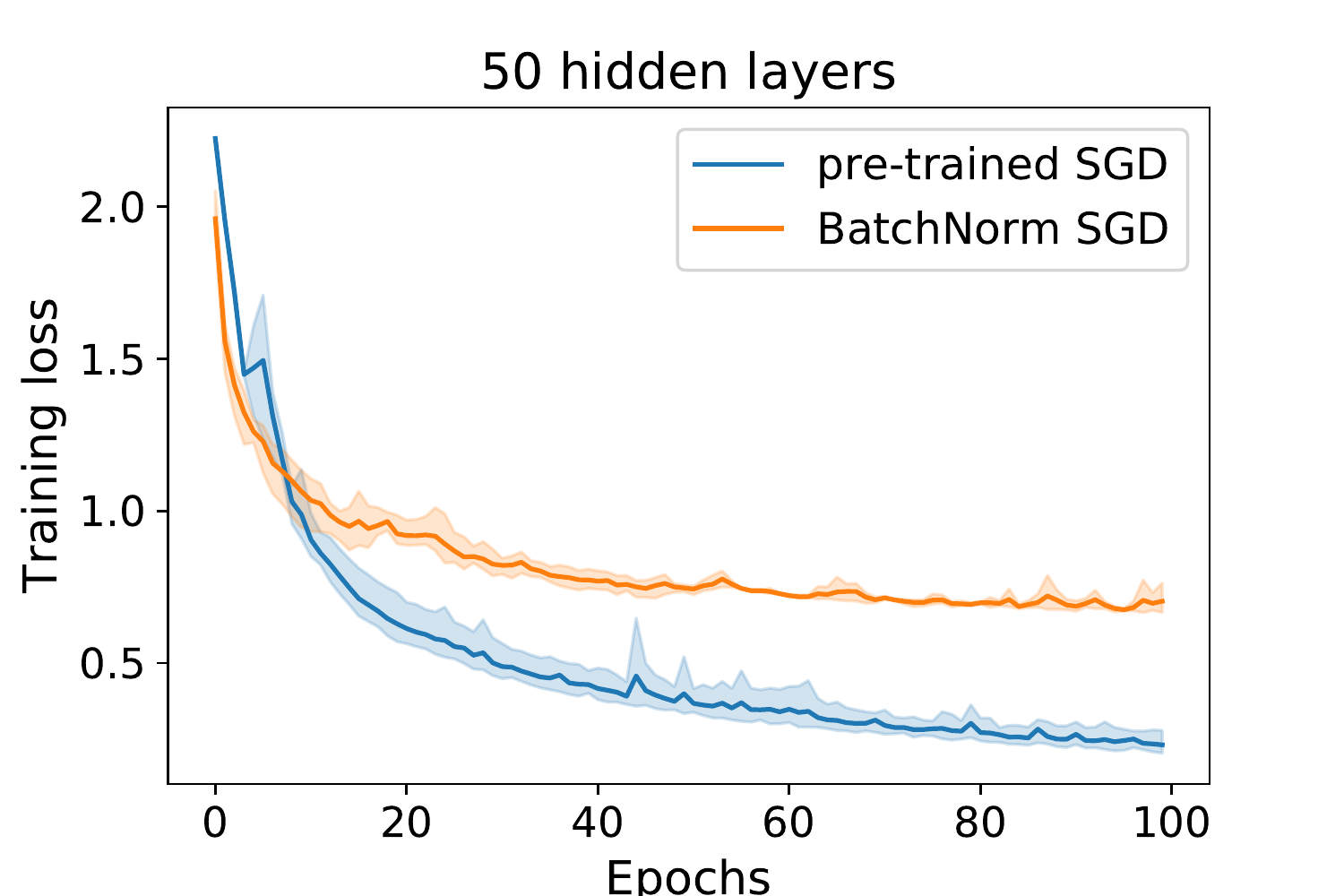} \\
 \includegraphics[width=0.32\textwidth]{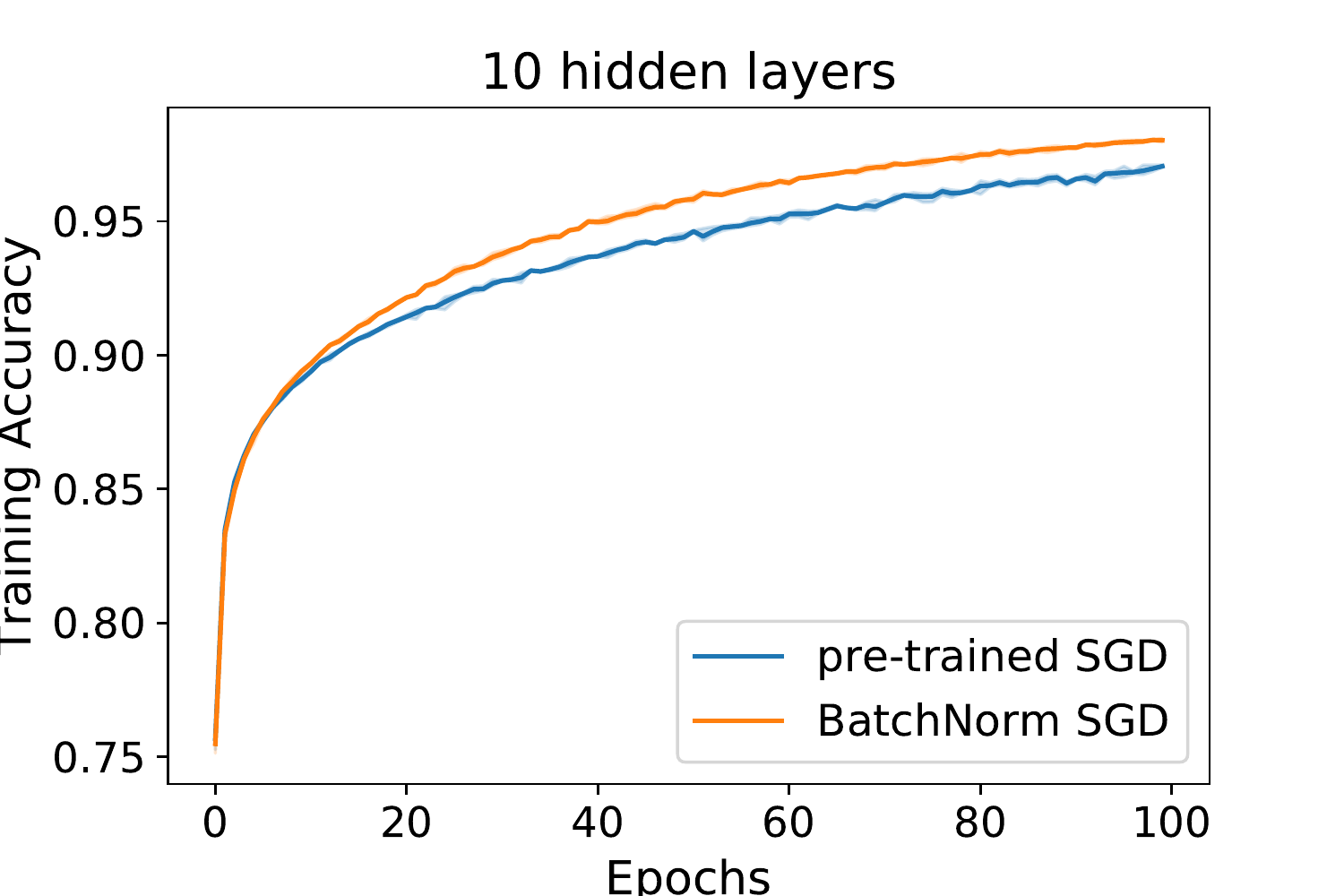} & \includegraphics[width=0.32\textwidth]{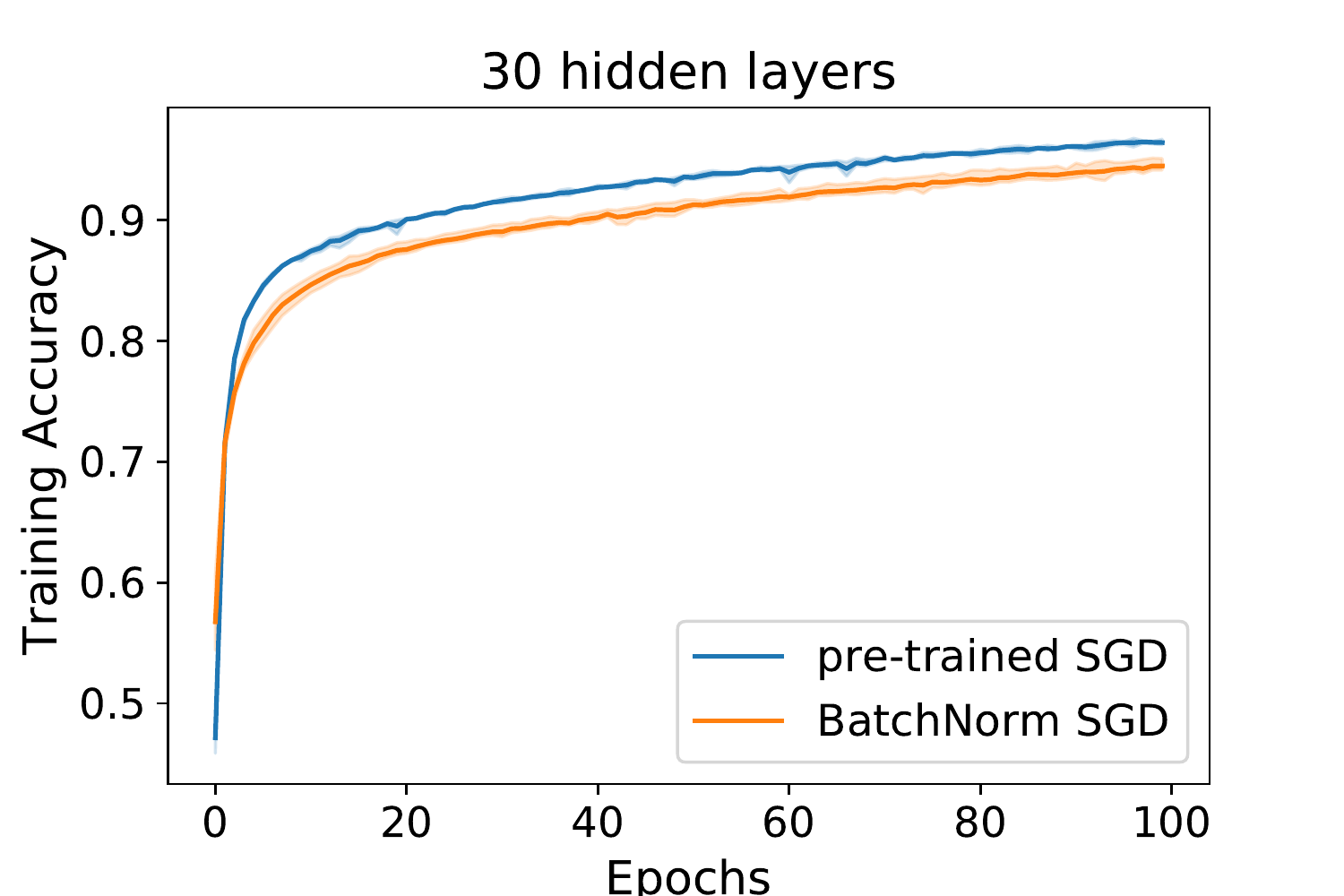} & \includegraphics[width=0.32\textwidth]{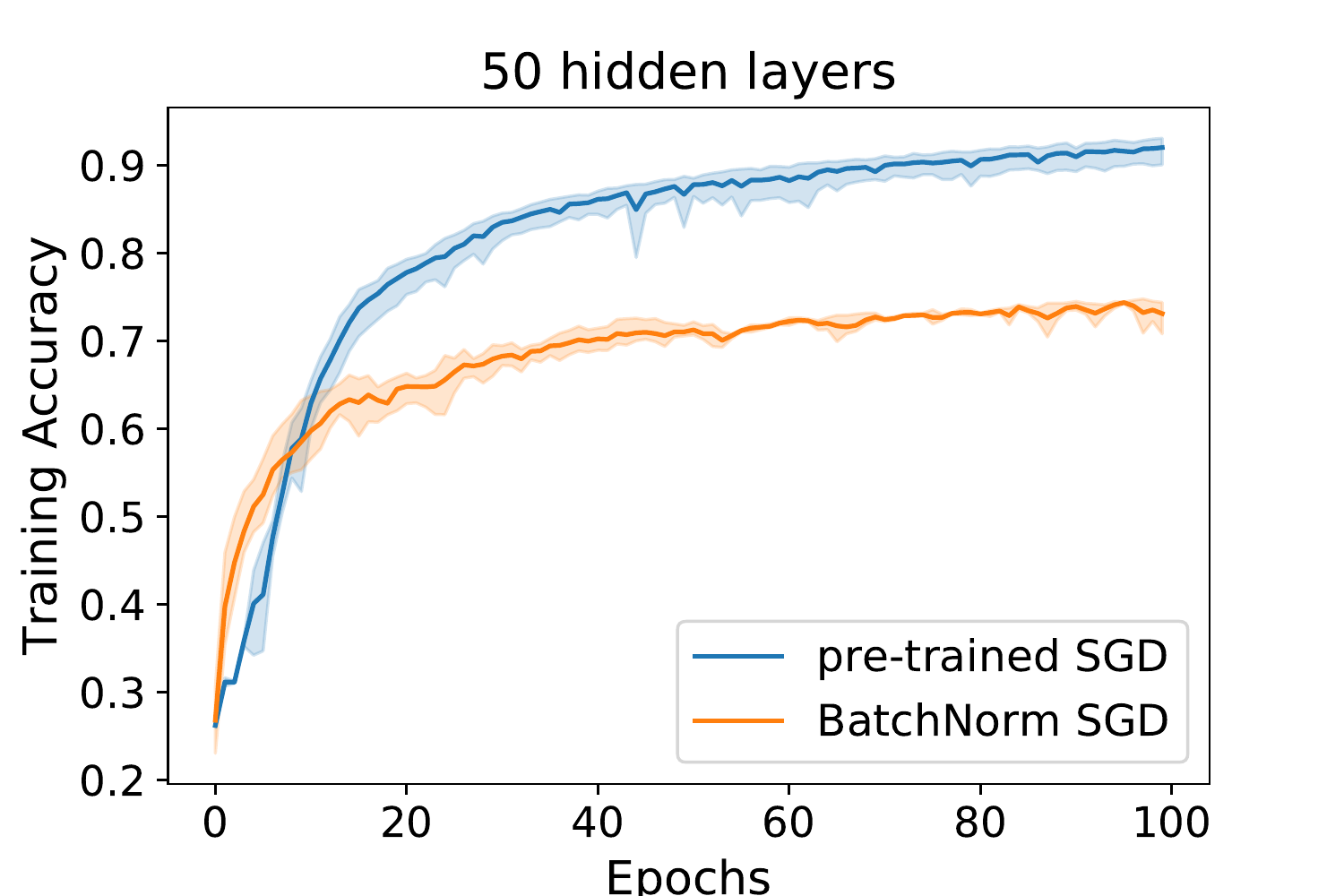} 
    \end{tabular}
     \caption{\footnotesize{Results of Fig.\ref{fig:outperform_bn_cifar} for FashionMNIST}}
    \label{fig:beating_bn_FM}

\end{figure}

\paragraph{Breaking \bn{}}
In the following result, we repeated the experiment of Fig.~\ref{fig:broken_bn} for ResNets.

\begin{figure}[H]
 \centering 
          \begin{tabular}{c@{}c@{}}
          \adjincludegraphics[width=0.3\linewidth, trim={30pt 20pt 30pt 20pt},clip]{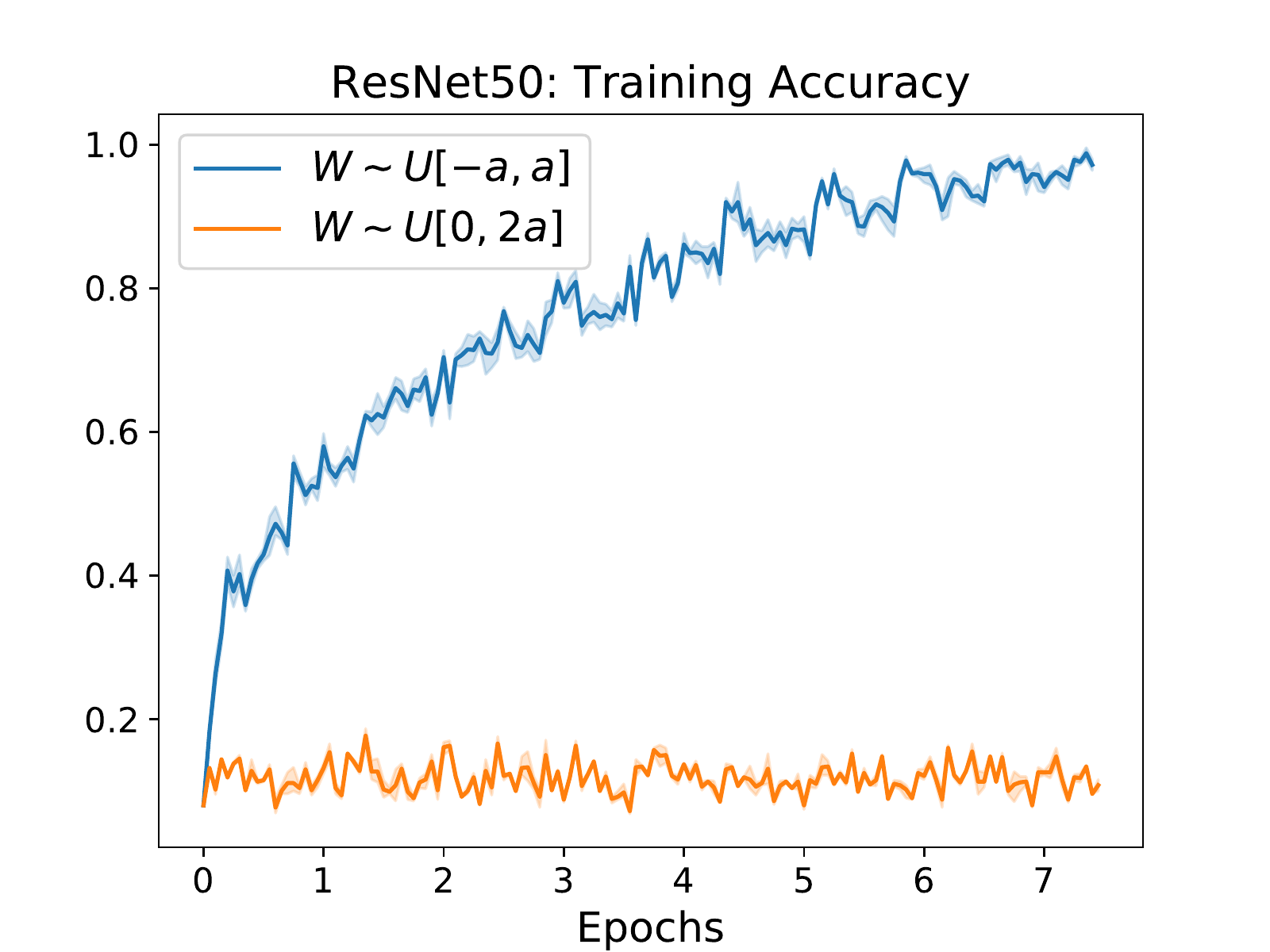}  &    \adjincludegraphics[width=0.3\linewidth, trim={30pt 20pt 30pt 20pt},clip]{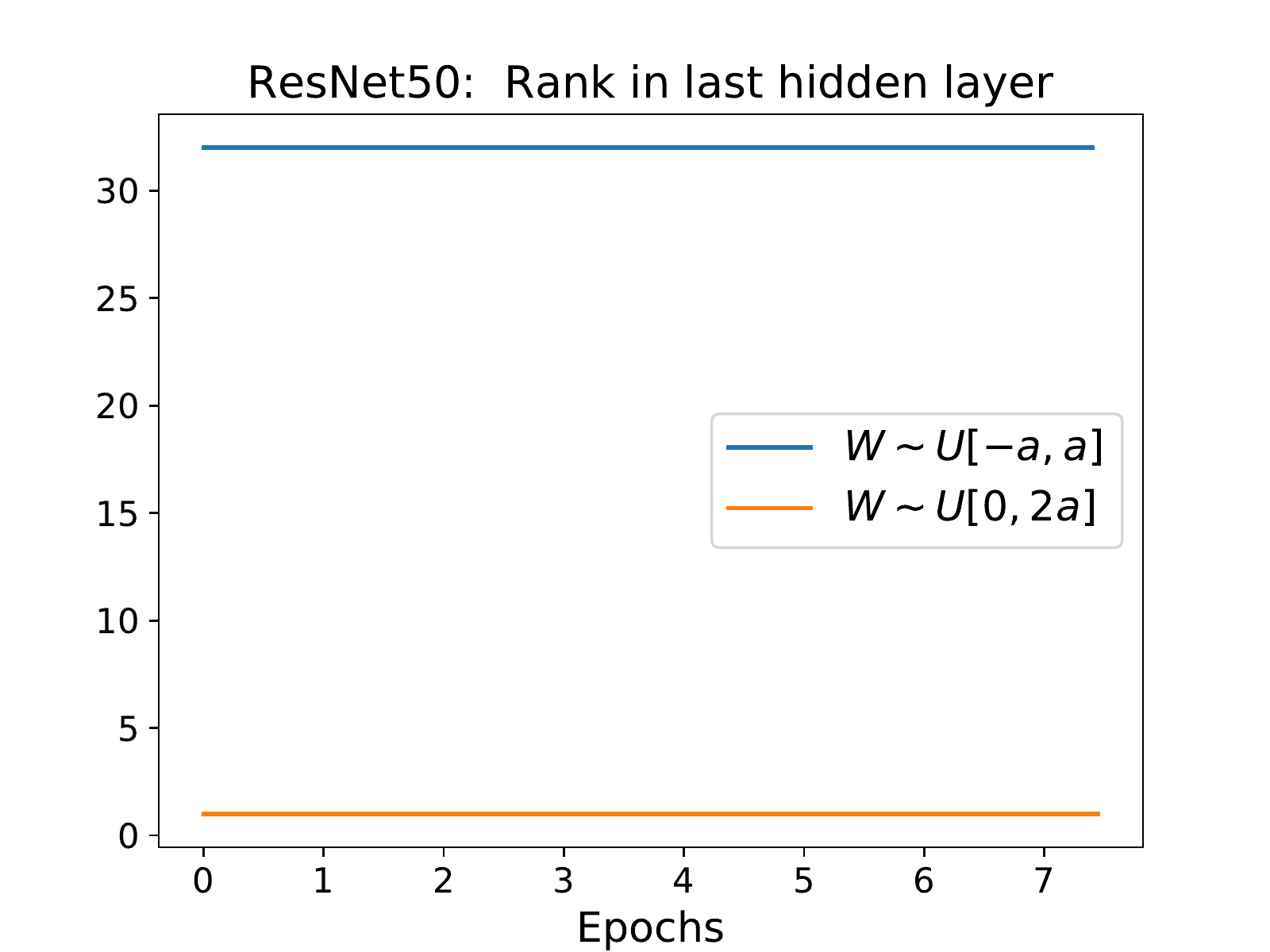}
           
 	  \end{tabular}
          \caption{ \footnotesize{\textbf{Breaking Batchnorm:} CIFAR-10 on a ResNet-50 with standard PyTorch initialization as well as a uniform initialization of same variance in $\mathbb{R}^+$. Average and 95\% confidence interval of 5 independent runs. This plot also shows results for a \bn~ network without mean deduction/adaption, validating our claim from Section 2.
          }}
          \label{fig:broken_bn_app}
\end{figure}

\end{document}